\theoremstyle{plain}
\newtheorem{theorem}{Theorem}[section]
\newtheorem{proposition}[theorem]{Proposition}
\newtheorem{lemma}[theorem]{Lemma}
\newtheorem{corollary}[theorem]{Corollary}
\theoremstyle{definition}
\newtheorem{definition}[theorem]{Definition}
\theoremstyle{remark}
\newtheorem{remark}[theorem]{Remark}
\newcommand{\dom}{\Omega}
\newcommand{\Z}{\mathbb{Z}}
\newcommand{\R}{\mathbb{R}}
\newcommand{\N}{\mathbb{N}}
\newcommand{\C}{\mathcal{C}}
\newcommand{\X}{\mathcal{X}}
\newcommand{\G}{\mathcal{G}}
\newcommand{\sigspace}{\X(\dom, \C)}
\newcommand{\Y}{\mathcal{Y}}
\newcommand{\outsigspace}{\Y(\dom', \C')}
\newcommand{\E}{\mathcal{E}}
\newcommand{\Inv}{\mathrm{Inv}}
\newcommand{\Equiv}{\mathrm{Equiv}}
\newcommand{\norm}[2]{\left\| #2 \right\|_{#1}}
\newcommand{\einv}{e_{\mathrm{inv}}}
\newcommand{\Dtrain}{\mathcal{D}_{\mathrm{train}}}
\newcommand{\Dtest}{\mathcal{D}_{\mathrm{test}}}
\newcommand{\indicator}{\mathbbm{1}}
\newcommand{\n}[1]{N_{\mathrm{#1}}}
\newcommand{\pdev}[2]{\frac{\partial #1}{ \partial #2}}
\newcommand{\functional}{\mathcal{F}}
\newcommand{\loss}{\mathcal{L}}
\renewcommand{\H}{\mathcal{H}}
\newcommand{\expect}{\mathbb{E}}
\newcommand{\proba}{\mathbb{P}}
\newcommand{\Sens}{\mathrm{Sens}}
\newcommand{\dihedral}{\mathbb{D}}
\newcommand{\guideline}[1]{\textcolor{teal}{\textbf{Guideline #1.}}}	
\newcommand{\conditional}{$\color{teal}\boldsymbol{\sim}$}
\newcommand{\tealbf}[1]{$\color{teal}\textbf{#1}$}
\newcommand{\always}{\color{green}\ding{51}}
\newcommand{\never}{\color{red}\ding{55}}
\tikzstyle{startstop} = [rectangle, rounded corners, 
\tikzstyle{io} = [trapezium, 
\tikzstyle{process} = [rectangle, 
\tikzstyle{decision} = [diamond, 
\tikzstyle{arrow} = [thick,->,>=stealth]
\title{Evaluating the Robustness of Interpretability Methods through Explanation Invariance and Equivariance}
\author{Jonathan Crabbé\\
	DAMTP\\
	University of Cambridge\\
	\texttt{jc2133@cam.ac.uk} \\
	\And
	Mihaela van der Schaar \\
	DAMTP\\
	University of Cambridge\\
	\texttt{mv472@cam.ac.uk} \\
}
\begin{document}
	\startcontents[sections]
	\maketitle	
	\begin{abstract}
		Interpretability methods are valuable only if their explanations faithfully describe the explained model. In this work, we consider neural networks whose predictions are invariant under a specific symmetry group. This includes popular architectures, ranging from convolutional to graph neural networks. Any explanation that faithfully explains this type of model needs to be in agreement with this invariance property. We formalize this intuition through the notion of explanation invariance and equivariance by leveraging the formalism from geometric deep learning. Through this rigorous formalism, we derive (1) two metrics to measure the robustness of any interpretability method with respect to the model symmetry group; (2) theoretical robustness guarantees for some popular interpretability methods and (3) a systematic approach to increase the invariance of any interpretability method with respect to a symmetry group. By empirically measuring our metrics for explanations of models associated with various modalities and symmetry groups, we derive a set of 5 guidelines to allow users and developers of interpretability methods to produce robust explanations.  
	\end{abstract}
	
	\section{Introduction}
	With their increasing success in various tasks, such as computer vision~\cite{Xie2017}, natural language processing~\cite{Vaswani2017} and scientific discovery~\cite{Jumper2021}, deep neural networks (DNNs) have become widespread. State of the art DNNs typically contain millions to billions parameters and, hence, it is unrealistic for human users to precisely understand how these models issue predictions. This opacity increases the difficulty to anticipate how models will perform when deployed~\cite{Lipton2018}; distil knowledge from the model~\cite{Doshi2017} and gain the trust of stakeholders in high-stakes domains~\cite{Ching2018, Langer2021}. To address these shortcomings, the field of \emph{interpretable machine learning} has received increasing interest~\cite{Abadi2018, Barredo2020}. There exists mainly 2 approaches to increase model interpretability~\cite{Rudin2019,Rauker2022}. (1)  Restrict the model’s architecture to \emph{intrinsically interpretable architectures}. A notorious example is given by \emph{self-explaining models}, such as attention models explaining their predictions by highlighting features or hidden states they pay
attention to~\cite{Choi2016, Alaa2019} and prototype-based models motivating their predictions by highlighting related
examples from their training set~\cite{Kim2014bayesian,Chen2019,Li2018deep}. (2) Use \emph{post-hoc} interpretability methods in a plug-in fashion after training the model. The advantage of this approach is that it requires no assumption on the model that we need to explain. In this work, we focus on several post-hoc methods: \emph{feature importance} methods (also
known as feature attribution or saliency) that highlight features the model is sensitive to \cite{Ribeiro2016,Lundberg2017,Sundararajan2017,Shrikumar2017,Crabbe21Dynamask}; \emph{example importance} methods that identify influential training examples~\cite{Koh2017,Pruthi2020,Crabbe2021Simplex} and \emph{concept-based explanations} that exhibit how classifiers relate classes to human friendly concepts~\cite{Kim2018, Crabbe2022Concept}. 

With the multiplication of interpretability methods, it has become necessary to evaluate the quality of their explanations~\cite{Hancox2020}. This stems from the fact that interpretability methods need to faithfully describe the model in order to provide actionable insights. Existing approaches to evaluate the quality of interpretability methods fall in 2 categories~\cite{Doshi2017, Zhou2021}. (1) Human-centred evaluation investigate how the explanations help humans (experts or not) to anticipate the model's predictions~\cite{Lage2019} and whether the model's explanations are in agreement with some notion of ground-truth~\cite{Xie2022, Saporta2022, Crabbe2022Benchmarking}. (2)~Functionality-grounded evaluation measure the explanation quality based on some desirable properties and do not require humans to be involved. Most of the existing work in this category measure the \emph{robustness} of interpretability methods with respect to transformations of the model input that should not impact the explanation~\cite{Nielsen2022}. Since our work falls in this category, let us now summarize the relevant
literature.  

\textbf{Related Works.} \cite{Kindermans2019} showed that feature importance methods are sensitive to constant shifts in the model's input. This is unexpected because these constant shifts do not contribute to the model's prediction. Building on this idea of invariance of the explanations with respect to input shifts, \cite{Alvarez2018, Yeh2019, Bhatt2020} propose a \emph{sensitivity} metric to measure the robustness of feature importance methods based on their stability with respect to small perturbations of the model input. By optimizing small adversarial perturbations, \cite{Dombrowski2019, Ghorbani2019, Huang2022} show that imperceptible changes in the input can modify the feature importance arbitrarily by approximatively keeping the model prediction constant. This shows that many interpretability methods, as neural networks, are sensitive to adversarial perturbations. Subsequent works have addressed this pathologic behaviour by fixing the model training dynamic. In particular, they showed that penalizing large eigenvalues of the training loss Hessian with respect to the inputs make the interpretations of this model more robust with respect to adversarial attacks~\cite{Wang2020, Ajalloeian2022}. To the best of our knowledge, the only work that discusses the behaviour of explanations under more general transformations of the input data is~\cite{Wang2021}. However, the work's focus is more on model regularization rather than on the evaluation of post-hoc interpretability robustness. 

\textbf{Motivations.} In reviewing the above literature, we notice 3 gaps. (1)~The existing studies mostly focus on evaluating feature importance methods. In spite of the predominance of feature importance in the literature~\cite{Arya2019}, we note that other types of interpretability methods exist and deserve to be analyzed. (2)~The existing studies mostly focus on images. While computer vision is undoubtedly an interesting application of DNNs, it would be interesting to extend the analysis to other modalities, such as times series and graph data~\cite{Wu2020Comprehensive}. (3)~The existing studies mostly focus on simple transformation of the model input, such as small shifts. This is motivated by the fact that the predictions of DNNs are mostly invariant under these transformations. Again, this is another direction that could be explored more thoroughly as numerous DNNs are also invariant to more complex transformation of their input data. For instance, graph neural networks are invariant to permutations of the node ordering in their input graph~\cite{Keriven2019Universal}. Our work bridges these gaps in the interpretability robustness literature.

\begin{figure*}[h]
	\begin{adjustbox}{width=\linewidth}
	\begin{tikzpicture}[x=\textwidth, y=\textwidth]
		\colorlet{darkgreen}{green!50!black}
		\node[rectangle , draw=black!50, minimum width=.33\textwidth, minimum height=.25\textwidth, rounded corners] (modelinv) at (0,0){};
		\node[rectangle, draw=black!50, minimum width=.33\textwidth, minimum height=.25\textwidth, rounded corners] (explinv) at (.34,0) {};
		\node[rectangle, draw=black!50, minimum width=.33\textwidth, minimum height=.25\textwidth, rounded corners] (explequiv) at (.68,0) {};
		\node[anchor=north, align=center] at (modelinv.north) {\small (1) Invariant Model};
		\node[anchor=north, align=center] at (explinv.north) {\small (2) Invariant Explanation};
		\node[anchor=north, align=center] at (explequiv.north) {\small (3) Equivariant Explanation};
		
		\node[inner sep=0pt] (heartbeat1) at (-.08,.07)
		{\includegraphics[width=.11\textwidth]{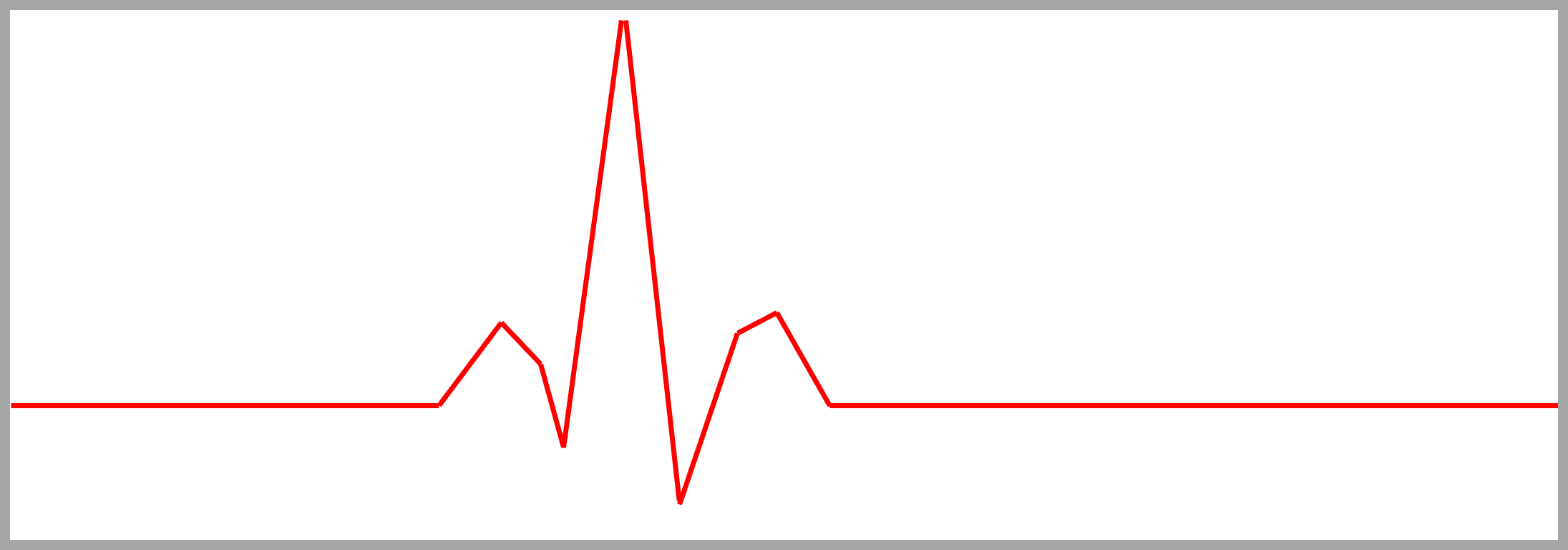}};
		\node[inner sep=0pt] (heartbeat1translated) at (-.08, -.07)
		{\includegraphics[width=.11\textwidth]{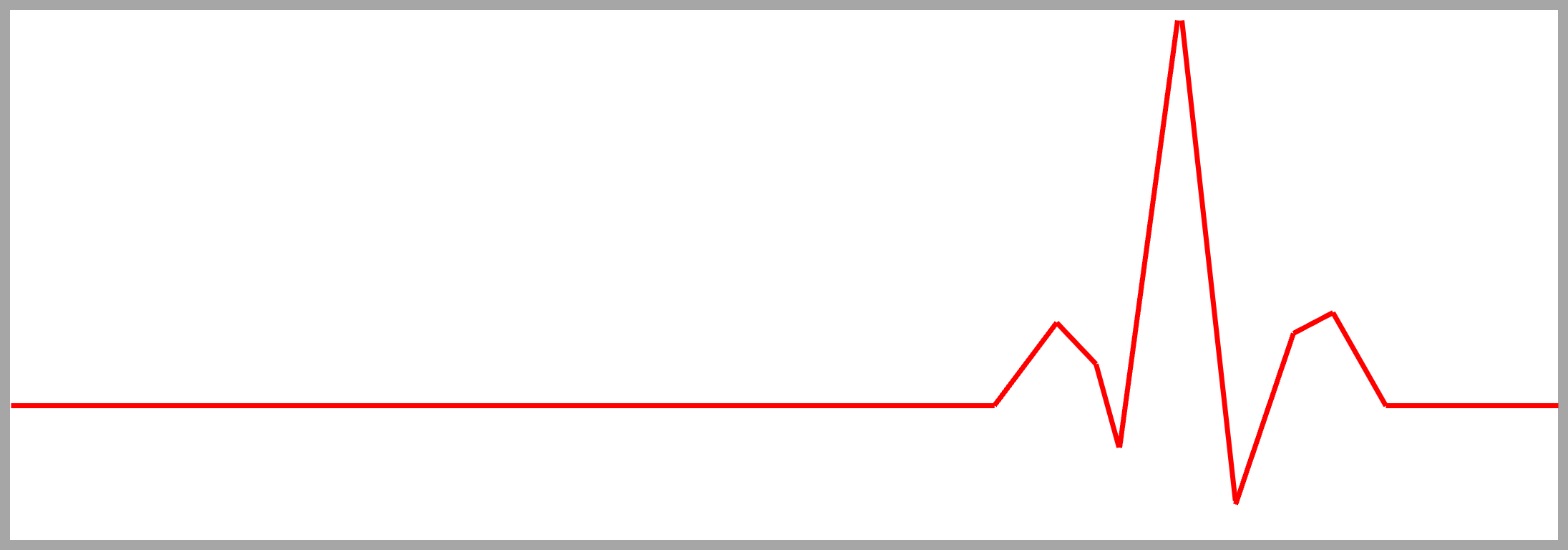}};
		\node[rectangle, fill=teal!15, text=teal, align=center, rounded corners, minimum width=.12\textwidth, minimum height=.08\textwidth](pred) at (.09, 0) {};
		\node[anchor=north, teal] at (pred.north){\scriptsize Abnormal Beat} ;
		\node[anchor=center, teal, yshift=-.1cm] at (pred.center){\LARGE \faExclamationTriangle} ;
		\node[below of= pred, align=center, yshift=-.2cm]{\scriptsize Prediction \\ \scriptsize  $\textcolor{teal}{f}(x) = \textcolor{teal}{f}(\textcolor{purple}{\rho[g]} x)$};
		\node[] (input1) at ([yshift=-.3cm]heartbeat1.south){\scriptsize Input $x$};
		\node[]  at ([yshift=-.3cm]heartbeat1translated.south){\scriptsize New Input $\textcolor{purple}{\rho[g]} x$};

		\draw[purple, |-to, shorten <= 1, shorten >= 4](input1)  -> node[anchor=east, align=center]{\scriptsize $\rho[g]$ \\[-.1cm] \scriptsize Symmetry} (heartbeat1translated);
		\draw[teal, |-to, shorten <= 9, shorten >= 1](heartbeat1.east)  -> node[anchor=north, align=center, yshift=-.5cm]{\scriptsize $f$ \\[-.1cm] \scriptsize Model} (pred);
		\draw[teal, |-to, shorten <= 9, shorten >= 1](heartbeat1translated.east)  -> (pred);
		
		\node[inner sep=0pt] (heartbeat2) at (.26,.07)
		{\includegraphics[width=.11\textwidth]{figures/heartbeat1}};
		\node[inner sep=0pt] (heartbeat2translated) at (.26, -.07)
		{\includegraphics[width=.11\textwidth]{figures/heartbeat1translated}};
		\node[rectangle, fill=darkgreen!15, text=green, align=center, rounded corners, minimum width=.13\textwidth, minimum height=.15\textwidth](expl1) at (.42, 0.0) {};
		\node[anchor=north, darkgreen] at (expl1.north){\scriptsize Top Examples} ;
		\node[below of= expl1, align=center, yshift=-.4cm]{\scriptsize Explanation \\[-0.1cm] \scriptsize  $\textcolor{darkgreen}{e}(x) = \textcolor{darkgreen}{e}(\textcolor{purple}{\rho[g]} x)$};
		\node[inner sep=0pt] (ex1) at ([yshift=-.1cm]expl1.center)
		{\includegraphics[width=.09\textwidth]{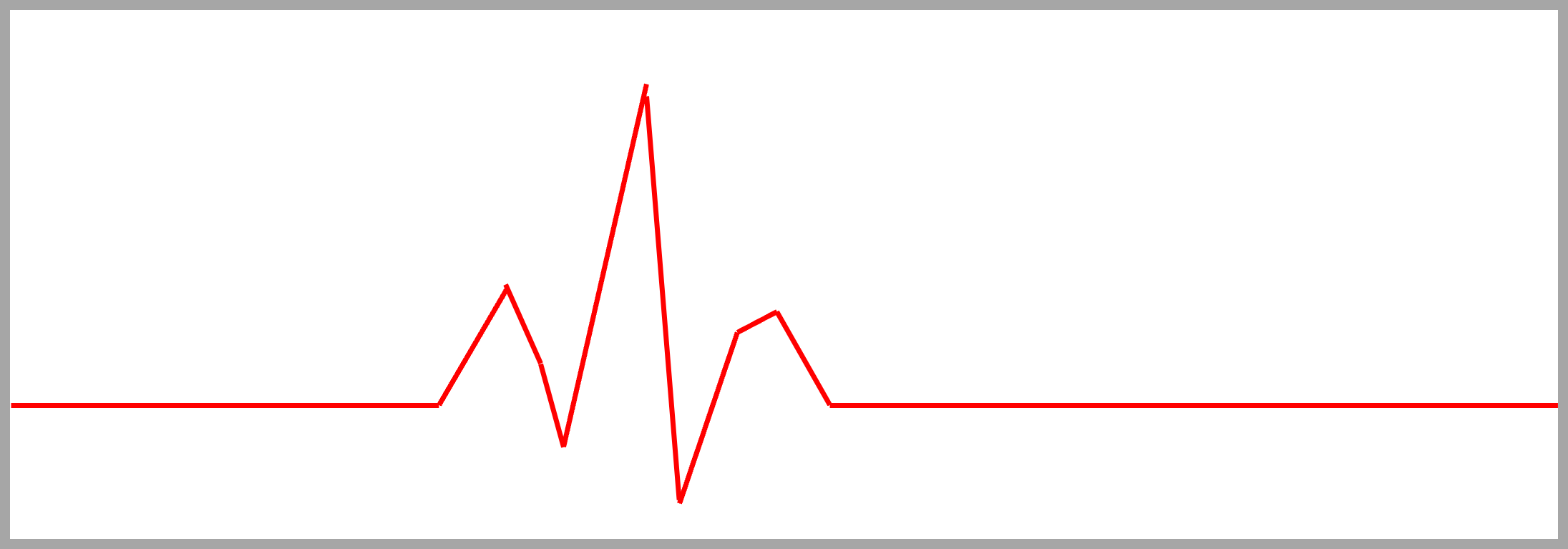}};
		\node[inner sep=0pt] (ex2)  at ([yshift=.4cm]ex1.north)
		{\includegraphics[width=.09\textwidth]{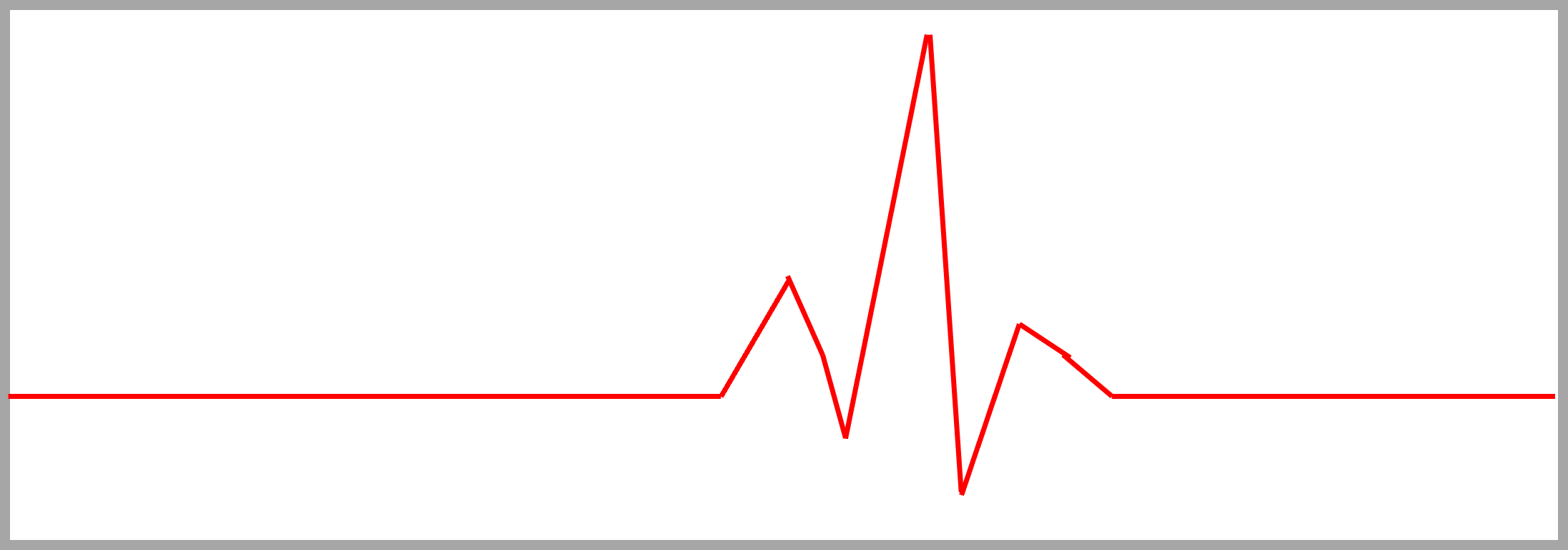}};
		\node[inner sep=0pt] (ex3)  at ([yshift=-.4cm]ex1.south)
		{\includegraphics[width=.09\textwidth]{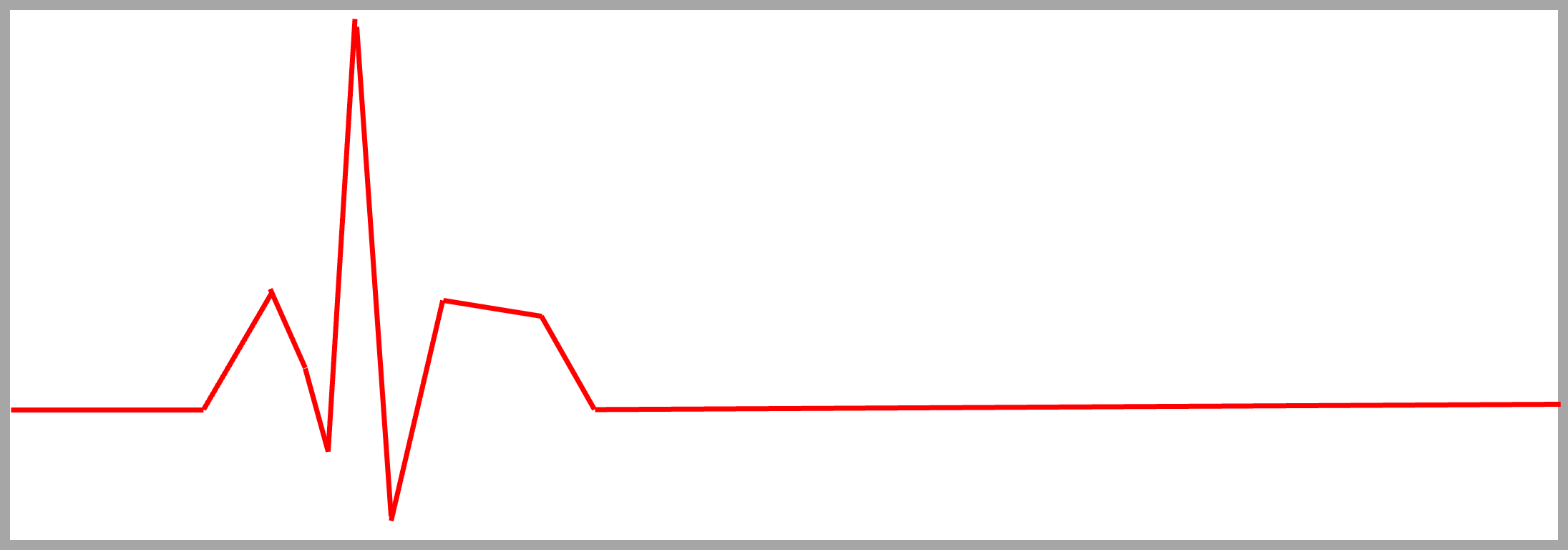}};
		\node[] (input2) at ([yshift=-.3cm]heartbeat2.south){\scriptsize Input $x$};
		\node[]  at ([yshift=-.3cm]heartbeat2translated.south){\scriptsize New Input $\textcolor{purple}{\rho[g]} x$};
		
		\draw[purple, |-to, shorten <= 1, shorten >= 4](input2)  -> node[anchor=east, align=center]{\scriptsize $\rho[g]$ \\[-.1cm] \scriptsize Symmetry} (heartbeat2translated);
		\draw[darkgreen, |-to, shorten <= 9, shorten >= 1](heartbeat2.east)  -> node[anchor=north, align=center, yshift=-.6cm, xshift=-.25cm]{\scriptsize $e$ \\[-.1cm] \scriptsize Explainer} (expl1);
		\draw[darkgreen, |-to, shorten <= 9, shorten >= 1](heartbeat2translated.east)  -> (expl1);
		
		\node[inner sep=0pt] (heartbeat3) at (.6,.07)
		{\includegraphics[width=.11\textwidth]{figures/heartbeat1}};
		\node[inner sep=0pt] (heartbeat3translated) at (.6, -.05)
		{\includegraphics[width=.11\textwidth]{figures/heartbeat1translated}};
		\node[inner sep=0pt] (fi) at (.76,.07)
		{\includegraphics[width=.11\textwidth]{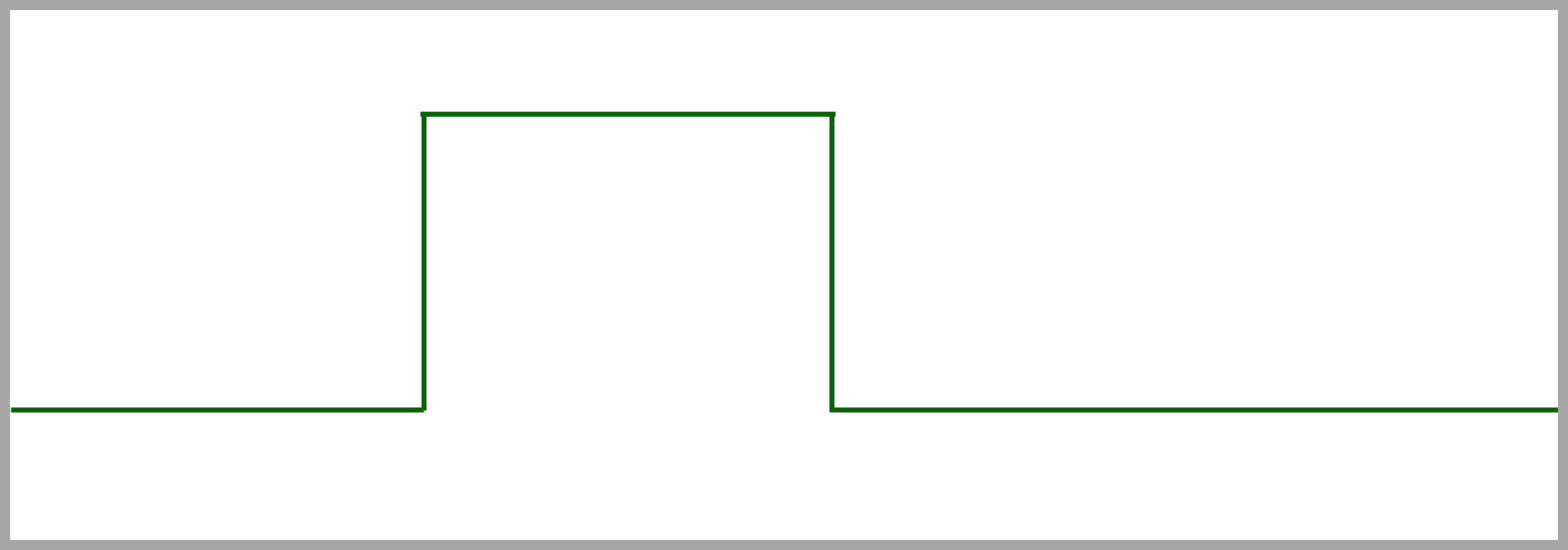}};
		\node[inner sep=0pt] (fitranslated) at (.76, -.05)
		{\includegraphics[width=.11\textwidth]{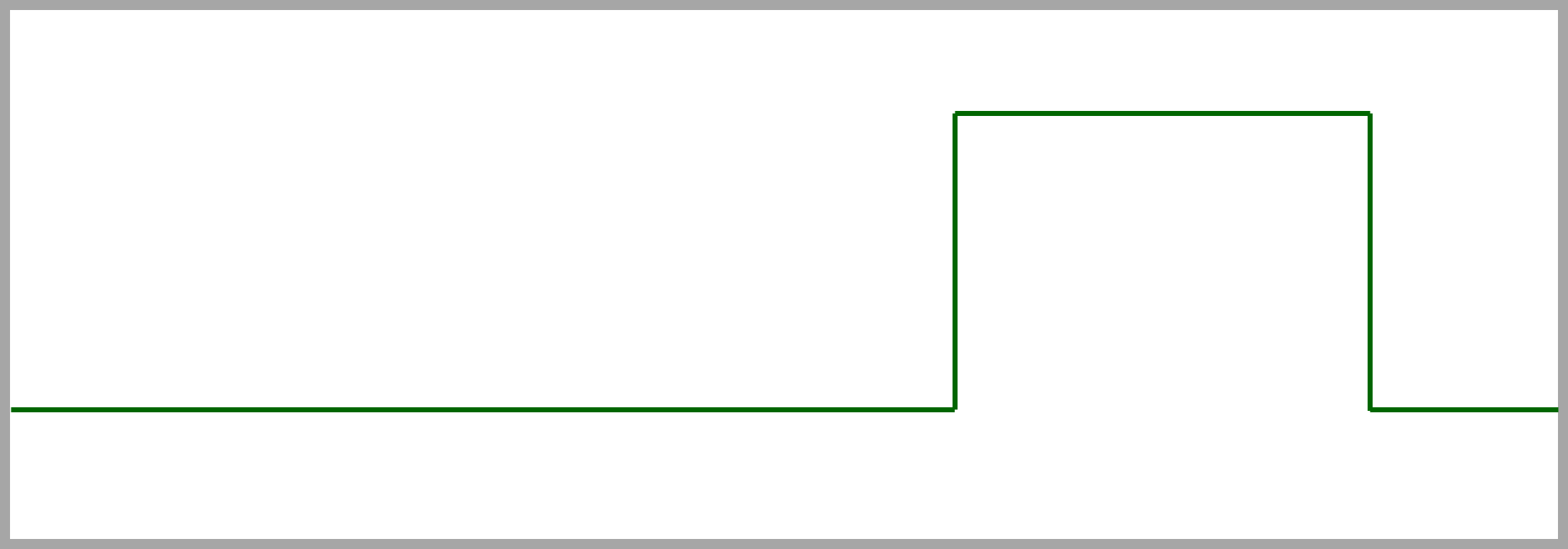}};
		
		\node[] (input3) at ([yshift=-.3cm]heartbeat3.south){\scriptsize Input $x$};
		\node[align=center]  at ([yshift=-.5cm]heartbeat3translated.south){\scriptsize New Input \\[-.1cm] \scriptsize $\textcolor{purple}{\rho[g]} x$};
		\node[align=center] (filegend) at ([yshift=-.3cm]fi.south){\scriptsize Explanation \small $\textcolor{darkgreen}{e}(x)$};
		\node[align=center]  at ([yshift=-.5cm]fitranslated.south){\scriptsize New Explanation \\[-.1cm] \scriptsize $\textcolor{darkgreen}{e}(\textcolor{purple}{\rho[g]} x) = \textcolor{purple}{\rho[g]} \textcolor{darkgreen}{e}(x)$};
		\node[darkgreen, align=center] at (.79, .075){\scalebox{.45}{Feature} \\[-.25cm] \scalebox{.45}{Importance}};
		\node[darkgreen, align=center] at (.74, -.0475){\scalebox{.45}{Feature} \\[-.25cm] \scalebox{.45}{Importance}};
		
		\draw[purple, |-to, shorten <= 1, shorten >= 4](input3)  -> node[anchor=east, align=center, yshift=.15cm]{\scriptsize $\rho[g]$ \\[-.1cm] \scriptsize Symmetry} (heartbeat3translated);
		\draw[purple, |-to, shorten <= 1, shorten >= 4](filegend)  -> node[]{} (fitranslated);
		\draw[darkgreen, |-to, shorten <= 6, shorten >= 5](heartbeat3.east)  -> node[anchor=north, align=center, yshift=-.6cm, xshift=0]{\scriptsize $e$ \\[-.1cm] \scriptsize Explainer} (fi);
		\draw[darkgreen, |-to, shorten <= 6, shorten >= 5](heartbeat3translated.east)  -> node[anchor=north, align=center, yshift=-.7cm, xshift=-.25cm]{} (fitranslated);
		
		\begin{pgfonlayer}{bg}
			\draw[rounded corners, fill=black!15, draw=none] ($(heartbeat1) + (-.065, .025)$) rectangle ($(heartbeat1) + (.065, -.025)$);
			\draw[rounded corners, fill=black!15, draw=none] ($(heartbeat1translated) + (-.065, .025)$) rectangle ($(heartbeat1translated) + (.065, -.025)$);
			\draw[rounded corners, fill=black!15, draw=none] ($(heartbeat2) + (-.065, .025)$) rectangle ($(heartbeat2) + (.065, -.025)$);
			\draw[rounded corners, fill=black!15, draw=none] ($(heartbeat2translated) + (-.065, .025)$) rectangle ($(heartbeat2translated) + (.065, -.025)$);
			\draw[rounded corners, fill=black!15, draw=none] ($(heartbeat3) + (-.065, .025)$) rectangle ($(heartbeat3) + (.065, -.025)$);
			\draw[rounded corners, fill=black!15, draw=none] ($(heartbeat3translated) + (-.065, .025)$) rectangle ($(heartbeat3translated) + (.065, -.025)$);
			\draw[rounded corners, fill=darkgreen!15, draw=none] ($(fi) + (-.065, .025)$) rectangle ($(fi) + (.065, -.025)$);
			\draw[rounded corners, fill=darkgreen!15, draw=none] ($(fitranslated) + (-.065, .025)$) rectangle ($(fitranslated) + (.065, -.025)$);
		\end{pgfonlayer}
	\end{tikzpicture}
	\end{adjustbox}
	\caption{Illustration of model invariance and explanation invariance/equivariance with the simple case of an electrocardiogram (ECG) signal. In this case, the heartbeat described by the ECG remains the same if we apply any translation symmetry with periodic boundary conditions. (1) A model is invariant under the symmetry if the model's prediction are not affected by the symmetry we apply to the signal. In this case, the model identifies an abnormal heartbeat before and after applying a translation. Any explanation that faithfully describes the model should reflect this symmetry. (2)~For some explanations, the right behaviour is invariance as well. For instance, the most influential examples for the prediction should be the same for the original and the transformed signal, since the model makes no difference between the two signals. (3)~For other type of explanations, the right behaviour is equivariance. For instance, the most important part of the signal for the prediction should be the same for the original and the transformed signal, since the model makes no difference between the two signals. Hence, the saliency map undergoes the same translation as the signal.}
	\label{fig:main_fig}
\end{figure*}

\begin{figure}[h]
	\vspace{-1cm}
	\centering
	\includegraphics[width=\linewidth]{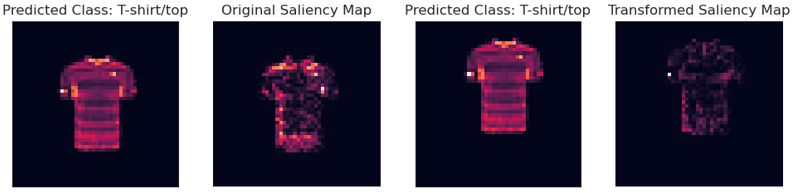}
	\caption{Examples of non-robust explanations obtained with Gradient Shap on the FashionMNIST dataset. From left to right: the original image for which the invariant model predicts t-shirt with a given probability, the Gradient Shap saliency map to explain the model's prediction for this image, the transformed image for which the model predicts t-shirt with the exact same probability and the Gradient Shap saliency map for this transformed image. Clearly, the image transformation changes the explanation when it should not.}
	\label{fig:fail_main}
\end{figure}

\begin{mdframed}[leftmargin=0pt, rightmargin=0pt, innerleftmargin=0pt, innerrightmargin=0pt, skipbelow=0pt]
\tealbf{Contributions.} We propose a new framework to evaluate the robustness of interpretability methods. We consider a setting where the model we wish to interpret is invariant with respect to a group $\G$ of symmetry acting on the model input. Any interpretability method that faithfully describes this model should have explanations that are conserved by this group of symmetry $\G$. We illustrate this reasoning in \cref{fig:main_fig} with the simple group $\G$ of time translations acting on the input signal. We show examples of interpretability methods failing to conserve the model's symmetries, hence leading to inconsistent explanations, in \cref{fig:fail_main} and \cref{appendix:examples_fail}. With this new framework, we bring several contributions. \tealbf{(1)~Rigorous Interpretability Robustness.} We define interpretability robustness with respect to a group $\G$ of symmetry through explanation invariance and equivariance. In agreement with our motivations, we demonstrate in Section~\ref{subsec:expl_inv_equiv} that our general definitions cover different type of interpretability methods, modalities and transformations of the input data. \tealbf{(2)~Evaluation of Interpretability Methods.} Not all interpretability methods are equal with respect to our notion of robustness. In Section~\ref{subsec:theoretical_analysis}, we show that some popular interpretability methods are naturally endowed with theoretical robustness guarantees. Further, we introduce 2 metrics, the invariance and equivariance scores, to empirically evaluate this robustness. In Section~\ref{subsec:evaluating_robustness}, we use these metrics to evaluate the robustness of 3 types of interpretability methods with 5 different model types corresponding to 4 different modalities and symmetry groups. Our empirical results support our theoretical analysis. \tealbf{(3)~Insights to Improve Robustness.} By combining our theoretical and empirical analysis, we derive a set of 5 actionable guidelines to ensure that interpretability methods are used in a way that guarantees robustness with respect to the symmetry group $\G$. In particular, we show in Sections~\ref{subsec:theoretical_analysis}~and~\ref{subsec:improving_robustness} that we can improve the invariance score of any interpretability method by aggregating explanations over various symmetries. We summarize the guidelines with a flowchart in \cref{fig:flowchart} from \cref{appendix:flowchart}, that helps users to obtain robust model interpretations.
\end{mdframed}

	\section{Interpretability Robustness} \label{sec:interpretability_robustness}
	In this section, we formalize the notion of interpretability robustness through explanation invariance and equivariance. We start with a reminder of some useful definitions from geometric deep learning. We then define two metrics to measure the invariance and equivariance of interpretability methods. We leverage this formalism to derive some theoretical robustness guarantees for popular interpretability methods. Finally, we describe a rigorous approach to improve the invariance of any interpretability method.

\subsection{Useful Notions of Geometric Deep Learning}
Some basic concepts of group theory are required for our definition of interpretability robustness. To that aim, we leverage the formalism of \emph{Geometric Deep Learning}. Please refer to~\cite{Bronstein2021} for more details. To rigorously define explanation equivariance and invariance, we need some form of structure in the data we are manipulating. This precludes tabular data but includes graph, time series and image data. In this setting, the data is defined on a finite domain  set $\dom$ (e.g. a grid $\dom = \Z_n \times \Z_n$ for $n \times n$ images). On this domain, the data is represented by signals $x \in \sigspace$, mapping each point $u \in \dom$ of the domain to a channel vector $x(u) \in \C = \R^{d_C}$. We note that $d_C \in \N^+$, corresponds to the number of channels of the signal (e.g. $d_C = 3$ for RGB images). The set of signals has the structure of a vector space since $x_1 , x_2 \in \sigspace \ \Rightarrow \ \lambda_1 \cdot x_1 + \lambda_2 \cdot x_2 \in \sigspace$ for all $\lambda_1 , \lambda_2 \in \R$.  

\textbf{Symmetries.} Informally, symmetries are transformations of the data that leave the information content unchanged (e.g. moving an image one pixel to the right). More formally, symmetries correspond to a set $\G$ endowed with a composition operation $\circ : \G^2 \rightarrow \G$. Clearly, this set $\G$ includes an identity transformation $id$ that leaves the data untouched. Similarly, if a transformation $g \in \G$ preserves the information, then it could be undone by an inverse transformation $g^{-1} \in \G$ such that $g^{-1} \circ g = id$. Those properties~\footnote{Note that groups also satisfy associativity: $g_1 \circ (g_2 \circ g_3) = (g_1 \circ g_2) \circ g_3$ for all $g_1, g_2, g_3 \in \G$.} give $\G, \circ$ the structure of a \emph{group}. In this paper, we assume that the symmetry group has a \emph{finite} number of elements.

\textbf{Group Representation.} We have yet to formalize how the above symmetries transform the data. To that aim, we need to link the symmetry group $\G$ with the signal vector space $\X(\dom, \C)$. This connection is achieved by choosing a \emph{group representation} $\rho : \G \rightarrow \mathrm{Aut}[\sigspace]$ that maps each symmetry $g \in \G$ to an \emph{automorphism} $\rho[g] \in \mathrm{Aut}[\sigspace]$. Formally, the automorphisms $\mathrm{Aut}[\sigspace]$ are defined as bijective linear transformations mapping $\sigspace$ onto itself. In practice, each automorphism $\rho[g]$ is represented by an invertible matrix acting on the vector space $\sigspace$. For instance, an image translation $g$ can be represented by a permutation matrix $\rho[g]$. To qualify as a group representation, the map $\rho$ needs to be compatible with the group composition: $\rho[g_2 \circ g_1] = \rho[g_2] \rho[g_1]$. This property guarantees that the composition of two symmetries can be implemented as the multiplication between two matrices.

\textbf{Invariance.} We first consider the case of a deep neural network $f: \sigspace \rightarrow \Y$, where the output $f(x) \in \Y$  is a vector with no underlying structure (e.g. class probabilities for a classifier). In this case, we expect the model's prediction to be unchanged when applying a symmetry $g \in \G$ to the input signal $x \in \sigspace$. For instance, the probability of observing a cat on an image should not change if we move the cat by one pixel to the right. This intuition is formalized by defining the \emph{$\G$-invariance} property for the model $f$: $f(\rho[g]  x) = f(x)$ for all $g \in \G , x \in \sigspace$.

\textbf{Equivariance.} We now turn to the case of deep neural networks $f: \sigspace \rightarrow \outsigspace$, where the output $f(x) \in \outsigspace$ is also a signal (e.g. segmentation masks for an object detector). We note that the domain $\dom'$ and the channel space $\C'$ are not necessarily the same as $\dom$ and $\C$. When applying a transformation $g \in \G$ to the input signal $x \in \sigspace$, it is legitimate to expect the output signal $f(x)$ to follow a similar transformation. For instance, the segmentation of a cat on an image should move by one pixel to the right if we move the cat by one pixel to the right. This intuition is formalized by defining the \emph{$\G$-equivariance} property for the model $f$: $f(\rho[g]  x) = \rho' [g] f(x)$. Again, the representation $\rho': \G \rightarrow \mathrm{Aut}[\outsigspace]$ is not necessarily the same as the representation $\rho$ since the signal spaces $\sigspace$ and $\outsigspace$ might have different dimensions.

\subsection{Explanation Invariance and Equivariance} \label{subsec:expl_inv_equiv}

We will now restrict to models that are $\G$-invariant\footnote{We also restrict to supervised models, since only early works exist to interpret unsupervised models~\cite{Crabbe2022Label,Lin2022contrastive}.}. It is legitimate to expect similar invariance properties for the explanations associated to this model. We shall now formalize this idea for generic explanations. We assume that explanations are functions of the form $e: \sigspace \rightarrow \E$, where $\E \subseteq \R^{d_E}$ is an explanation space with $d_E \in \N^+$ dimensions\footnote{Note that the explanation $e$ also depends on the model $f$. Since the model is fixed, we make this implicit.}.

\textbf{Invariance and Equivariance.} The invariance and equivariance of the explanation $e$ with respect to symmetries $\G$ are defined as in the previous section. In this way, we say that the explanation $e$ is $\G$-invariant if $e(\rho[g]  x) = e(x)$ and $\G$-equivariant if $e(\rho[g]  x) = \rho' [g] e(x)$ for all $g \in \G , x \in \sigspace$. There is no reason to expect these equalities to hold exactly a priori. This motivates the introduction of two metrics that measure the violation of explanation invariance and equivariance by an interpretability method. 

\begin{definition}[Robustness Metrics] \label{def:robustness_metrics}
	Let $f: \sigspace \rightarrow \Y$ be a neural network that is invariant with respect to the symmetry group $\G$ and $e: \sigspace \rightarrow \E$ be an explanation for $f$. We assume that $\G$ acts on $\sigspace$ via the representation $\rho : \G \rightarrow \mathrm{Aut}[\sigspace]$. We measure the \emph{invariance} of $e$ with respect to $\G$ for some $x \in \sigspace$ with the metric
	\begin{align} \label{equ:invariance_metric}
		\Inv_{\G}(e, x) \equiv \frac{1}{|\G|} \sum_{g \in \G} s_{\E} \left[ e\left(\rho[g]  x \right) , e(x) \right],
	\end{align}
	where $ s_{\E} : \E^2 \rightarrow \R$ is a similarity score on the explanation space $\E$. We use the cos-similarity $s_{\E}(a, b) =  \nicefrac{a^{\intercal} b}{\norm{2}{a} \cdot \norm{2}{b}}$ for real-valued explanations $a, b \in \R^{d_E}$ and the accuracy score $s_{\E}(a, b) =  d_E^{-1} \sum_{i=1}^{d_E} \indicator(a_i = b_i)$ for categorical explanations $a,b \in \Z_K^{d_E}$, where $\indicator$ is the indicator function and $K \in \N^+$ is the number of categories. If we assume that $\G$ acts on $\E$ via the representation $\rho': \G \rightarrow \mathrm{Aut}[\E]$, we measure the \emph{equivariance} of $e$ with respect to $\G$ for some $x \in \sigspace$ with the metric
	\begin{align} \label{equ:equivariance_metric}
		\Equiv_{\G}(e, x) \equiv \frac{1}{|\G|} \sum_{g \in \G} s_{\E} \left[e \left(\rho[g]  x \right) , \rho'[g]e(x) \right].
	\end{align}
	A score $\Inv_{\G}(e, x) = 1$ or $\Equiv_{\G}(e, x) = 1$ indicates that the explanation method $e$ is $\G$-invariant or equivariant for the example $x \in \sigspace$.
\end{definition}
\begin{remark}
	The metrics $\Inv_{\G}$ and $\Equiv_{\G}$ might be prohibitively expensive to evaluate whenever the size $|\G|$ of the symmetry group $\G$ is too big. Note that this is typically the case in our experiments as we consider large permutation groups of order $|\G| \gg 10^{32}$. In this case, we use Monte Carlo estimators for both metrics by uniformly sampling $G \sim U(\G)$ and averaging over a number of sample $N_{\mathrm{samp}} \ll |\G|$. We study the convergence of those Monte Carlo estimators in Appendix~\ref{appendix:monte_carlo}.
\end{remark}

The above approach to measure the robustness of interpretability method applies to a wide variety of settings. To clarify this, we explain how to adapt the above formalism to 3 popular types of interpretability methods: \emph{feature importance}, \emph{example importance} and \emph{concept-based explanations}.

\textbf{Feature Importance.} Feature importance explanations associate a saliency map $e(x) \in \sigspace$ to each example $x \in \sigspace$ for the model's prediction $f(x)$. In this case, we note that the explanation space corresponds to the model's input space $\E = \sigspace$, since the method assigns an importance score to each individual feature. If we apply a symmetry to the input, we expect the same symmetry to be applied to the saliency map, as illustrated by the example from Figure~\ref{fig:main_fig}. Hence, the most relevant metric to record here is the explanation equivariance $\Equiv_{\G}$.  Since the input space and the explanation space are identical $\E = \sigspace$, we work with identical representations $\rho' = \rho$. We note that this metric generalizes the self-consistency score introduced by \cite{Wang2021} beyond affine transformations.  

\textbf{Example Importance.} Example importance explanations associate an importance vector $e(x) \in \R^{\n{train}}$ to each example $x \in \sigspace$ for the model's prediction $f(x)$. Note that $\n{train} \in \N^+$ is typically the model's training set size, so that each component of $e(x)$ corresponds to the importance of a training example. If we apply a symmetry to the input, we expect the relevance of training examples to be conserved, as illustrated by the example from Figure~\ref{fig:main_fig}. Hence, the most relevant metric to record here is the invariance $\Inv_{\G}$.

\textbf{Concept-Based Explanations.} Concept-based explanations associate a binary concept presence vector $e(x) \in \{0,1\}^C$ to each example $x \in \sigspace$ for the model's prediction $f(x)$. Note that $C \in \N^+$ is the number of concepts one considers, so that each component of $e(x)$ corresponds to the presence/absence of a concept. If we apply a symmetry to the input, there is no reason for a concept to appear/vanish, since the information content of the input is untouched by the symmetry. Hence, the most relevant metric to record here is again the invariance $\Inv_{\G}$.

\subsection{Theoretical Analysis}
\label{subsec:theoretical_analysis}

\begin{table*}[h]
	\caption{Theoretical robustness guarantees that we derive for explanations of invariant models. We split the interpretability methods according to their type and according to model information they rely on (model gradients, perturbations, loss or representations). We consider 3 levels of guarantees: {\color{green}\ding{51}}~indicates unconditional guarantee, {\color{teal}$\boldsymbol{\sim}$}~conditional guarantee and {\color{red}\ding{55}}~no guarantee.}
	\label{tab:guarantees}
	\vskip 0in
	\begin{center}
			\begin{adjustbox}{width=.8\linewidth}	
				\begin{tabular}{llcccl}
					\toprule
					\textbf{Type} & \textbf{Computation} & \textbf{Example} & \textbf{Invariant} & \textbf{Equivariant} &  \textbf{Details} \\
					\midrule
					Feature  & Grad.  $\nabla_x f(x)$ & \cite{Sundararajan2017} & \never & \conditional & Prop.~\ref{prop:gradient_based_equiv} \\
					Importance & Pert. $f(x + \delta x)$ & \cite{Fisher2019} & \never & \conditional & Prop.~\ref{prop:perturbation_based_equiv} \\
					\arrayrulecolor{gray!50}\hline
					Example & Loss $\mathcal{L}[f(x), y]$ & \cite{Koh2017}  & \always & \never &  Prop.~\ref{prop:loss_based_inv} \\
					Importance & Rep. $h(x)$ & \cite{Crabbe2021Simplex} & \conditional & \never& Prop.~\ref{prop:representation_based_inv} \\
					\hline
					Concept-Based & Rep. $h(x)$ & \cite{Kim2018} & \conditional & \never & Prop.~\ref{prop:concept_based_inv} \\
					\arrayrulecolor{black} \bottomrule
				\end{tabular}
			\end{adjustbox}
	\end{center}
\end{table*}

Let us now provide a theoretical analysis of robustness in a setting where the model $f$ is $\G$-invariant. We first show that many popular interpretability methods naturally offer some robustness guarantee if we make some assumptions. For methods that are not invariant when they should, we propose an approach to enforce $\G$-invariance. 

\textbf{Robustness Guarantees.} In Table~\ref{tab:guarantees}, we summarize the theoretical robustness guarantees that we derive for popular interpretability methods. All of these guarantees are formally stated and proven in Appendix~\ref{appendix:theoretical_results}. When it comes to feature importance methods, there are mainly two assumptions that are necessary to guarantee equivariance. (1) The first assumption restricts the type of baseline input $\bar{x} \in \sigspace$ on which the feature importance methods rely. Typically, these baselines signals are used to replace ablated features from the original signal $x \in \sigspace$ (i.e. remove a feature $x_i$ by replacing it by $\bar{x}_i$
). In order to guarantee equivariance, we require this baseline signal to be invariant to the action of each symmetry $g \in \G$ : $\rho[g] \bar{x} = \bar{x}$. (2) The second assumption restricts the type of representation $\rho$ that can be used to describe the action of the symmetry group on the signals. In order to guarantee equivariance, we require this representation to be a \emph{permutation representation}, which means that the action of each symmetry $g \in \G$ 
is represented by a permutation matrix $\rho[g]$ acting on the signal space $\sigspace$. 

When it comes to example importance methods, the assumptions depend on how the importance scores are obtained. If the importance scores are computed from the model's loss, then the invariance of the explanation immediatly follows from the model's invariance. If the importance scores are computed from the model's internal representations $h : \sigspace \rightarrow \R^{d_{\mathrm{rep}}}$, then the invariance of the explanation can only be guaranteed if the representation map $h$
is itself invariant to action of each symmetry: $h(\rho[g] x) = h(x)$. Similarly, concept-based explanations are also computed from the model's representations $h$. Again, the invariance of these explanations can only be guaranteed if the representation map $h$ is itself invariant.

\textbf{Enforcing Invariance.} If the explanation $e$ is not $\G$-invariant when it should, we can construct an auxiliary explanation $\einv$ built upon $e$ that is $\G$-invariant. This permits to improve the robustness of any interpretability method that has no invariance guarantee. The idea is simply to aggregate the explanation over several symmetries.

\begin{restatable}{proposition}{enforceinv}[Enforce Invariance] \label{prop:enforce_invariance}
	Consider a neural network $f: \sigspace \rightarrow \Y$ that is invariant with respect to the symmetry group $\G$ and $e: \sigspace \rightarrow \E$ be an explanation for $f$. We assume that $\G$ acts on $\sigspace$ via the representation $\rho : \G \rightarrow \mathrm{Aut}[\sigspace]$. We define the auxiliary explanation $\einv: \sigspace \rightarrow \E$ as
	\begin{align*}
		\einv(x) \equiv \frac{1}{|\G|} \sum_{g \in \G} e( \rho[g] x)
	\end{align*}
	for all $x \in \sigspace$. The auxiliary explanation $\einv$ is invariant under the symmetry group $\G$.
\end{restatable}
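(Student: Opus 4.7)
The plan is to verify $\G$-invariance directly from the definition: for any $g' \in \G$ and any $x \in \sigspace$, I will show $\einv(\rho[g'] x) = \einv(x)$. This is a classical group-averaging (Reynolds operator) argument, and the only structural fact needed is that $\rho$ is a group representation, i.e.\ $\rho[g \circ g'] = \rho[g]\, \rho[g']$ for all $g, g' \in \G$. Note that neither the $\G$-invariance of $f$ nor any specific property of $e$ (beyond being well defined on $\sigspace$) is required, so the construction genuinely enforces invariance for \emph{any} explanation.

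The first step is to substitute $\rho[g'] x$ into the definition of $\einv$ and pull the outer symmetry inside the finite average:
\[
\einv(\rho[g'] x) \;=\; \frac{1}{|\G|} \sum_{g \in \G} e\bigl(\rho[g]\, \rho[g']\, x\bigr) \;=\; \frac{1}{|\G|} \sum_{g \in \G} e\bigl(\rho[g \circ g']\, x\bigr),
\]
where the second equality uses the homomorphism property of $\rho$.

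The second step is a change of summation variable $h \equiv g \circ g'$. The key observation is that right composition by a fixed element of a finite group is a bijection $\G \to \G$ (the rearrangement lemma), with inverse given by right composition by $(g')^{-1}$; hence as $g$ ranges over $\G$, so does $h$. Reindexing yields
\[
\einv(\rho[g'] x) \;=\; \frac{1}{|\G|} \sum_{h \in \G} e\bigl(\rho[h]\, x\bigr) \;=\; \einv(x),
\]
which is exactly the desired invariance.

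I do not anticipate any real obstacle here: the proof is purely algebraic and only uses the group axioms together with the homomorphism property of $\rho$. The single point that deserves an explicit remark is the bijectivity of the reindexing, but that is immediate from the existence of inverses in $\G$. The finiteness assumption on $\G$ made earlier in the paper is used only to justify freely exchanging the sum with the evaluation, which causes no issue.
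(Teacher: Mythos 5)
Your proof is correct and follows essentially the same route as the paper's: expand $\einv(\rho[g']x)$ using the homomorphism property of $\rho$ and reindex the finite sum via the rearrangement lemma. The only cosmetic difference is that you reindex by right-composition $h = g \circ g'$, which matches the order $\rho[g]\rho[g'] = \rho[g \circ g']$ arising from the substitution, whereas the paper writes the factors in the opposite order and reindexes by left-composition; since translation by a fixed element on either side is a bijection of $\G$, both versions are sound.
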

\begin{proof}
	Please refer to Appendix~\ref{appendix:theoretical_results}.
\end{proof}
\begin{remark}
	Once again, a Monte Carlo estimation for $\einv$ might be required for groups $\G$ with many elements. This produces explanations that are approximatively invariant.
\end{remark}

	\section{Experiments} \label{sec:experiments}
	In this section, we use our interpretability robustness metrics to draw some insights with real-world models and datasets. We first evaluate the $\G$-invariance and equivariance of popular interpretability methods used on top of $\G$-invariant models. With this analysis, we identify interpretability methods that are not robust. We then show that the robustness of these interpretability methods can largely be improved by using their auxiliary version defined in \cref{prop:enforce_invariance}. Finally, we study how the $\G$-invariance and equivariance of interpretability methods varies when we decrease the invariance of the underlying model. From these experiments, we derive 5 guidelines to ensure that interpretability methods are robust with respect to symmetries from $\G$. We summarize these guidelines with a flowchart in \cref{fig:flowchart} from \cref{appendix:flowchart}. 

The datasets used in our experiment are presented in \cref{tab:datasets}. We explore various modalities and symmetry groups throughout the section, as described in \cref{tab:symmetry_groups}. For each dataset, we fit and study a classifier from the literature designed to be \emph{invariant} with respect to the underlying symmetry group. For each model, we evaluate the robustness of various feature importance, example importance and concept-based explanations. More details on the experiments are available in \cref{appendix:experiment_details}. We also include a comparison between our robustness metrics and the sensitivity metric in \cref{appendix:sensitivity}. The code and instructions to replicate all the results reported below are available in the public repositories \url{https://github.com/JonathanCrabbe/RobustXAI} and \url{https://github.com/vanderschaarlab/RobustXAI}.  

\begin{table}[h!]
	\centering
	\caption{Different datasets used in the experiments.}
	\label{tab:datasets}
	\begin{center}
		\begin{adjustbox}{width=\linewidth}	
			\begin{tabular}{lllll}
				\toprule
				\textbf{Dataset} & \textbf{\# Classes} &\textbf{Modality} & \textbf{Symmetry Group} & \textbf{Model}  \\
				\midrule
				Electrocardiograms~\cite{Goldberger2000, Moody2001}  & 2 & Time Series  & Cyclic Translations $\Z / T\Z$ &  All-CNN~\cite{Springenberg2014} \\
				Mutagenicity~\cite{Kazius2005, Riesen2008, Morris2020} & 2 & Graphs &  Node Permutations $S_{V_x}$ & GraphConv GNN~\cite{Morris2019Weisfeiler}  \\
				ModelNet40~\cite{Wu2015,Zaheer2017,Lee2019}  & 40 & 3D Point Clouds  & Point Permutations $S_{\n{pt}}$ & Deep Set~\cite{Zaheer2017} \\
				IMDb~\cite{Maas2011imdb} & 2 & Text &  Token Permutation $S_{T}$ & Bag-of-words MLP \\
				FashionMNIST~\cite{Xiao2017} & 10 & Images &  Cyclic Translations $(\Z/ 10 \Z)^2$ & All-CNN~\cite{Springenberg2014}  \\
				CIFAR100~\cite{Krizhevsky2009} & 100 & Images &  Dihedral Group $\dihedral_8$ & E(2)-WideResNet~\cite{He2016deep,Weiler2019general}  \\
				STL10~\cite{Coates2011} & 10 & Images &  Dihedral Group $\dihedral_8$ & E(2)-WideResNet~\cite{He2016deep,Weiler2019general}  \\	
			    \bottomrule
			\end{tabular}
		\end{adjustbox}
	\end{center}
\end{table}

\begin{table}[h!]
	\centering
	\caption{Various symmetry groups used in the experiments.}
	\label{tab:symmetry_groups}
	\begin{center}
		\begin{adjustbox}{width=\linewidth}	
			\begin{tabular}{lll}
				\toprule
				\textbf{Symmetry Group} & \textbf{Acting on} & \textbf{Description} \\
				\midrule
				Translation $\Z / N \Z$ & Time series, Images & Shifts signals in time and images horizontally \& vertically. \\
				Permutation $S_N$ & Graph nodes, Points in clouds, Tokens  & Changes the ordering of nodes / points / tokens in feature matrices. \\
				Dihedral $\dihedral_8$ & Images & Rotate / reflects the images though angles $45^{\circ}, 90^{\circ}, 135^{\circ}, 180^{\circ}, 225^{\circ}, 315^{\circ}$ \\
				\bottomrule
			\end{tabular}
		\end{adjustbox}
	\end{center}
\end{table}

\subsection{Evaluating Interpretability Methods} \label{subsec:evaluating_robustness}

\begin{figure}[h]
	\vspace{-1cm}
	\subfigure[Feature Importance]{ \label{subfig:feature_importance_robustness}
		\begin{minipage}[t]{.5\linewidth}
			\centering
			\includegraphics[width=1\linewidth]{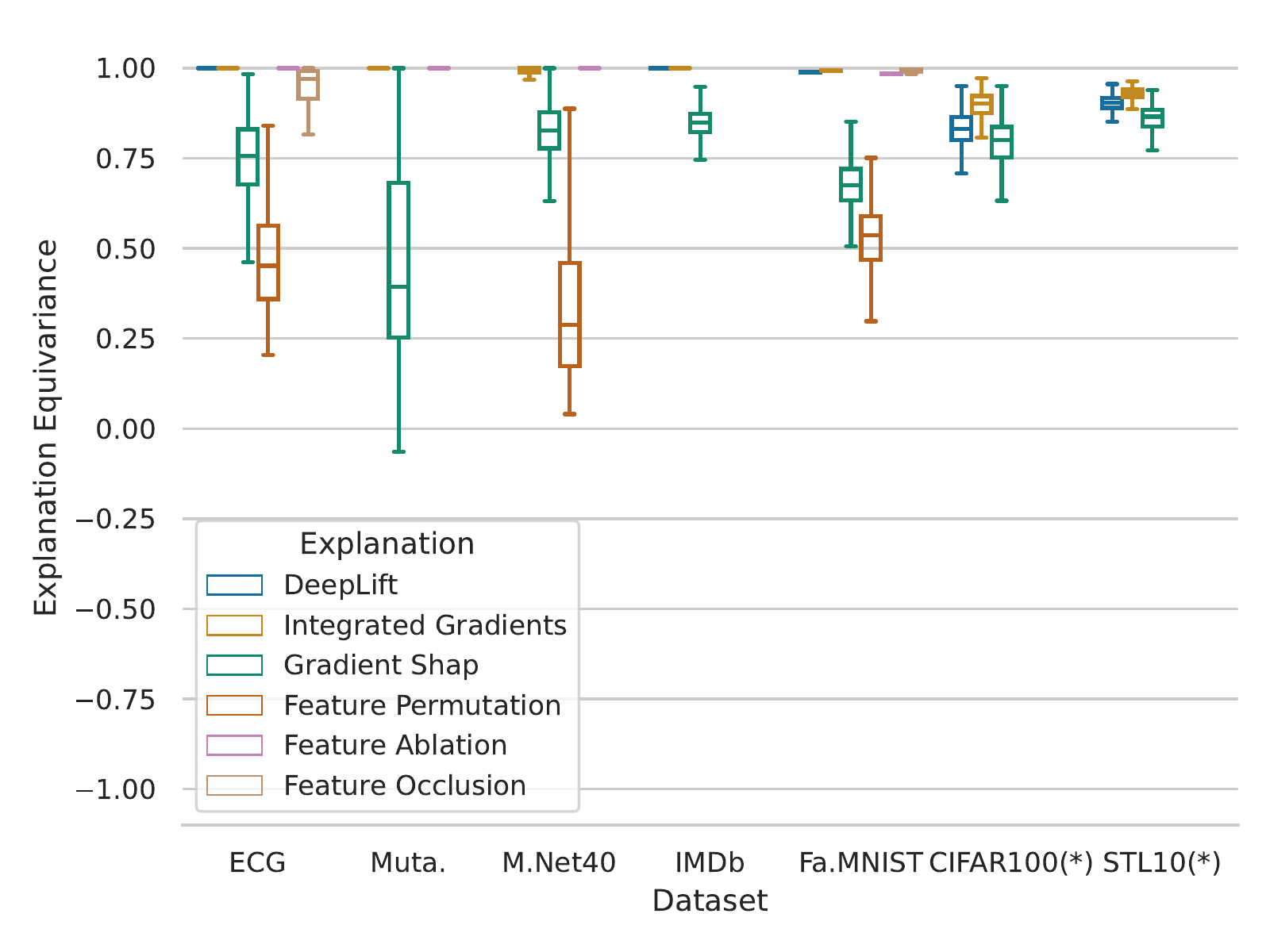}
		\end{minipage}
	}
	\subfigure[Example Importance]{ \label{subfig:example_importance_robustness}
		\begin{minipage}[t]{.5\linewidth}
			\centering
			\includegraphics[width=1\linewidth]{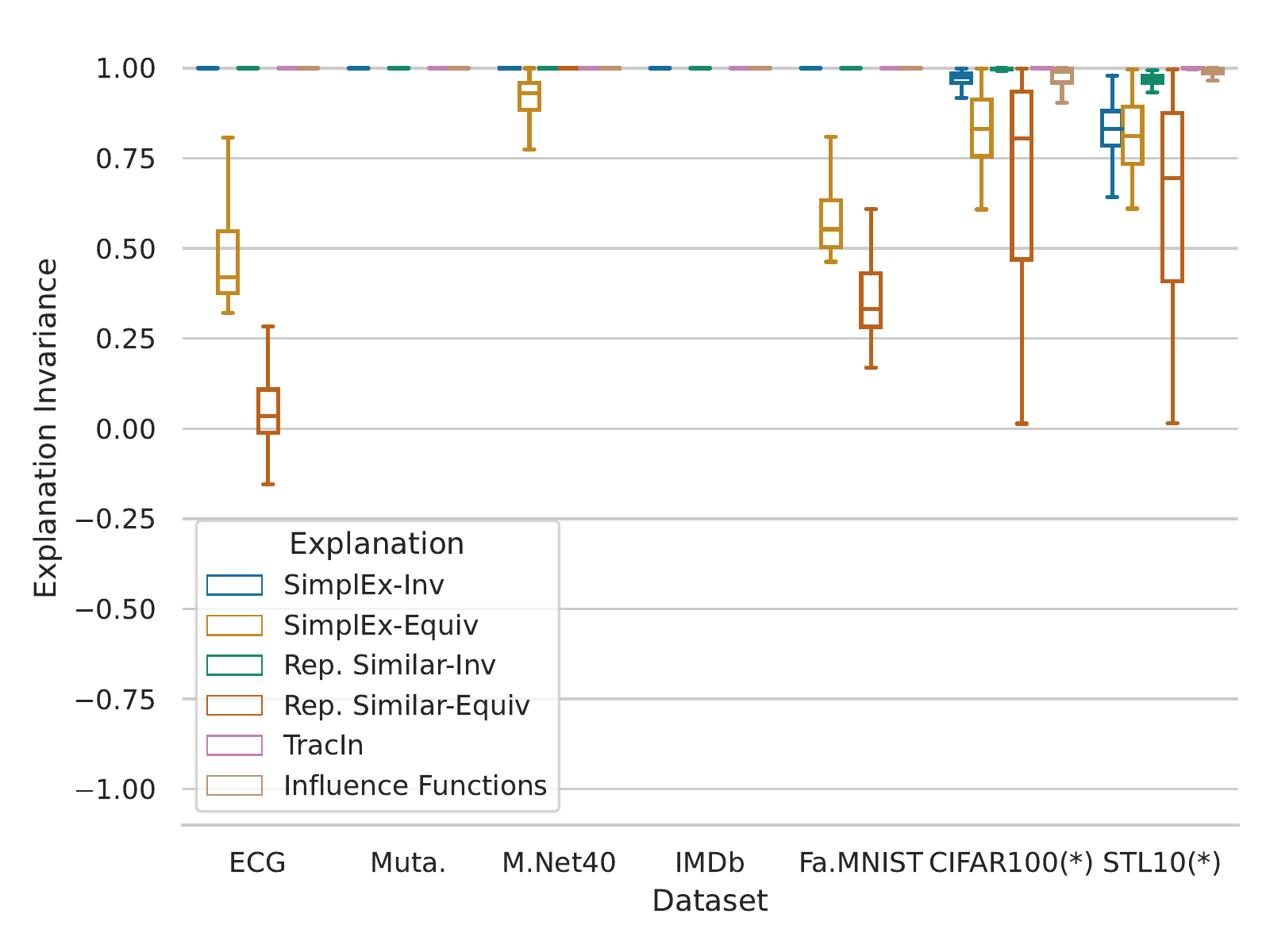}
		\end{minipage}
	}
	\subfigure[Concept Importance]{ \label{subfig:concept_importance_robustness}
		\begin{minipage}[t]{.5\linewidth}
			\centering
			\includegraphics[width=1\linewidth]{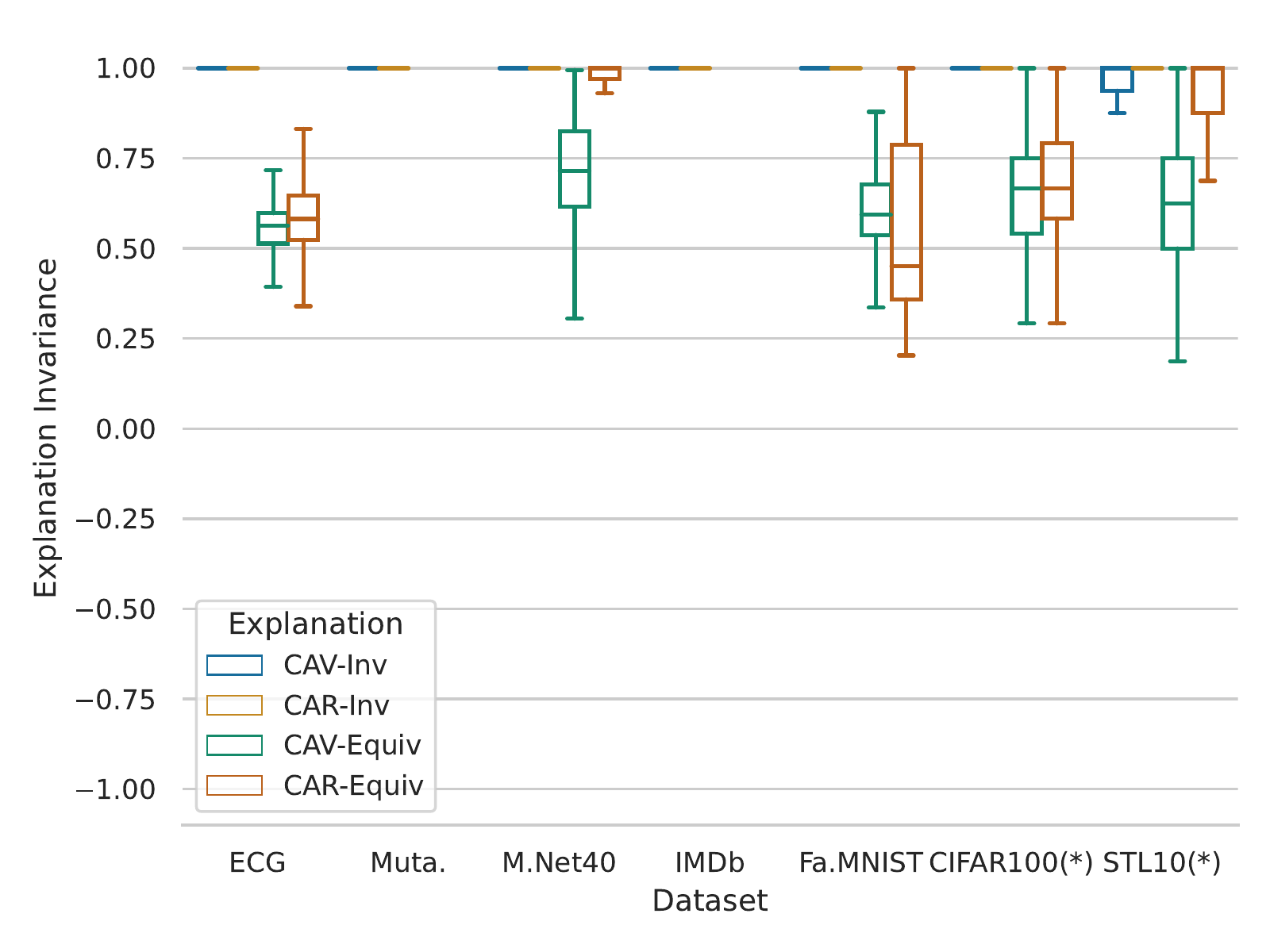}
		\end{minipage}
	}
	\subfigure[Training Dynamics for E(n)-WideResNet]{ \label{subfig:resnet_training_dynamics}
		\begin{minipage}[t]{.5\linewidth} 
			\centering
			\includegraphics[width=1\linewidth]{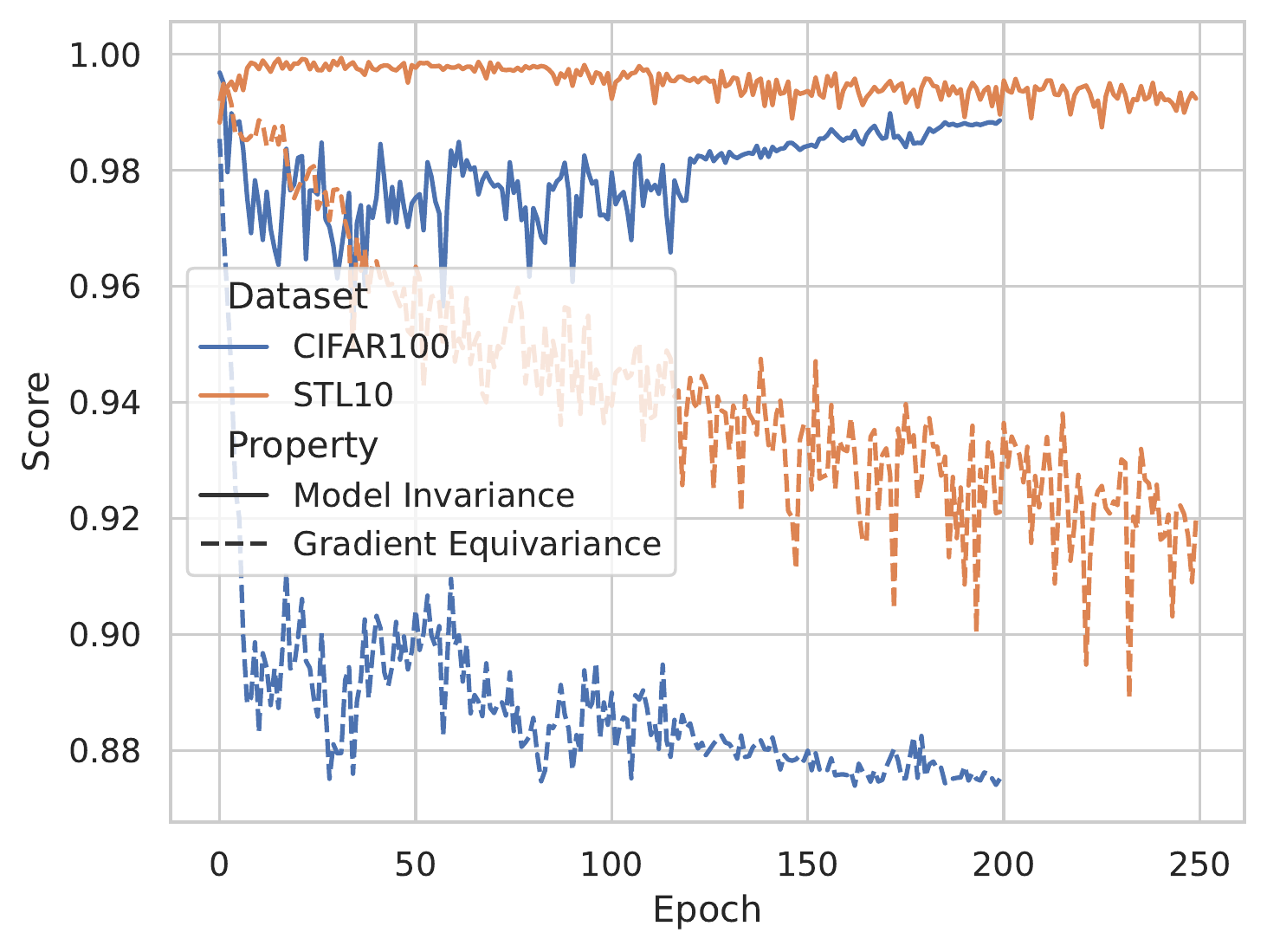}	
		\end{minipage}
	}
	\caption{Explanation robustness of interpretability methods for invariant models. The interpretability methods are grouped by type. Each box-plot is produced by evaluating the robustness metrics $\Inv_{\G}$ or $\Equiv_{\G}$ across several test samples $x \in \Dtest$. The asterisk (*) indicates a dataset where the model is only approximatively invariant. Those models are discussed in \cref{subsec:relaxing_invariance}. For all other models, any value below 1 for the metrics is unexpected, as the model is $\G$-invariant.}
	\label{fig:global_robustness}
\end{figure}

\textbf{Motivation.} The purpose of this experiment is to measure the robustness of various interpretability methods. Since we manipulate models that are invariant with respect to a group $\G$ of symmetry, we expect feature importance methods to be $\G$-equivariant ($\Equiv_{\G}[e, x] = 1$ for all $x \in \sigspace$). Similarly, we expect example and concept-based methods to be $\G$-invariant ($\Inv_{\G}[e, x] = 1$ for all $x \in \sigspace$). We shall now verify this empirically.

\textbf{Methodology.} To measure the robustness of interpretability methods empirically, we use a set $\Dtest$ of $\n{test}$ examples ($\n{test} = 433$ for Mutagenicity, $\n{test} = 1,000$ for ModelNet40 and Electrocardiograms (ECG) and $\n{test} = 500$ in the other cases). For each interpretability method $e$, we evaluate the appropriate robustness metric for each test example $x \in \Dtest$. For Mutagenicity and ModelNet40, the large order $|\G|$ makes the exact evaluation of the metric unrealistic. We therefore use a Monte Carlo approximation with $\n{samp} = 50$. As demonstrated in Appendix~\ref{appendix:monte_carlo}, the Monte Carlo estimators have already converged with this sample size. In all the other cases, these metrics are evaluated exactly since $\G$ has a tractable order $|\G|$. Since the E(2)-WideResNets for CIFAR100 and STL10 are only approximativerly invariant with respect to $\dihedral_8$, we defer their discussion to \cref{subsec:relaxing_invariance}. We note that some interpretability methods cannot be used in some settings. Whenever this is the case, we simply omit the interpretability method. Please refer to Appendix~\ref{appendix:experiment_details} for more details. 

\textbf{Analysis.} We report the robustness score for each metric and each dataset on the test set $\Dtest$ in \cref{subfig:feature_importance_robustness,subfig:example_importance_robustness,subfig:concept_importance_robustness}. We immediately notice that not all the interpretability methods are robust. We provide some real examples of non-robust explanations in \cref{appendix:examples_fail} in order to visualize the failure modes. When looking at feature importance, we observe that equivariance is not guaranteed by methods that rely on baseline that are not invariant. For instance, Gradient Shap and Feature Permutation rely on a random baseline, which has no reason to be $\G$-invariant. We conclude that the invariance of the baseline $\bar{x}$ is crucial to guarantee the robustness of feature importance methods. When it comes to example importance, we note that loss-based methods are consistently invariant, which is in agreement with Proposition~\ref{prop:loss_based_inv}. Representation-based and concept-based methods, on the other hand, are invariant only if used with invariant layers of the model. This shows that the choice of what we call the \emph{representation space} matters for these methods. We derive a set of guidelines from these observations.

\begin{mdframed}[leftmargin=0pt, rightmargin=0pt, innerleftmargin=0pt, innerrightmargin=0pt, skipbelow=0pt]
\guideline{1} Feature importance methods should be used with group invariant baseline signal ($\rho[g] \bar{x} = \bar{x}$ for all $g \in \G$) to guarantee explanation equivariance. Only methods that conserve the invariance of the baseline can guarantee equivariance.

\guideline{2} Loss-based example importance methods guarantee explanation invariance, unlike representation-based methods. When using the latter, only invariant layers guarantee explanation invariance.

\guideline{3} To guarantee invariance of concept-based explanations, concept classifiers should be used on invariant layers of the model.
\end{mdframed}

\subsection{Improving Robustness} \label{subsec:improving_robustness}

\begin{figure}[h]
	\vspace{-1cm}
	\centering
	\subfigure[ECG]{
	\begin{minipage}[t]{.4\linewidth}
		\centering
		\includegraphics[width=1\linewidth]{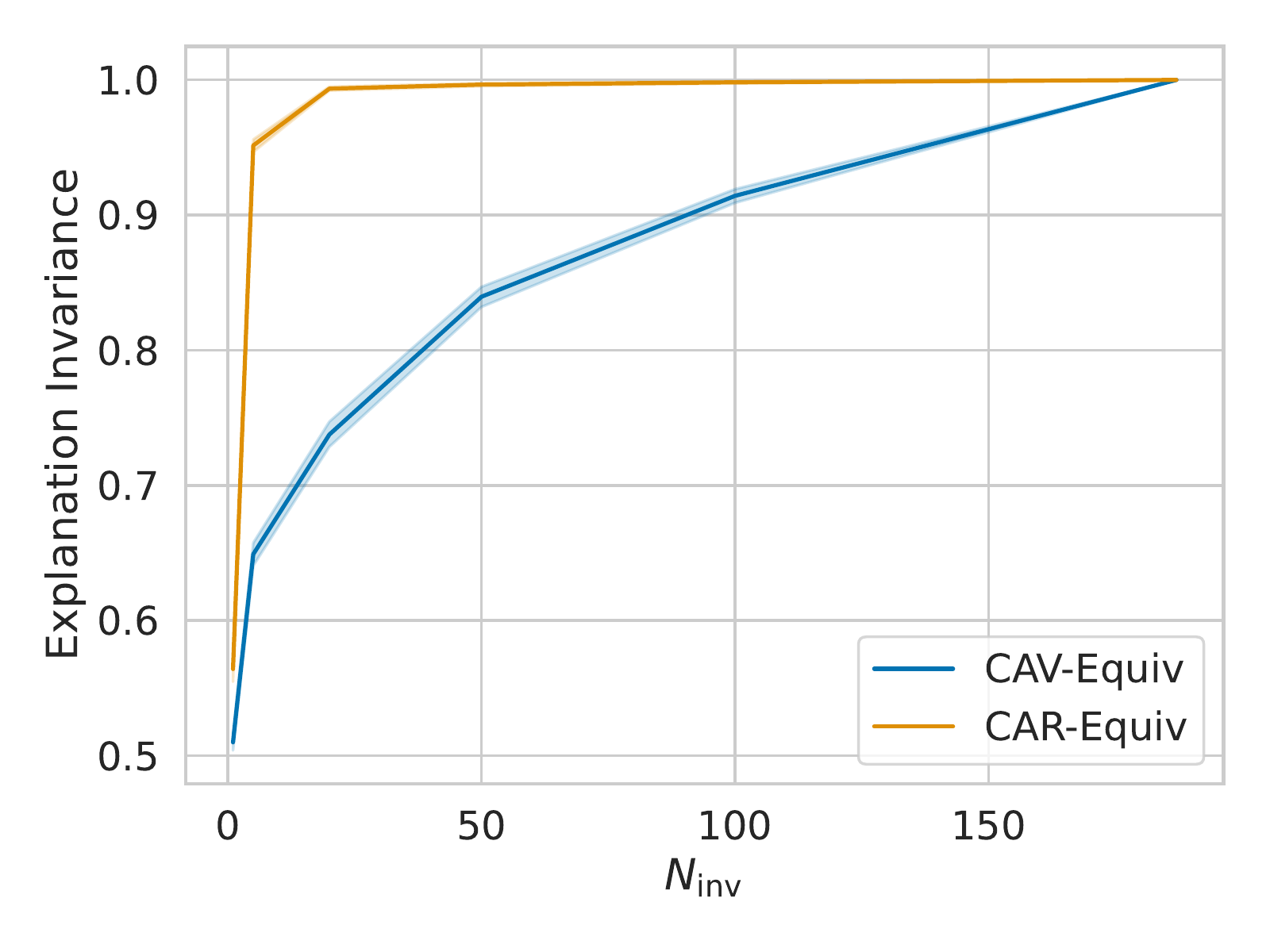}
	\end{minipage}
	}
	\subfigure[FashionMNIST]{
		\begin{minipage}[t]{.4\linewidth}
			\centering
			\includegraphics[width=1\linewidth]{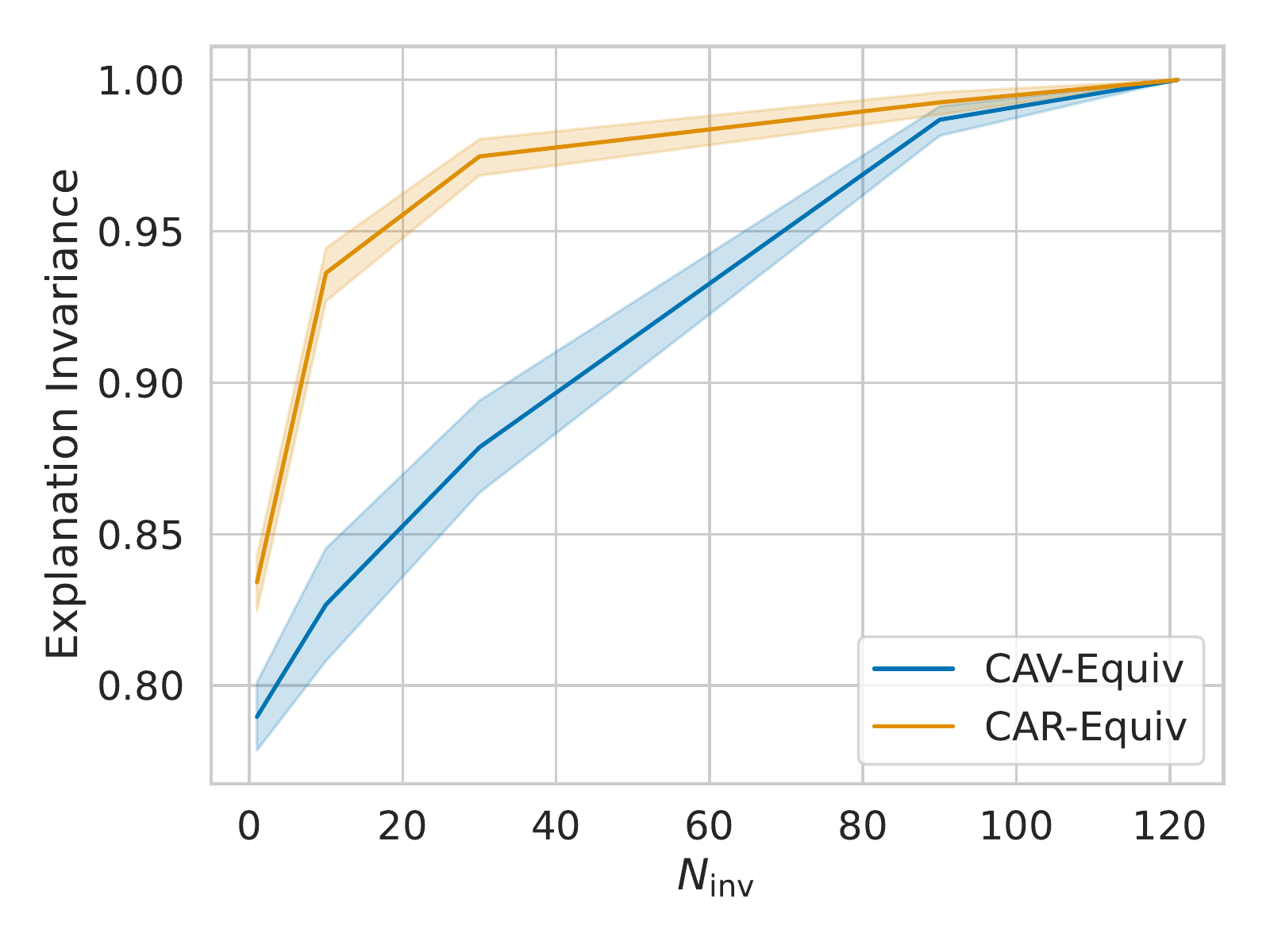}
		\end{minipage}
	}
	\caption{Explanation invariance can be increased according to \cref{prop:enforce_invariance}. This plot shows the score averaged on a test set $\Dtest$ together with a $95\%$ confidence interval.}
	\label{fig:enforce_invariance}
\end{figure}
\textbf{Motivation.} In the previous experiment, we noticed that not all the interpretability methods are $\G$-invariant when they should. Consider, for instance, concept-based methods used on equivariant layers. The lack of invariance for these methods implies that they rely on concept classifiers that are not $\G$-invariant. This behaviour is undesirable for two reasons: (1) since any symmetry $g \in \G$ preserve the information of a signal $x \in \sigspace$, the signal $\rho[g] x$ should contain the same concepts as $x$ and (2) the layer that we use implicitly encodes these symmetries through equivariance of the output representations. Hence, concept classifiers that are not $\G$-invariant fail to generalize by ignoring the symmetries encoded in the structure of the model's representation space. Fortunately, Proposition~\ref{prop:enforce_invariance} gives us a prescription to obtain explanations (here concept classifiers) that are more robust with respect to the model's symmetries. We shall now illustrate how this prescription improves the robustness of concept-based methods.

\textbf{Methodology.} In this experiment, we restrict our analysis to the ECG and FashionMNIST datasets. For each test signal, we sample $\n{inv}$ symmetries $G_i \in \G, i \in \Z_{\n{inv}}$ without replacement. As prescribed by Proposition~\ref{prop:enforce_invariance}, we then compute the auxiliary explanation $\einv(x) = \n{inv}^{-1} \sum_{i=1}^{\n{inv}} e(\rho[G_i] x)$ for each concept importance method.

\textbf{Analysis.} We report the average invariance score $\mathbb{E}_{X \sim U(\Dtest)} \Inv_{\G}(\einv, X)$ for several values of $\n{inv}$ in Figure~\ref{fig:enforce_invariance}. As we can see, the invariance of the explanation grows monotonically with the number of samples $\n{inv}$ to achieve a perfect invariance for $\n{inv}=|\G|$. Interestingly, the explanation invariance increases more quickly for CAR. This suggests that enforcing explanation invariance is less expensive for certain interpretability methods and motivates the below guideline.

\begin{mdframed}[leftmargin=0pt, rightmargin=0pt, innerleftmargin=0pt, innerrightmargin=0pt, skipbelow=0pt]
\guideline{4} Any interpretability method can be made invariant through Proposition~\ref{prop:enforce_invariance}. In doing so, one should increase the number of samples $\n{inv}$ until the desired invariance is achieved. In this way, the method is made robust without increasing the number of calls more than necessary. Note that it only makes sense to enforce invariance of the interpretability method if the explained model is itself invariant.
\end{mdframed}

\subsection{Relaxing Invariance} \label{subsec:relaxing_invariance}

\textbf{Motivation.} In practice, models are not always perfectly invariant. A first example is given by the CIFAR100 and STL10 WideResNet that has a strong bias towards being $\dihedral_8$-invariant, altough it can break this invariance at training time (see Appendix H.3 of \cite{Weiler2019general}). Another popular example is a CNN that flattens the output of convolutional layers, which violates translation invariance~\cite{Springenberg2014, Kayhan2020, Biscione2020}. This motivates the study of interpretability methods robustness when models are not perfectly invariant.

\textbf{Methodology.} This experiment studies the two aforementioned settings. First, we replicate the experiment from \cref{subsec:evaluating_robustness} with the CIFAR100 and STL10 WideResNet. Second, we consider CNNs that flatten their last convolutional layer with the ECG and FashionMNIST datasets. In this case, we introduce 2 variants of the All-CNN where the global pooling is replaced by a flatten operation: an \emph{Augmented-CNN} trained by augmenting the training set $\Dtrain$ with random translations and a \emph{Standard-CNN} trained without augmentation. We measure the invariance/equivariance of the interpretability methods for each model. 

\textbf{Analysis.} The results for the WideResNets are reported in \cref{subfig:feature_importance_robustness,subfig:example_importance_robustness,subfig:concept_importance_robustness}. We see that the robustness of various interpretability methods substantially drops with the model invariance. This is particularly noticeable for feature importance methods. To illustrate this phenomenon, we plot in \cref{subfig:resnet_training_dynamics} the evolution during training of the model's prediction $f(x)$ $\G$-invariance and the $\G$-equivariance of its gradient $\nabla_xf(x)$, on which the attribution methods rely. As we can see, the model remains almost invariant during training, while the gradients equivariance is destroyed. Similar observations can be made with the CNNs from \cref{fig:relaxing_invariance}. In spite of the Augmented-CNN being almost invariant, we notice that the symmetry breaks significantly for feature importance methods. These results suggest that the robustness of interpretability methods can be (but is not necessarily) fragile if model invariance is relaxed, even slightly. This motivates our last guideline, which safeguards against erroneous interpretations of our robustness metrics.

\begin{mdframed}[leftmargin=0pt, rightmargin=0pt, innerleftmargin=0pt, innerrightmargin=0pt, skipbelow=0pt]
\guideline{5} One should \emph{not} assume a linear relationship between model invariance and explanation invariance/equivariance. In particular, the robustness of an interpretability method for an invariant model \emph{does not} imply that this method is robust for an approximatively invariant model.
\end{mdframed}

\begin{figure}[h]
	\vspace{-.2cm}
	\begin{center}
		\includegraphics[width=\linewidth]{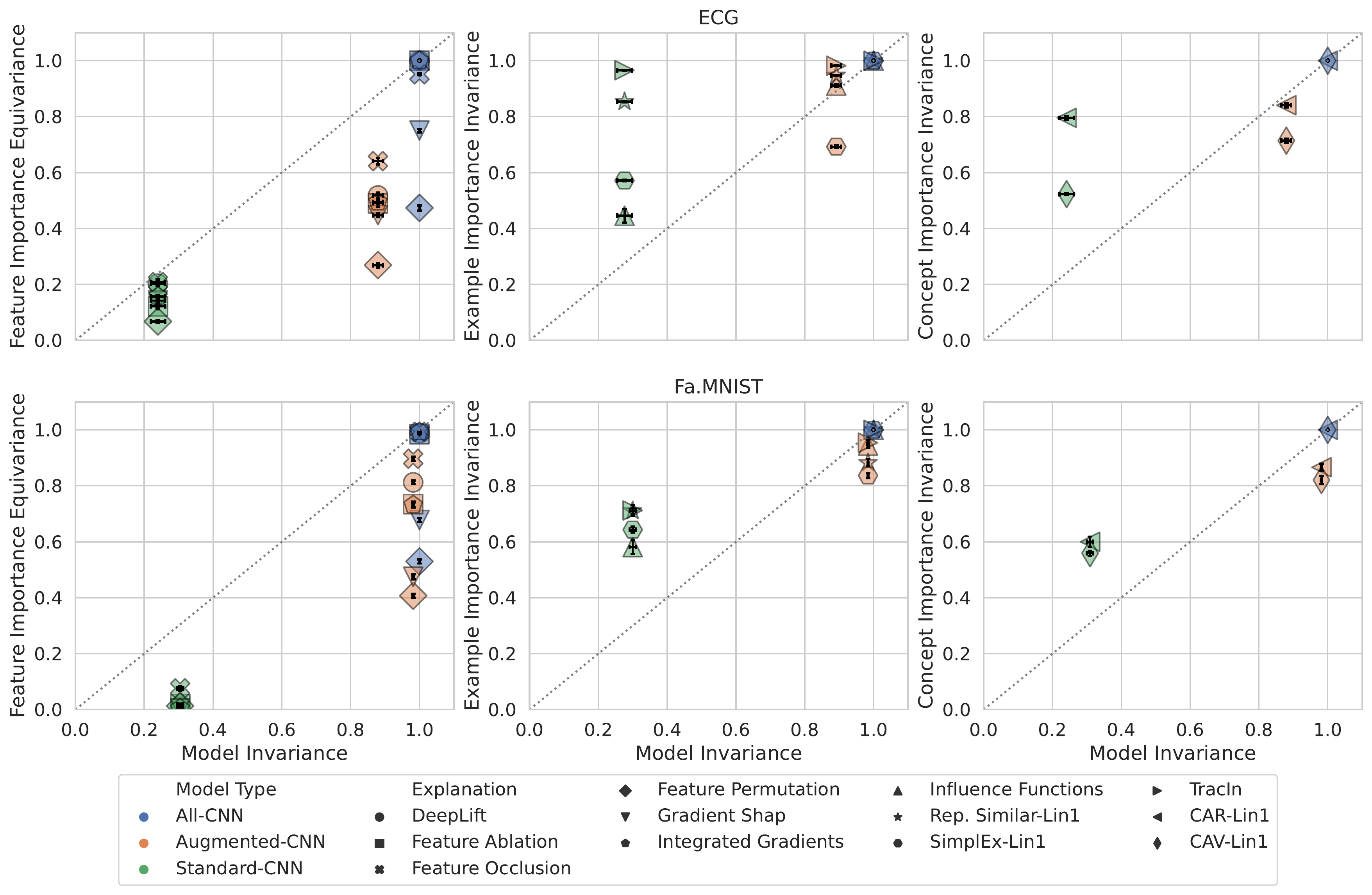}
	\end{center}
	\vspace{-.3cm}
	\caption{Effect of relaxing the model invariance on interpretability methods invariance/equivariance. The interpretability methods are grouped by type in each column. The error bars represent a $95\%$ confidence interval around the mean for $\Inv$ and $\Equiv$.  Lin1 is to the output of the first dense layer of the CNN, which corresponds to the invariant layer used in \cref{subsec:evaluating_robustness}.}
	\label{fig:relaxing_invariance}
\end{figure}

	\section{Discussion} \label{sec:discussion}
	Building on recent developments in geometric deep learning, we introduced two metrics (explanation invariance and equivariance) to assess the faithfulness of model explanations with respect to model symmetries. In our experiments, we considered a wide range of models whose predictions are invariant with respect to transformations of their input data. By analyzing feature importance, example importance and concept-based explanations of these models, we observed that many of these explanations are not invariant/equivariant to these transformations when they should. This led us to establish a set of guidelines in \cref{appendix:flowchart} to help practitioners choose interpretability methods that are consistent with their model symmetries.

Beyond actionable insights, we believe that our work opens up interesting avenues for future research. An important one emerged by studying the equivariance of saliency maps with respect to models that are approximately invariant. This analysis showed that state-of-the-art saliency methods fail to keep a high equivariance score when the model's invariance is slightly relaxed. This important observation could be the seed of future developments of robust feature importance methods.
	
	\clearpage
	\section*{Acknowledgements}
	The authors are grateful to the 5 anonymous NeurIPS reviewers for
their useful comments on an earlier version of the manuscript. Jonathan Crabbé is funded by Aviva and Mihaela van der Schaar by the Office of Naval Research (ONR), NSF 172251.
This work was supported by Azure sponsorship credits granted by Microsoft’s AI for Good Research Lab.
	
	\bibliographystyle{unsrt}
	\bibliography{main}

	\newpage
	\appendix
	\appendixpage 
	\startcontents[sections]
	\printcontents[sections]{l}{1}{\setcounter{tocdepth}{2}}
	
	\newpage
	\section{How to Use our Framework in Practice} \label{appendix:flowchart}
	\begin{figure*}[h!]
	\centering
	\resizebox{\textwidth}{!}{
	\begin{tikzpicture}[node distance=2cm, >=stealth,blk/.style={draw, minimum width=2cm, minimum height=2cm}]
		\node (start) [startstop, align=center] {Start with a neural network $f : \sigspace \rightarrow \Y$ that is invariant under a symmetry group $\G$ with representation \\ $\rho : \G \rightarrow \mathrm{Aut}[\sigspace]$, an interpretability method $e: \sigspace \rightarrow \E$ and an example $x \in \sigspace$ to explain.};
		
		\node (etype) [decision, below of=start] {Type of $e$?};
		
		\node (eximp) [process, below of=etype, yshift=-.7cm] {Evaluate $\Inv_{\G}[e, x]$ defined in~\eqref{equ:invariance_metric}.};
		\node (featimp) [process, left of=eximp, xshift=-4cm, align=center] {Evaluate $\Equiv_{\G}[e, x]$ defined in~\eqref{equ:equivariance_metric}.};
		\node (concbas) [process, right of= eximp, xshift=4cm] {Evaluate $\Inv_{\G}[e, x]$ defined in~\eqref{equ:invariance_metric}.};
		
		\node (erob1) [decision, below of=featimp, align=center, yshift=-.7cm] {$\Equiv_{\G}[e, x] \approx 1$?};
		\node (erob2) [decision, below of=eximp, align=center, yshift=-.7cm] {$\Inv_{\G}[e, x] \approx 1$?};
		\node (erob3) [decision, below of=concbas, align=center, yshift=-.7cm] {$\Inv_{\G}[e, x] \approx 1$?};
		\node (etype2) [decision, below of=erob2, align=center, yshift=-2.5cm] {$e$ representation\\based?};
		
		\node (featsol) [process, below of=erob1, align=center, yshift=-2.5cm] {Restrict $e$ to be gradient or perturbation-based with invariant baseline $\bar{x} = \rho[g] \bar{x}$ as per Prop.~\ref{prop:gradient_based_equiv}~and~\ref{prop:perturbation_based_equiv}.};
		\node (repsol) [process, below of=erob3, align=center, yshift=-2.5cm] {You can either (a)~Use your method with another representation that is invariant as per Prop.~\ref{prop:representation_based_inv}~and~\ref{prop:concept_based_inv} (b)~Enforce robustness by replacing $e$ by $\einv$ as per Prop.~\ref{prop:enforce_invariance}.};
		\node (invsol) [process, below of=etype2, align=center, yshift=-2cm, minimum width=7cm, text width=7cm] {Replace $e$ by $\einv$ as per Prop.~\ref{prop:enforce_invariance}. For large $|\G|$, use Monte Carlo sampling an increase $\n{inv}$ until the desired $\Inv_{\G}[\einv, x]$ is achieved.};
		
		\node (stop) [startstop, below of=invsol, align=center, yshift=-1cm] {You should have a robust explanation by now. \\ If you wish to relax model invariance, remember to re-evaluate all metrics since the robustness of \\ interpretability method for an invariant model \emph{does not} imply its robustness of an approximately invariant model.};

		\draw [->] (start) -- (etype);
		\draw [->] (etype) -- node[anchor=east, xshift=-1cm] {Feature Importance} (featimp);
		\draw [->] (etype) -- node[anchor=east] {Example Importance} (eximp);
		\draw [->] (etype) -- node[anchor=west, xshift=1cm] {Concept-Based} (concbas);
		\draw [->] (featimp) -- (erob1);
		\draw [->] (eximp) -- (erob2);
		\draw [->] (concbas) -- (erob3);
		\draw [->] (erob1) -- node[anchor=east] {No} (featsol);
		\draw [->] (erob2) -- node[anchor=east] {No} (etype2);
		\draw [->] (erob3) -- node[anchor=west] {No} (repsol);
		\draw [->] (etype2) -- node[anchor=north] {Yes} (repsol);
		\draw [->] (etype2) -- node[anchor=east] {No} (invsol);
		\draw [<-] (stop.west) -| ++(-1,11.5) -| node[anchor=north, xshift=-10]{Yes} (erob1.west);
		\draw [<-] (stop.east) -| ++(1,11.5) -| node[anchor=north, xshift=10]{Yes} (erob3.east);
		\draw[->] (erob2.west)  node[anchor=north, xshift=-10]{Yes} -| ++(-2,0) -| ([xshift=-114]stop.north);
		\draw[->] (invsol) -- (stop);
		\draw[->] (featsol) -- ([xshift=-6cm]stop.north);
		\draw[->] (repsol) -- ([xshift=6cm]stop.north);
	\end{tikzpicture}}
	\caption{Our guideline to improve the robustness of interpretability methods with respect to model symmetries.}
	\label{fig:flowchart}
\end{figure*}
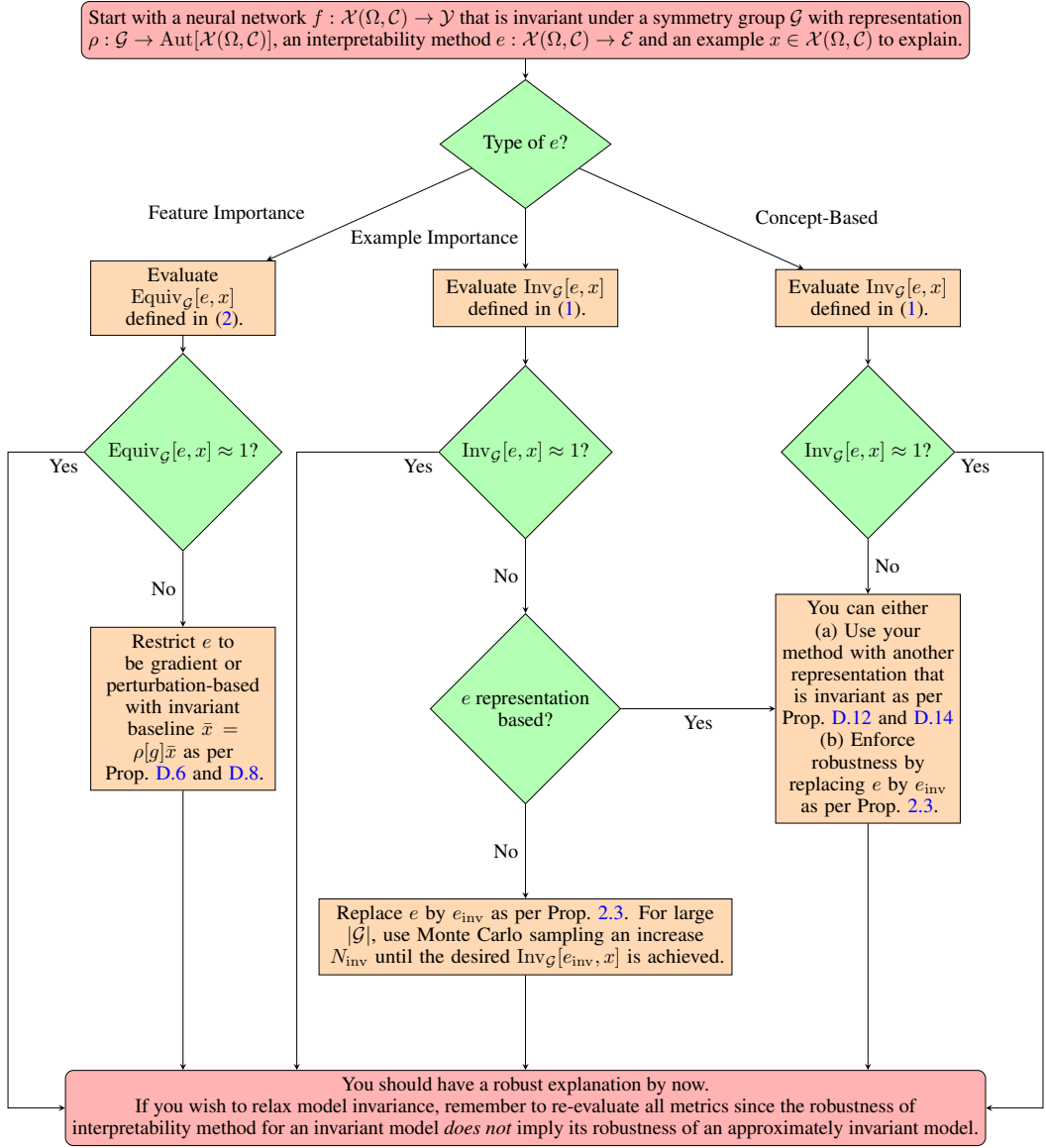

To show how these guidelines can be used in practice, we illustrate two possible categories based on our experiments from \cref{sec:experiments}.
Let us start with Influence Functions and the ECG dataset. This is an example importance method, hence we choose the central branch from \cref{fig:flowchart}. Measuring the invariance in \cref{subfig:example_importance_robustness} shows that $\Inv_{\G}[e, x] \approx 1$, hence we may jump to the terminal node of the flowchart. This is correct since Influence Functions are $\G$-invariant as demonstrated in \cref{prop:loss_based_inv}.

Now let us take the example of CAV used with the equivariant layer from the Deep Set for the ModelNet40 dataset. This is a concept importance method, hence we choose the right branch of \cref{fig:flowchart}. Measuring the invariance in \cref{subfig:concept_importance_robustness} shows that $\Inv_{\G}[e, x] < 1 $, hence we go down. As the flowchart recommends, we may decide to use the explainability method on an invariant layer instead. Doing this yields $\Inv_{\G}[e, x] \approx 1$, as shown in \cref{subfig:concept_importance_robustness}. Hence we may jump to the terminal node of the flowchart. This is correct since concept important methods used with invariant layers are also $\G$-invariant, as demonstrated in \cref{prop:concept_based_inv}.
	
	\section{Contribution and Hope for Impact} \label{appendix:hope_impact}
	In this appendix, we discuss the significance of our work. We should firstly emphasize the importance of the problem we are tackling in this work. In order to provide valuable explanations of a model, it is crucial to ensure that interpretability methods faithfully describe the explained model. Indeed, failing in this basic criterion implies that the explanations could be inconsistent with the true model behaviour, hence leading to false insights about the model. For this reason, we believe that guaranteeing an alignment between interpretability methods and the model is a problem of the utmost importance. Beyond the significance of this problem, we believe that we bring a substantial contribution to address it. Below, we enumerate the novelties we claim.

\textbf{Connecting interpretability with geometric deep learning.} In Sections.~\ref{sec:interpretability_robustness}, we show that the formalism of geometric deep learning naturally extends to the description of interpretability methods. We believe that this bridge can be valuable for at least two communities. For the interpretability community, geometric deep learning provides a rigorous framework to ensure that interpretations are consistent with strong inductive biases that underpin cutting edge deep neural networks, such as GNNs, CNNs, Deep Sets, Transformers and the other examples cited in Appendix~\ref{appendix:other_inv}. For the geometric deep learning community, interpretability provides a way to extract actionable insights from increasingly sophisticated architectures. We hope that our paper promotes collaborations between these two communities.

\textbf{Proving that explanation robustness imposes restriction on interpretability methods.} By using our formalism derived from geometric deep learning, we show  in Appendix~\ref{appendix:theoretical_results} that enforcing equivariance or invariance for interpretability methods imposes strong restrictions. For instance, gradient-based attribution methods that rely on a baseline signal $\bar{x} \in \mathcal{X}(\Omega, \mathcal{C})$ are equivariant if this baseline signal is invariant: $\rho[g] \bar{x} = \bar{x}$. We provide a similar analysis for 3 popular types of interpretability methods (feature importance, example importance and concept-based explanations). We hope that these insights will guide the future development of interpretability methods tailored for cutting edge deep learning models, such as GNNs.

\textbf{Introducing two well-defined and sensible robustness metrics.} Beyond adapting invariance and equivariance to interpretability methods, we provide a principled way to evaluate those properties in Section~\ref{sec:interpretability_robustness} with two robustness metrics. Since these metrics cannot always be computed exactly (e.g. for groups $\mathcal{G}$ with large cardinalities), we show in Appendix~\ref{appendix:monte_carlo} that Monte Carlo approximations provide sensible approximations, both theoretically by adapting Hoeffding's inequality and empirically in the setup from Section~\ref{sec:experiments}.

\textbf{Demonstrating empirically that not all interpretability methods are created equal.} We have performed extensive experiments on 6 different datasets, corresponding to 4 distinct modalities and model architectures, with 12 different interpretability methods. These experiments show that some interpretability methods (such as GradientShap and Feature Permutation) consistently fail to be robust with respect to symmetries of the model. We believe that these results are not trivial and should impact the way we use these methods to explain deep neural networks.

Through these contributions, we hope to reinforce the 3 below statements.

\begin{enumerate}
	\item Interpretability should be used with skepticism.
	\item Not all interpretability methods are created equal.
	\item Robustness can guide the design of interpretability methods.
\end{enumerate}

	\section{Limitations} \label{appendix:limitations}
	In this appendix, we discuss the main limitations of this work. As a first limitation, we would like to acknowledge the fact that our robustness metrics have no pretention to provide a holistic evaluation approach for interpretability methods. We believe that evaluating the robustness of interpretability methods requires a combination of several evaluation criteria. To make this point more clear, let us take inspiration from the Model Robustness literature. By looking at reviews on this subject (see e.g. Section 6.3 of~\cite{Zhang2022}), we notice that various notions of robustness exist for machine learning models. To name just a few, existing works have investigated the robustness of neural networks with respect to noise~\cite{Tjeng2017}], adversarial perturbations~\cite{Ruan2018} and label consistency across clusters~\cite{Gopinath2017}. All of these notions of robustness come with their own metrics and assessment criteria. In a similar way, we believe that the robustness of interpretability methods should be assessed on several complementary dimensions. Our work contributes to this initiative by adding one such dimension.

As a second limitation, our robustness metrics are mostly useful for models that are (approximately) invariant with respect to a given symmetry group. That being said, we would like to emphasize that our robustness metrics can still be used to characterize the equivariance/invariance of explanations even if the model is not perfectly invariant. The metrics would still record the same information, even if the invariance of the model is relaxed. The main thing to keep in mind when using these metrics with models that are not perfectly invariant is that the explanations themselves should not be exactly invariant/equivariant. That said, for a model that is approximately invariant, we expect a faithful explanation to keep a high invariance / equivariance score. This is precisely the object of \cref{subsec:relaxing_invariance}.

	\section{Theoretical Results} \label{appendix:theoretical_results}
	In this appendix, we prove all the theoretical results mentioned in the main paper. We start by deriving robustness guarantees mentioned in Table~\ref{tab:guarantees}. We then prove that any method can be made $\G$-invariant by aggregating the explanation over several symmetries.

\subsection{Feature Importance Guarantees}

Let us start by feature importance methods. As we are going to see, the robustness guarantees in this case typically require us to restrict the type of group representation $\rho$ we use to encode the action of the symmetry group $\G$ on the signal space $\sigspace$. This motivates the two following types of representations that can be found in group theory textbooks (see e.g. \cite{Robinson1996}).

\begin{definition}[Orthogonal Representation]
	Let $\rho : \G \rightarrow \mathrm{Aut}[\sigspace]$ be a representation of the group $\G$, and let $d \in \N^+$ denote the dimension of the signal space $\sigspace$. We say that the representation is an \emph{orthogonal representation} if its image is a subset on the orthogonal matrices acting on $\sigspace$: $\rho(\G) \subseteq O(d)$, where $O(d)$ denotes the set of $d \times d$ orthogonal real matrices. This is equivalent to $\rho[g] \rho^{\intercal}[g] = \rho^{\intercal}[g] \rho[g] = I_{d}$ for all $g \in \G$, where $I_d$ denotes the $d \times d$ identity matrix.
\end{definition}

\begin{definition}[Permutation Representation]
	Let $\rho : \G \rightarrow \mathrm{Aut}[\sigspace]$ be a representation of the group $\G$, and let $d \in \N^+$ denote the dimension of the signal space $\sigspace$. We say that the representation is a \emph{permutation representation} if its image is a subset on the permutation matrices acting on $\sigspace$: $\rho(\G) \subseteq P(d)$, where $P(d)$ denotes the set of $d \times d$ permutation matrices. This means that for all $g \in \G$, there exists some permutation $\pi \in S(d)$ such that we can write  $(\rho[g]x)_i = x_{\pi(i)}$ for all $i \in \Z_d$, $x \in \sigspace$.
\end{definition}
\begin{remark}
	One can easily check that all permutation representations are also orthogonal representations. The opposite is not true.
\end{remark}

\subsubsection{Gradient-Based}

We shall now begin with gradient-based feature importance methods. These methods compute the model's gradient with respect to the input features to compute the importance scores. Hence, it is useful to characterize how this gradient transforms under the action of the symmetry group $\G$.

\begin{lemma}[Gradient Transformation] \label{lemma:gradient_tfo}
	Consider a differentiable function $f: \sigspace \rightarrow \Y$ that is invariant with respect to the symmetry group $\G$. We assume that $\G$ acts on $\sigspace$ via the representation $\rho : \G \rightarrow \mathrm{Aut}[\sigspace]$. If for some $g \in \G$ we define $x' = \rho[g] x$, then we have the following identity:
	\begin{align} \label{equ:gradient_tfo}
		\nabla_{x'}f(x') = \rho^{-1, \intercal}[g] \nabla_x f(x),
	\end{align}
	where $\rho^{-1, \intercal}[g]$ denotes the matrix obtained by applying an inversion followed by a transposition to the matrix $\rho[g]$.
\end{lemma}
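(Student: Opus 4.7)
The plan is to reduce the lemma to a direct application of the chain rule applied to the invariance identity $f(\rho[g]x) = f(x)$. Since $\rho[g]$ is a linear automorphism of $\sigspace$ (represented by an invertible matrix), differentiating a composition of $f$ with $\rho[g]$ is mechanical, and no deep machinery is required beyond bookkeeping of transposes and inverses.

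First I would start from the $\G$-invariance hypothesis: for every $g \in \G$ and every $x \in \sigspace$,
\begin{align*}
f(\rho[g] x) = f(x).
\end{align*}
Next, I would take the gradient of both sides with respect to $x$. The right-hand side simply yields $\nabla_x f(x)$. For the left-hand side, I would introduce the intermediate variable $x' = \rho[g] x$ and apply the chain rule: since $x \mapsto \rho[g] x$ is linear with Jacobian matrix $\rho[g]$, the chain rule gives
\begin{align*}
\nabla_x \bigl[ f(\rho[g] x) \bigr] = \rho^{\intercal}[g] \, \nabla_{x'} f(x').
\end{align*}
Combining these two expressions, I obtain $\rho^{\intercal}[g] \, \nabla_{x'} f(x') = \nabla_x f(x)$.

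Finally, I would solve for $\nabla_{x'} f(x')$ by multiplying both sides on the left by $(\rho^{\intercal}[g])^{-1}$. Using the standard identity $(\rho^{\intercal}[g])^{-1} = (\rho^{-1}[g])^{\intercal} = \rho^{-1, \intercal}[g]$, which is legitimate because $\rho[g] \in \mathrm{Aut}[\sigspace]$ is invertible, I recover the claimed identity \eqref{equ:gradient_tfo}.

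The calculation is essentially routine, so I do not anticipate any substantive obstacle; the only subtle point is being careful about the order of transpose and inverse and about where the gradient is evaluated (at $x$ versus at $x' = \rho[g] x$). One minor comment worth including in the write-up is that, in the common special case where $\rho$ is an orthogonal representation (and a fortiori when it is a permutation representation, as required later for the equivariance guarantees of feature importance methods), one has $\rho^{-1, \intercal}[g] = \rho[g]$, so the gradient transforms under exactly the same representation as the input signal — a fact that will be used in the subsequent propositions.
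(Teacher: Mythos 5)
Your proof is correct and follows essentially the same route as the paper's: both are a chain-rule computation applied to the invariance identity $f(\rho[g]x)=f(x)$, differing only in that you differentiate with respect to $x$ and invert $\rho^{\intercal}[g]$ at the end, whereas the paper writes $x=\rho^{-1}[g]x'$ and differentiates with respect to $x'$ directly. The bookkeeping of where each gradient is evaluated and the identity $(\rho^{\intercal}[g])^{-1}=\rho^{-1,\intercal}[g]$ are handled correctly, so there is no gap.
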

\begin{proof}
	We start by noting that the $\G$-invariance of $f$ implies that $f(x') = f(x)$ and, hence:
	\begin{align} \label{equ:gradient_step1}
		\nabla_{x'}f(x') = \nabla_{x'}f(x)
	\end{align}
	 It remains to establish the ling between $\nabla_{x'}$ and $\nabla_{x}$. To this end, we simply note that $x = \rho^{-1}[g] x'$. Hence, for all $i \in \Z_d$ with $d = \dim[\sigspace]$, we have that 
	\begin{align*}
		x_i = \sum_{j=1}^d \rho^{-1}_{ij}[g] x'_j.
	\end{align*} 
	Hence, by using the chain rule, we deduce that for all $k \in \Z_d$:
	\begin{align*}
		\pdev{}{x'_k} &= \sum_{i=1}^d \pdev{x_i}{x'_k} \pdev{}{x_i} \\
		&= \sum_{i=1}^d \rho^{-1}_{ik}[g] \pdev{}{x_i}\\
		&= \sum_{i=1}^d \rho^{-1, \intercal}_{ki}[g] \pdev{}{x_i}.
	\end{align*} 
	The above identity implies $\nabla_{x'} = \rho^{-1, \intercal}[g] \nabla_x$. By injecting this to the right-hand side of \eqref{equ:gradient_step1}, we obtain \eqref{equ:gradient_tfo}.
\end{proof}

The simplest gradient-based attribution is simply given by the gradient itself~\cite{Simonyan2014}. We refer to it as the vanilla saliency feature importance. Although this attribution method is a bit naive, we may still deduce an equivariance guarantee from the previous proposition.

\begin{corollary}[Equivariance of Vanilla Saliency] \label{corollary:saliency_equiv}
	Consider a differentiable neural network $f: \sigspace \rightarrow \Y$ that is invariant with respect to the symmetry group $\G$. We assume that $\G$ acts on $\sigspace$ via the representation $\rho : \G \rightarrow \mathrm{Aut}[\sigspace]$. We consider a vanilla saliency feature importance explanation $e(x) = \nabla_x f(x)$. If the representation $\rho$ is orthogonal, then the explanation $e$ is $\G$-equivariant.
\end{corollary}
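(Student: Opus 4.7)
The plan is to combine \cref{lemma:gradient_tfo} with the defining property of an orthogonal representation. First I would set $x' = \rho[g] x$ for an arbitrary $g \in \G$ and $x \in \sigspace$, and invoke \cref{lemma:gradient_tfo} to get the identity $e(\rho[g] x) = \nabla_{x'} f(x') = \rho^{-1,\intercal}[g] \nabla_x f(x) = \rho^{-1,\intercal}[g] e(x)$. This step is essentially a direct application of the lemma and requires no additional work beyond unpacking the definition $e(x) = \nabla_x f(x)$.

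Next I would use the hypothesis that $\rho$ is an orthogonal representation. By definition this means $\rho[g] \rho^{\intercal}[g] = I_d$, which can be rewritten as $\rho^{\intercal}[g] = \rho^{-1}[g]$, and hence also $\rho^{-1,\intercal}[g] = \rho[g]$. Substituting into the previous display immediately gives $e(\rho[g] x) = \rho[g] e(x)$, which is precisely the $\G$-equivariance property with output representation $\rho' = \rho$ (which makes sense since the saliency map lives in the signal space itself, $\E = \sigspace$).

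There is no real obstacle here: the argument is a two-line chain once \cref{lemma:gradient_tfo} is in hand. The only subtlety worth flagging explicitly is that equivariance is obtained with the same representation $\rho$ acting on both the input and the explanation space, which is what one expects for feature importance methods; this relies crucially on orthogonality, since without it the transformation law would involve $\rho^{-1,\intercal}$ rather than $\rho$, and the saliency map would transform under the so-called dual or contragredient representation rather than $\rho$ itself.
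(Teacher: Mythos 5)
Your argument is correct and is essentially identical to the paper's own proof: both apply \cref{lemma:gradient_tfo} to obtain $e(\rho[g]x) = \rho^{-1,\intercal}[g]\, e(x)$ and then use orthogonality to identify $\rho^{-1,\intercal}[g]$ with $\rho[g]$. Your closing remark that without orthogonality the saliency map would transform under the contragredient representation is a nice clarification, but the core reasoning matches the paper exactly.
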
 
\begin{proof}
	From Lemma~\ref{lemma:gradient_tfo}, we have that $e(\rho[g]x) =  \rho^{-1, \intercal}[g] e(x)$ for all $g \in \G$. Now if $\rho$ is orthogonal, we note that $\rho^{-1, \intercal}[g] = \rho[g]$, which proves the proposition.
\end{proof}

We now turn to a more general family of gradient-based feature importance methods. These methods attribute importance to each feature by aggregating gradients over a line in the input space $\sigspace$ connecting a baseline example $\bar{x} \in \sigspace$ with the example $x \in \sigspace$ we wish to explain. It was shown that the choice of this baseline signal has a significant impact on the resulting explanation~\cite{Sturmfels2020}. In the following proposition, we show that enforcing equivariance imposes a restriction on the type of baseline signal $\bar{x}$ that can be used in practice. 

\begin{proposition}[Gradient-Based Equivariance] \label{prop:gradient_based_equiv}
	Consider a differentiable neural network $f: \sigspace \rightarrow \Y$ that is invariant with respect to the symmetry group $\G$. We assume that $\G$ acts on $\sigspace$ via the representation $\rho : \G \rightarrow \mathrm{Aut}[\sigspace]$. We consider a gradient-based explanation built upon a baseline signal $\bar{x} \in \sigspace$ and of the form
	\begin{align*}
		e(x) =  (x - \bar{x}) \odot  \int_{0}^1 \varphi(t) \ \nabla_x f([\bar{x} + t( x - \bar{x})]) \ dt,
	\end{align*}
	where $\odot$ denotes the Hadamard product and $\varphi$ is a functional defined on the Hilbert space $L^2([0,1])$. If $\rho$ is a permutation representation and the baseline signal is $\G$-invariant, i.e. $\rho[g]\bar{x} = \bar{x}$ for all $g \in \G$, then the explanation $e$ is $\G$-equivariant.  
\end{proposition}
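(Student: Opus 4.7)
The plan is to show directly that substituting $\rho[g]x$ for $x$ in the formula for $e$ yields $\rho[g]e(x)$, by exploiting three properties in turn: $\G$-invariance of the baseline, the gradient transformation law from Lemma~\ref{lemma:gradient_tfo}, and the compatibility of permutation matrices with the Hadamard product.

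First I would rewrite the outer factor. Since $\rho[g]\bar{x}=\bar{x}$ and $\rho[g]$ is linear, we have $\rho[g]x-\bar{x}=\rho[g]x-\rho[g]\bar{x}=\rho[g](x-\bar{x})$. Next I would simplify the argument of $f$ in the integrand in the same fashion: $\bar{x}+t(\rho[g]x-\bar{x})=\rho[g]\bar{x}+t\rho[g](x-\bar{x})=\rho[g]\bigl(\bar{x}+t(x-\bar{x})\bigr)$. Now I can apply Lemma~\ref{lemma:gradient_tfo} pointwise in $t$ to the $\G$-invariant function $f$: writing $y_t\equiv\bar{x}+t(x-\bar{x})$, the lemma yields $\nabla_x f(\rho[g] y_t)=\rho^{-1,\intercal}[g]\nabla_x f(y_t)$. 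Because a permutation representation is a special case of an orthogonal representation, $\rho^{-1,\intercal}[g]=\rho[g]$, so the integrand becomes $\varphi(t)\,\rho[g]\nabla_x f(y_t)$. Pulling the constant matrix $\rho[g]$ outside the Bochner integral then leaves the entire second factor equal to $\rho[g]\int_0^1 \varphi(t)\nabla_x f(y_t)\,dt$.

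At this point we are left with
\begin{align*}
e(\rho[g]x)=\bigl(\rho[g](x-\bar{x})\bigr)\odot\Bigl(\rho[g]\int_0^1\varphi(t)\nabla_x f(y_t)\,dt\Bigr).
\end{align*}
The final step, and the main conceptual obstacle, is to move $\rho[g]$ out of the Hadamard product. For an arbitrary orthogonal matrix this would fail, which is precisely why the proposition assumes a permutation representation: if $(\rho[g]x)_i=x_{\pi(i)}$ for some $\pi\in S_d$, then $\bigl(\rho[g]a\bigr)_i\bigl(\rho[g]b\bigr)_i=a_{\pi(i)}b_{\pi(i)}=(a\odot b)_{\pi(i)}=\bigl(\rho[g](a\odot b)\bigr)_i$, so $(\rho[g]a)\odot(\rho[g]b)=\rho[g](a\odot b)$. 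Applying this identity with $a=x-\bar{x}$ and $b$ the integral factor gives $e(\rho[g]x)=\rho[g]e(x)$, which is exactly $\G$-equivariance (with $\rho'=\rho$).

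I expect the only subtlety to be flagging clearly why the permutation assumption is doing real work: orthogonality alone suffices to handle the gradient transformation (via $\rho^{-1,\intercal}=\rho$), but the Hadamard structure of feature-wise importance scores requires the stronger permutation hypothesis. Everything else is a routine bookkeeping of linearity, the chain-rule identity already packaged in Lemma~\ref{lemma:gradient_tfo}, and the $\G$-invariance of $\bar{x}$.
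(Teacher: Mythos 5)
Your proof is correct and follows essentially the same route as the paper's: exploit invariance of $\bar{x}$ to factor $\rho[g]$ out of both the difference $x-\bar{x}$ and the argument of $f$, use the gradient transformation law plus orthogonality of permutation matrices to pull $\rho[g]$ through the gradient and out of the integral, and finish with the identity $(\rho[g]a)\odot(\rho[g]b)=\rho[g](a\odot b)$ for permutation representations. The only cosmetic difference is that you invoke Lemma~\ref{lemma:gradient_tfo} directly where the paper routes through Corollary~\ref{corollary:saliency_equiv}, which packages exactly the same two facts.
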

\begin{remark}
	Note that we have introduced the functional $\varphi$ to make the class of explanation as general as possible. For instance, we obtain exact Integrated Gradients for $\varphi(t) = 1$ and Input*Gradient~\cite{Shrikumar2017} for $\varphi(t) = \delta(t-1)$, where $\delta$ is a Dirac delta distribution. Similarly, this includes discrete approximations of Integrated Gradients by taking e.g. $\varphi(t) = \sum_{n=1}^N \delta(t- t_n)$, with $t_n = \bar{x} + \frac{n}{N}(x - \bar{x})$ for all $n \in \Z_n$. Finally, we note that the equivariance of Integrated Gradients also implies the equivariance of Expected Gradients~\cite{Erion2021}.
\end{remark}
\begin{proof}
	For all $g \in \G$, we have that:
	\begin{align*}
		e(\rho[g]x) &= (\rho[g]x - \bar{x}) \odot \int_{0}^1 \varphi(t) \ \nabla_x f(\bar{x} + t [\rho[g] x - \bar{x}]) \ dt &\\
		&= \rho[g](x - \bar{x}) \odot \int_{0}^1 \varphi(t) \ \nabla_x f(\rho[g] [\bar{x} + t( x - \bar{x})]) \ dt &(\text{Invariance of }\bar{x})\\
		&= \rho[g] (x - \bar{x}) \odot \rho[g]  \int_{0}^1 \varphi(t) \ \nabla_x f(\bar{x} + t[ x - \bar{x}]) \ dt &(\text{Corollary~\ref{corollary:saliency_equiv}}).
	\end{align*}
	Since $\rho$ is a permutation representation, there exists a permutation $\pi \in S(d)$, where $d = \dim[\sigspace]$ , such that for all $a,b \in \sigspace$ and $i \in \Z_d$:
	\begin{align*}
		(\rho[g] a \odot \rho[g] b)_i &= (\rho[g] a)_i \odot (\rho[g] b)_i &(\text{Definition of Hadamard product})\\ 
		&= a_{\pi(i)} \odot b_{\pi(i)} &(\rho \text{ is a permutation representation})\\
		&= (a \odot b)_{\pi(i)} &(\text{Definition of Hadamard product})\\ 
		&= (\rho[g](a \odot b))_i &(\rho \text{ is a permutation representation}). 
	\end{align*}
	We deduce that $\rho[g] a \odot \rho[g] b =  \rho[g](a \odot b)$. By applying this to the above equation for $e(\rho[g]x)$, we get:
	\begin{align*}
		e(\rho[g]x) &= \rho[g] (x - \bar{x}) \odot \rho[g]  \int_{0}^1 \varphi(t) \ \nabla_x f(\bar{x} + t[x - \bar{x}]) \ dt\\
		&= \rho[g] \left((x - \bar{x}) \odot \int_{0}^1 \varphi(t) \ \nabla_x f(\bar{x} + t[x - \bar{x}]) \ dt\right)\\
		&= \rho[g] e(x),
	\end{align*}
	which proves the equivariance property.
\end{proof}

\subsubsection{Perturbation-Based}

The second type of feature importance methods we consider are perturbation-based methods. These methods attribute importance to each feature by measuring the impact of replacing some features of the example $x \in \sigspace$ with features from a baseline example $\bar{x} \in \sigspace$ on the model's prediction. Again, enforcing equivariance imposes a restriction on the type of baseline that can be manipulated. 

\begin{proposition}[Perturbation-Based Equivariance] \label{prop:perturbation_based_equiv}
	Consider a neural network $f: \sigspace \rightarrow \Y$ that is invariant with respect to the symmetry group $\G$. We assume that $\G$ acts on $\sigspace$ via the representation $\rho : \G \rightarrow \mathrm{Aut}[\sigspace]$. We consider a perturbation-based explanation built upon a baseline signal $\bar{x} \in \sigspace$ and of the form
	\begin{align*}
		[e(x)]_i = f(x) - f(r_i(x)),
	\end{align*}
	for all $i \in \Z_d$, where $d = \dim[\sigspace]$. The perturbation operator $r_i$ replaces feature $x_i, i \in \Z_d$ with the baseline feature $\bar{x}_i$. It is defined as follows: $[r_i(x)]_j = x_j + \delta_{ij} (\bar{x}_i - x_i)$, where $\delta$ denotes the Kronecker delta symbol, for all $j \in \Z_d$ and $x \in \sigspace$. If $\rho$ is a permutation representation and the baseline signal is $\G$-invariant, i.e. $\rho[g]\bar{x} = \bar{x}$ for all $g \in \G$, then the explanation $e$ is $\G$-equivariant.
\end{proposition}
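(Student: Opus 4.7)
The plan is to mirror the structure of the proof of Proposition~\ref{prop:gradient_based_equiv}: exploit $\G$-invariance of $f$ together with the two structural assumptions (permutation representation and invariant baseline) to show that the perturbation operator $r_i$ interacts with $\rho[g]$ in a clean way. Concretely, I want to prove the key intertwining identity
\begin{align*}
r_i(\rho[g]x) \;=\; \rho[g]\, r_{\pi(i)}(x)
\end{align*}
where $\pi \in S(d)$ is the permutation associated with $g$ through the representation $\rho$. Granted this, $\G$-invariance of $f$ immediately yields $f(r_i(\rho[g]x)) = f(r_{\pi(i)}(x))$, and combined with $f(\rho[g]x)=f(x)$ the equivariance will follow by reading off indices.

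First I would fix $g \in \G$ and let $\pi \in S(d)$ be the permutation such that $(\rho[g]y)_j = y_{\pi(j)}$ for every $y \in \sigspace$ and $j \in \Z_d$. Then I would compute the $j$-th component of $r_i(\rho[g]x)$ directly from the definition of $r_i$:
\begin{align*}
[r_i(\rho[g]x)]_j \;=\; (\rho[g]x)_j + \delta_{ij}\bigl(\bar{x}_i - (\rho[g]x)_i\bigr) \;=\; x_{\pi(j)} + \delta_{ij}\bigl(\bar{x}_i - x_{\pi(i)}\bigr).
\end{align*}
Next I would compute the $j$-th component of $\rho[g]r_{\pi(i)}(x)$ and show it equals the expression above. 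This is where the two structural assumptions come in: the identity $\delta_{\pi(i),\pi(j)} = \delta_{ij}$ (because $\pi$ is a bijection) absorbs the permutation in the Kronecker delta, and the assumption $\rho[g]\bar{x}=\bar{x}$ together with the permutation-representation property gives $\bar{x}_{\pi(i)} = \bar{x}_i$, which allows the baseline term to be pulled through.

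Having established the intertwining identity, the conclusion follows in one line: for every $i \in \Z_d$,
\begin{align*}
[e(\rho[g]x)]_i \;=\; f(\rho[g]x) - f(r_i(\rho[g]x)) \;=\; f(x) - f(\rho[g] r_{\pi(i)}(x)) \;=\; f(x) - f(r_{\pi(i)}(x)) \;=\; [e(x)]_{\pi(i)} \;=\; (\rho[g]e(x))_i,
\end{align*}
where the second equality uses $\G$-invariance of $f$ on the first term and the intertwining identity on the second, the third equality applies $\G$-invariance once more, and the last step is just the definition of the permutation representation applied to $e(x)$. The main obstacle, though routine, is the careful index bookkeeping in the intertwining step: one must not conflate the roles of $i$ (the coordinate of the explanation) and $j$ (the coordinate of the signal), and one must invoke each of the two hypotheses (permutation representation, baseline invariance) at exactly the right place so that both the Kronecker delta and the baseline entry survive the permutation unchanged.
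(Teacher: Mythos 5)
Your proposal is correct and follows essentially the same route as the paper's proof: both establish the intertwining identity $r_i(\rho[g]x) = \rho[g]\, r_{\pi(i)}(x)$ componentwise, using $\delta_{\pi(i)\pi(j)} = \delta_{ij}$ and $\bar{x}_{\pi(i)} = \bar{x}_i$ exactly where you indicate, and then conclude via the $\G$-invariance of $f$ and the definition of the permutation representation. No gaps; your explicit tracking of $f(\rho[g]x)$ before invoking invariance is, if anything, slightly more careful than the paper's write-up.
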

\begin{proof}
	For all $g \in \G$ and $i, j \in \Z_d$, there exists a permutation $\pi \in S(d)$ such that:
	\begin{align*}
		[r_i(\rho[g]x)]_j &= x_{\pi(j)} + \delta_{ij}(\bar{x}_i - x_{\pi(i)}) &(\rho \text{ is a permutation representation})\\
		&= x_{\pi(j)} + \delta_{ij}(\bar{x}_{\pi(i)} - x_{\pi(i)}) &(\bar{x} \text{ is invariant})\\
		&= [r_{\pi(i)}(x)]_{\pi(j)} &\\
		&= [\rho[g] r_{\pi(i)}(x)]_j &(\rho \text{ is a permutation representation}).
	\end{align*}
	We deduce that $r_i(\rho[g]x) = \rho[g] r_{\pi(i)}(x)$ for all $i \in \Z_d$. We are now ready to conclude as:
	\begin{align*}
		[e(\rho[g]x)]_i &= f(x) - f(r_i(\rho[g]x)) &(\text{Definition of } e)\\
		&= f(x) - f(\rho[g] r_{\pi(i)}(x)) &(\text{Above identity})\\
		&= f(x) - f(r_{\pi(i)}(x)) &(\text{Invariance of } f)\\
		&= [e(x)]_{\pi(i)} &(\text{Definition of } e)\\
		&=  [\rho[g] e(x)]_i &(\rho \text{ is a permutation representation}),\\
	\end{align*}
	which proves the equivariance property.
\end{proof}

\subsection{Example Importance Guarantees}
We proceed with example importance methods. 

\subsubsection{Loss-Based}

We start with loss-based methods. These methods attribute importance to each training example of $(x^n, y^n) \in \Dtrain$ by comparing the loss $\loss(f(x^n), y^n)$ with the loss $\loss(f(x), y)$ of the example $x \in \sigspace$ we wish to explain. We show that these methods are naturally invariant without imposing any restriction on the representation $\rho$.

\begin{proposition}[Loss-Based Invariance] \label{prop:loss_based_inv}
	Consider a differentiable neural network $f_{\theta}: \sigspace \rightarrow \Y$, parametrized by $P \in \N^+$ parameters $\theta \in \R^P$, that is invariant with respect to the symmetry group $\G$. We assume that $\G$ acts on $\sigspace$ via the representation $\rho : \G \rightarrow \mathrm{Aut}[\sigspace]$. We consider an example importance explanation based on the loss $\loss: \sigspace \times \Y \rightarrow \R^+$ and of the form
	\begin{align*}
		e(x, y) =  \functional[\loss(f_{\theta}(x), y)],
	\end{align*}
	where $\functional$ maps any function $l: \R^P \rightarrow \R^+$ to a vector in $\functional[l] \in \R^{\n{train}}$, with $\n{train} \in \N^+$ corresponding to the number of training examples for which we evaluate the importance. The explanation $e$ is $\G$-invariant.
\end{proposition}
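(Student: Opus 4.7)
The plan is to chase the $\G$-invariance of $f_{\theta}$ through the definition of $e$ in two short steps. First I would fix an arbitrary $g \in \G$, an arbitrary pair $(x,y) \in \sigspace \times \Y$, and compare the two quantities $e(\rho[g]x, y)$ and $e(x,y)$. By the assumed $\G$-invariance of the model, for every value of the parameter vector $\theta \in \R^P$ we have $f_{\theta}(\rho[g]x) = f_{\theta}(x)$, hence
\begin{align*}
\loss\bigl(f_{\theta}(\rho[g]x), y\bigr) = \loss\bigl(f_{\theta}(x), y\bigr).
\end{align*}
This is the one content-bearing line of the argument.

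Next I would emphasize that the above is an equality of two functions of $\theta$, not merely of two scalar values at a fixed $\theta$: both sides, viewed as maps $\R^P \to \R^+$, agree pointwise and are therefore equal as elements of the domain on which $\functional$ acts. Applying $\functional$ to both sides then yields $\functional[\loss(f_{\theta}(\rho[g]x), y)] = \functional[\loss(f_{\theta}(x), y)]$, which by the definition of $e$ is exactly $e(\rho[g]x, y) = e(x,y)$. Since $g$ was arbitrary, $e$ is $\G$-invariant.

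The ``main obstacle'' here is really only a conceptual one rather than a computational one: one must be careful that $\functional$ is defined to act on the loss \emph{as a function of the parameters $\theta$} (so that the construction covers influence functions, TracIn, and related methods, which look at gradients or trajectories in $\theta$-space), and that the model invariance $f_{\theta}(\rho[g]x) = f_{\theta}(x)$ holds uniformly in $\theta$, not just at the trained parameter value. Once this is made explicit, the proof reduces to a single substitution and no further structure of $\functional$, $\loss$, or $\rho$ is needed --- in particular, unlike the feature-importance results, we impose no condition on $\rho$ being a permutation or orthogonal representation, nor on any baseline signal.
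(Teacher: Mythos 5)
Your proposal is correct and follows essentially the same route as the paper's proof: substitute $\rho[g]x$ into the definition of $e$, use the model's $\G$-invariance to identify the two loss functions, and apply $\functional$. Your added remark that the equality $f_{\theta}(\rho[g]x) = f_{\theta}(x)$ must hold for all $\theta$ (so that the two losses agree as \emph{functions} of $\theta$ before $\functional$ is applied) is a worthwhile clarification that the paper leaves implicit, but it does not change the argument.
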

\begin{remark}
	We note that $\functional$ typically contains differential operators. For instance, Influence Functions are obtained by taking
	\begin{align*}
		(\functional[l])_n = \nabla_{\theta}^{\intercal} \loss(f_{\theta}(x^n), y^n) \ H^{-1}_{\theta} \ \nabla_{\theta}l(\theta),
	\end{align*}
	for $n \in \Z_{\n{train}}$, where $(x^n, y^n) \in \Dtrain$ is a training example and $H_{\theta} \in \R^{P \times P}$ is the Hessian of the training loss with respect to the model's parameters.
\end{remark}
\begin{remark}
	We note that the dependency of the explanation $e$ with respect to the label $y \in \Y$ is omitted in the main paper. The reason for this is that the symmetry group $\G$ only acts on the input signal $x$.
\end{remark}
\begin{proof}
	The proposition can directly be deduced from the $\G$-invariance of the model. For any $g \in \G$, we have:
	\begin{align*}
		e(\rho[g]x, y) &= \functional[\loss(f_{\theta}(\rho[g] x), y)] &\\
		&= \functional[\loss(f_{\theta}(x), y)] & (\text{Invariance of } f)\\
		&= e(x, y), &
	\end{align*}
	which proves the desired property. 
\end{proof}

\subsubsection{Representation-Based}

We proceed with representation-based methods. These methods attribute importance to each training example of $(x^n, y^n) \in \Dtrain$ by comparing the model's representation $h(x^n)$ (typically the output of a model's layer)  with the representation  $h(x)$ of the example $x \in \sigspace$ we wish to explain. We show that these methods are invariant if we restrict to representations $h$ that are invariant.

\begin{proposition}[Representation-Based Invariance] \label{prop:representation_based_inv}
	Consider a differentiable neural network $f: \sigspace \rightarrow \Y$ that is invariant with respect to the symmetry group $\G$. We assume that $\G$ acts on $\sigspace$ via the representation $\rho : \G \rightarrow \mathrm{Aut}[\sigspace]$. We consider an example importance explanation based on a representation $h: \sigspace \rightarrow \H$ extracted from $f$ (e.g. an intermediate layer of the neural network) and of the form
	\begin{align*}
		e(x) =  \functional[h(x)],
	\end{align*}
	where $\functional: \H \rightarrow \R^{\n{train}}$ maps any representation $r \in \H$ to a vector in $\functional[r] \in \R^{\n{train}}$, with $\n{train} \in \N^+$ corresponding to the number of training examples for which we evaluate the importance. If the representation $h$ is $\G$-invariant, then the explanation $e$ is $\G$-invariant.
\end{proposition}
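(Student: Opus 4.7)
The proof should be a short, direct chain of equalities that leverages the $\G$-invariance of the representation $h$ together with the explicit definition of $e$. My plan is to fix an arbitrary $g \in \G$ and $x \in \sigspace$, then evaluate $e(\rho[g]x)$ by unfolding its definition, pushing the group action through $h$ via the assumed invariance of the representation, and finally re-folding the result into $e(x)$.

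Concretely, the sequence of steps I would write is:
\begin{align*}
e(\rho[g]x) &= \functional[h(\rho[g]x)] & (\text{definition of } e)\\
&= \functional[h(x)] & (\G\text{-invariance of } h)\\
&= e(x) & (\text{definition of } e).
\end{align*}
Since this holds for every $g \in \G$ and every $x \in \sigspace$, we conclude that $e$ is $\G$-invariant.

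There is no genuine obstacle here: the proof is essentially a one-liner that mirrors the argument used for \cref{prop:loss_based_inv}, with the loss functional $\loss(f_\theta(x), y)$ replaced by the representation $h(x)$. Note that the hypothesis on $f$ being $\G$-invariant is not actually used in this proof; only the $\G$-invariance of $h$ matters, and the functional $\functional$ plays no active role beyond being well-defined on $\H$. It is worth remarking in passing that $\G$-invariance of $h$ is a stronger condition than $\G$-invariance of $f$ (one can have an invariant $f$ whose intermediate representations are merely equivariant), which is precisely why the conditional nature of the guarantee (the \conditional\ entry in \cref{tab:guarantees}) appears for representation-based example importance methods rather than an unconditional one.
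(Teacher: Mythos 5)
Your proof is correct and is essentially identical to the paper's own argument: both unfold the definition $e(\rho[g]x) = \functional[h(\rho[g]x)]$, apply the $\G$-invariance of $h$, and refold to obtain $e(x)$. Your side remarks (that the invariance of $f$ itself is not used, and that invariance of $h$ is the genuinely restrictive hypothesis) are accurate and consistent with the paper's framing of this as a conditional guarantee.
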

\begin{remark}
	We note that $\functional$ can be adapted to the method we want to describe. For instance, SimplEx is obtained by taking
	\begin{align*}
		\functional[r] = \arg &\min_{w \in [0,1]^{\n{train}}} \left[ r - \sum_{n=1}^{\n{train}} w_n h(x^n) \right] \\
		& \text{s.t. } \sum_{n=1}^{\n{train}} w_n = 1
	\end{align*}
	where $x^n \in \Dtrain$ is a training example for $n \in \Z_{\n{train}}$. Similarly, Representation Similarity is obtained with
		\begin{align*}
		\functional[r]_n = r^{\intercal} h(x^n),
	\end{align*}
	for all $n \in \Z_{\n{train}}$.
\end{remark}
\begin{proof}
	The proposition can directly be deduced from the $\G$-invariance of the representation. For any $g \in \G$, we have:
\begin{align*}
	e(\rho[g]x) &= \functional[h(\rho[g] x)] &\\
	&= \functional[h(x)] & (\text{Invariance of } h)\\
	&= e(x), &
\end{align*}
which proves the desired property. 
\end{proof}

\subsection{Concept-Based Explanations Guarantees}

We now turn to concept-based methods. These methods attribute importance to a set of concept specified by the user for the model to predict a certain class. Although these explanations are typically global (i.e. at the dataset level), they are based on concept classifiers that attempt to detect the presence/absence of the concept on each individual example $x \in \sigspace$ based on its representation $h(x)$. We show that these classifiers are $\G$-invariant if we restrict to representations $h$ that are $\G$-invariant.

\begin{proposition}[Concept-Based Invariance] \label{prop:concept_based_inv}
	Consider a differentiable neural network $f: \sigspace \rightarrow \Y$ that is invariant with respect to the symmetry group $\G$. We assume that $\G$ acts on $\sigspace$ via the representation $\rho : \G \rightarrow \mathrm{Aut}[\sigspace]$. We consider a concept-based explanation based on a representation $h: \sigspace \rightarrow \H$ extracted from $f$ (e.g. an intermediate layer of the neural network) and of the form
	\begin{align*}
		e(x) =  c[h(x)],
	\end{align*}
	where $c: \H \rightarrow \{0,1\}^C$ maps any representation $r \in \H$ to a binary vector in $c[r] \in \{0,1\}^C$ indicating the presence/absence of $C \in \N^+$ selected concepts. If the representation $h$ is $\G$-invariant, then the explanation $e$ is $\G$-invariant.
\end{proposition}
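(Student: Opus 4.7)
The plan is to mirror the argument used for Proposition~\ref{prop:representation_based_inv} (Representation-Based Invariance), since the structural form of the explanation $e(x) = c[h(x)]$ is identical: a fixed map (here $c$, there $\functional$) post-composed with the representation $h$. Only the codomain of the outer map changes (from $\R^{\n{train}}$ to $\{0,1\}^C$), and the argument does not use any property of that codomain.

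Concretely, I would fix arbitrary $g \in \G$ and $x \in \sigspace$ and expand the explanation evaluated at the transformed signal: $e(\rho[g] x) = c[h(\rho[g] x)]$. I would then invoke the assumed $\G$-invariance of the representation map, $h(\rho[g] x) = h(x)$, to rewrite this as $c[h(x)]$. Finally, applying the definition of $e$ one more time yields $c[h(x)] = e(x)$, so $e(\rho[g] x) = e(x)$ for all $g \in \G$ and $x \in \sigspace$, which is precisely the $\G$-invariance of $e$.

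There is no real obstacle here. Unlike the feature importance guarantees (Propositions~\ref{prop:gradient_based_equiv} and~\ref{prop:perturbation_based_equiv}), no restriction on the type of representation $\rho$ is needed and no baseline signal is involved: the concept classifier $c$ lives entirely on the representation space $\H$ and is therefore completely oblivious to how $\G$ acts on the input space $\sigspace$. The whole content of the proposition is that invariance is inherited along functional composition once the inner factor $h$ is invariant, which is exactly the rationale behind Guideline~3 for placing concept classifiers on invariant layers.
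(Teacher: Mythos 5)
Your proof is correct and is essentially identical to the paper's: both expand $e(\rho[g]x) = c[h(\rho[g]x)]$, apply the assumed $\G$-invariance of $h$, and conclude $e(\rho[g]x) = c[h(x)] = e(x)$. Your observation that no restriction on $\rho$ is needed because $c$ only sees the representation space is also consistent with the paper's framing.
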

\begin{remark}
	We note that concepts activation vectors (CAVs) are obtained by fiting a linear classifier $c$. Concept activations regions (CARs), on the other hand, are obtained by fitting a kernel-based concept classifier. 
\end{remark}
\begin{proof}
	The proposition can directly be deduced from the $\G$-invariance of the representation. For any $g \in \G$, we have:
	\begin{align*}
		e(\rho[g]x) &= c[h(\rho[g] x)] &\\
		&= c[h(x)] & (\text{Invariance of } h)\\
		&= e(x), &
	\end{align*}
	which proves the desired property. 
\end{proof}

\subsection{Enforcing Invariance}
Finally, we prove Proposition~\ref{prop:enforce_invariance} that allows us to turn any interpretability method into a $\G$-invariant method.

\enforceinv*
\begin{proof}
	For any $\tilde{g} \in \G$, we have that
	\begin{align*}
		\einv(\rho[\tilde{g}]x) &= \frac{1}{|\G|} \sum_{g \in \G} e(\rho[\tilde{g}] \rho[g] x)\\
		&= \frac{1}{|\G|} \sum_{g \in \G} e(\rho[\tilde{g} \circ g] x),
	\end{align*}
	where we have used the fact that the representation $\rho$ is compatible with the group composition. We now define the map $l_{\tilde{g}}: \G \rightarrow \G$ as $l_{\tilde{g}}(g) = \tilde{g} \circ g$ for all $g \in \G$. We note that $l_{\tilde{g}}$ is a bijection from $\G$ to itself, since it admits an inverse $l_{\tilde{g}}^{-1} = l_{\tilde{g}^{-1}}$. Indeed, for all $g \in \G$:
	\begin{align*}
		(l_{\tilde{g}^{-1}} \circ l_{\tilde{g}})(g) = \tilde{g}^{-1} \circ \tilde{g} \circ g = g \\
		(l_{\tilde{g}} \circ l_{\tilde{g}^{-1}})(g) = \tilde{g}  \circ \tilde{g}^{-1}  \circ g = g 
	\end{align*}
	Hence, we have that $l_{\tilde{g}}(\G) = \G$. By denoting $g' = l_{\tilde{g}}(g) =  \tilde{g} \circ g$, we can therefore write
	\begin{align*}
		\einv(\rho[\tilde{g}]x) &= \sum_{g' \in \G} e(\rho[g'] x) \\
		&= \einv(x).
	\end{align*}
	This proves the $\G$-invariance of the explanation $\einv$.
\end{proof}

The interpretability method $\einv$ has to be understood as a way to assign a unified explanation to each class of equivalent signals rather than a patched version of $e$. To make this argument more rigorous, we define the equivalence relation $\sim$ on the set of signals $\sigspace$ as follows: two signals $x, x' \in \sigspace$ are equivalent iff there exists a symmetry $g \in \mathcal{G}$ relating the two signals $x' = \rho[g] x$. This equivalence relation is the one that underpins a $\mathcal{G}$-invariant neural network as $x'$ and $x$ are assigned the same prediction $f(x') = f(x)$. If we denote by $\mathcal{S}_{x} = \{ x' \in \sigspace \mid x' \sim x \}$ the class of signals equivalent to the signal $x$, we may write $\einv(x) = \frac{1}{| \mathcal{S}_{x} |} \sum_{x' \in \mathcal{S}_{x}} e(x')$. This reformulation gives a nice interpretation to $\einv$: the explanation $\einv(x)$ is simply given by averaging the explanation $e$ over the class $\mathcal{S}_{x}$ of signals equivalent to $x$. If $e$ is an example importance method, then $\einv$ allows us to identify the examples that are the most related to the equivalence class $\mathcal{S}_{x}$. If, on the other hand, $e$ is a concept classifier, then $\einv$ measures the fraction of examples in the equivalence class $\mathcal{S}_{x}$ where the concept of interest is detected. We believe that using $\einv$ with this interpretation in mind should help to avoid unwarranted trust in the resulting explanations.

Our opinion is that explanations $e$ that are robust by design are preferable over auxiliary explanations $\einv$. In this way, a possible hierarchy between interpretability methods based on our robustness criterion for $\mathcal{G}$-invariant models could be as follows:

\begin{description}
	\item[Method 1] An interpretability method $e$ that has $\mathcal{G}$-invariance by design.
	\item[Method 2] An auxiliary interpretability method $\einv$ obtained from $e$ with \cref{prop:enforce_invariance}.
	\item[Method 3] An interpretability method $e$ that is not $\mathcal{G}$-invariant. 
\end{description}

In this way, Method 2 should be avoided whenever Method 1 is available. This is the case, for instance, of example-based interpretability methods where loss-based methods are naturally invariant. Hence, in this case, the benefit of patching representation-based methods is limited. However, in the case of concept-based explanations, we note that neither CAVs nor CARs grant invariance. In this case, since Method 1 is unavailable, Method 2 might be the best we can do. In the specific case of concept-based interpretability, we note that \cref{prop:enforce_invariance} implements a sensible fix to the lack of invariance. Indeed, applying \cref{prop:enforce_invariance} is equivalent to making the concept-classifiers invariant by applying a $\mathcal{G}$-invariant group aggregation, which is a standard way to implement classifier invariance (examples include GNNs and Deep Sets). In this way, the usefulness of \cref{prop:enforce_invariance} is context-dependent and we believe that a good usage should always be informed by domain knowledge.
	
	\section{Convergence of the Monte Carlo Estimators} \label{appendix:monte_carlo}
	In this appendix, we discuss the Monte Carlo estimators used to approximate the invariance and equivariance metrics defined in Definition~\ref{def:robustness_metrics}. We first note that our experiments typically aggregate the metrics over a test set $\Dtest$ of examples. Hence, we are interested in the metrics

\begin{align*}
	\overline{\Inv}_{\G}(e) &= \expect_{X \sim U(\Dtest) , G \sim U(\G)} \left[ s_{\E} \left[e \left(\rho[G]  X \right) , e(X) \right] \right] \\
	\overline{\Equiv}_{\G}(e) &= \expect_{X \sim U(\Dtest) , G \sim U(\G)} \left[ s_{\E} \left[e \left(\rho[G]  X \right) , \rho'[G]e(X) \right] \right],
\end{align*}

where $U$ denotes a uniform distribution. Clearly, whenever the order $|\G|$ of the symmetry group is large, these metrics might become prohibitively expensive to compute. In this setting, we simply build a Monte Carlo estimator for the above metrics by sampling $\n{samp}$ symmetries $G_1, \dots, G_{\n{samp}}$ with $\n{samp} \in \N^+$ and $\n{samp} \ll |\G|$. If we have $\n{test} = |\Dtest|$ test examples, the Monte Carlo estimators can be written as

\begin{align*}
	\widehat{\Inv}_{\G}(e) &= \frac{1}{\n{test} \n{samp}} \sum_{n=1}^{\n{test}} \sum_{m=1}^{\n{samp}} s_{\E} \left[e \left(\rho[G_m]  X^n \right) , e(X^n) \right]\\
	\widehat{\Equiv}_{\G}(e) &= \frac{1}{\n{test} \n{samp}} \sum_{n=1}^{\n{test}} \sum_{m=1}^{\n{samp}} s_{\E} \left[e \left(\rho[G_m]  X^n \right) , \rho'[G_m] e(X^n) \right].
\end{align*}

Those are the estimators that we use in our experiments. Let us first discuss the convergence of these estimators theoretically. Since $s_{\E}(a, b) \in [-1,1]$ for all $a,b \in \E$, Hoeffding's inequality~\cite{Hoeffding1963} guarantees that for all $t \in \R^+$:

\begin{align*}
	\proba\left(\left| \widehat{\Inv}_{\G}(e) - \overline{\Inv}_{\G}(e) \right| \geq t \right) &\leq 2 \exp \left(- \frac{\n{test}\n{samp} t^2}{2}\right) \\
	\proba\left(\left| \widehat{\Equiv}_{\G}(e) - \overline{\Equiv}_{\G}(e) \right| \geq t \right) &\leq 2 \exp \left(- \frac{\n{test}\n{samp} t^2}{2}\right).
\end{align*}

Let us now plug-in some numbers to see how these inequalities translate in our experiments. In Section~\ref{sec:experiments}, we typically use $\n{test} = 1,000$ and $\n{samp} = 50$. Hence, the probability of making an error larger than $t = 2 \%$ in our experiments is smaller than $10^{-4}$. This guarantees that all the metrics reported in the main paper are precisely evaluated.

We shall now verify this theoretical analysis with the experimental setup described in Section~\ref{sec:experiments}. Since we do not resort to any Monte Carlo approximation for the Electrocardiogram dataset, we exclude it from our analysis. Similarly, robustness scores $\widehat{\Inv}_{\G}(e) \approx 1$  or $\widehat{\Equiv}_{\G}(e) \approx 1$ can be excluded as these can only be produced by having $\Inv[e, G_m] \approx 1$ or $\Equiv[e, G_m] \approx 1$ for all $m \in \Z_{\n{samp}}$, which guarantees that the estimators have already converged. By applying these filters with the help of Figure~\ref{fig:global_robustness}, we restrict our analysis to Gradient Shap for the Mutagenicity dataset and to Gradient Shap, Feature Permutation, SimplEx-Equiv, CAV-Equiv and CAR-Equiv for the ModelNet40 dataset. We plot the Monte Carlo estimators $\widehat{\Inv}_{\G}(e)$ and $\widehat{\Equiv}_{\G}(e)$ as a function of $\n{samp}$ for various interpretability methods in Figure~\ref{fig:montecarlo_convergence}. As we can see, all the Monte Carlo estimators have already converged for $\n{samp} = 50$ used in the experiment. This is due to the fact that we use a relatively large test set in each experiment, with $\n{test} = 433$ for the Mutagenicity dataset and $\n{test} = 1,000$ for the ModelNet40 experiment.

\begin{figure}[h]
	\centering
	\subfigure[Mutagenicity Dataset]{
		\begin{minipage}[t]{.45\linewidth}
			\centering
			\includegraphics[width=\linewidth]{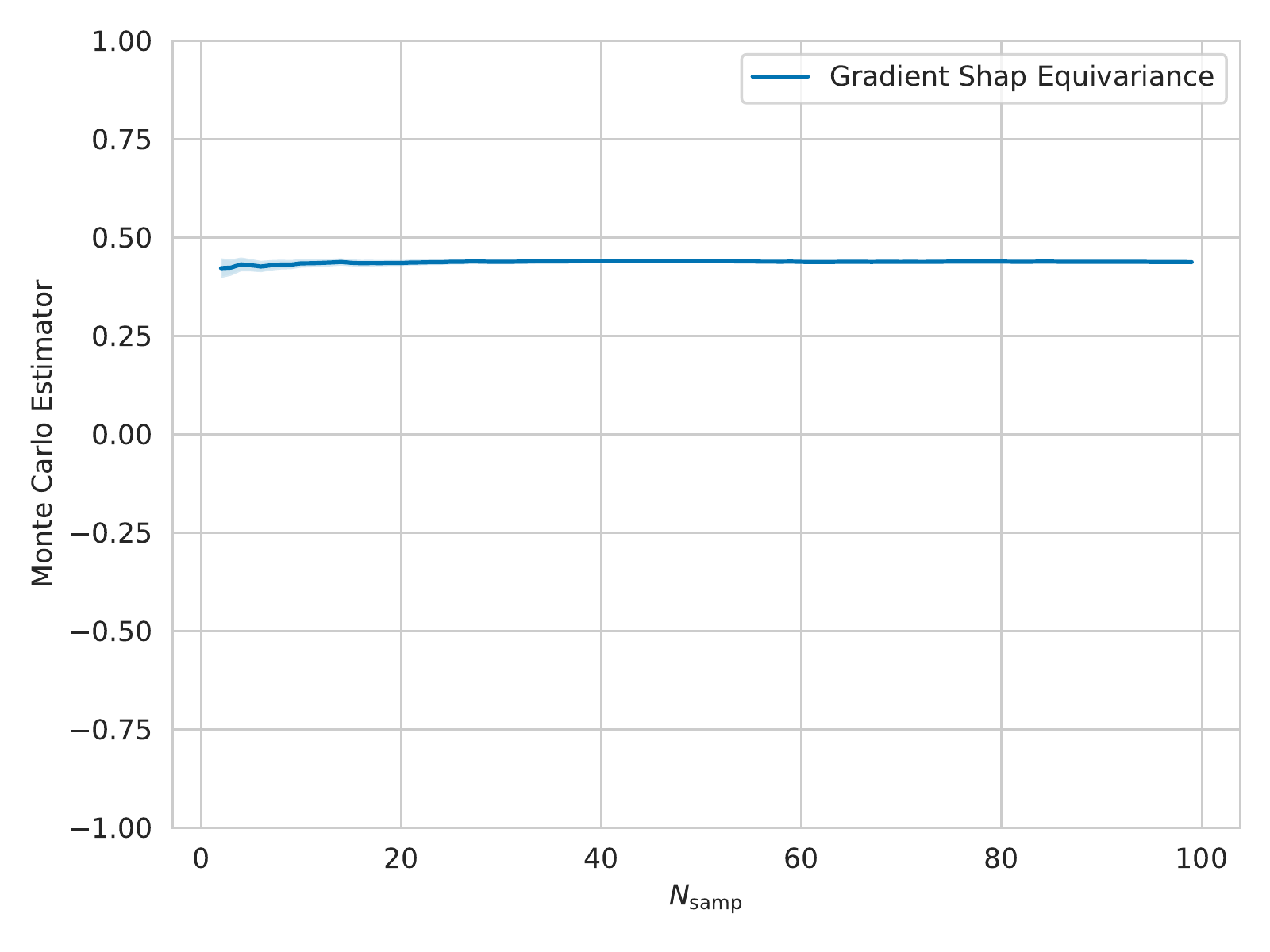}
		\end{minipage}
	}
	\subfigure[ModelNet40 Dataset]{
		\begin{minipage}[t]{.45\linewidth}
			\centering
			\includegraphics[width=\linewidth]{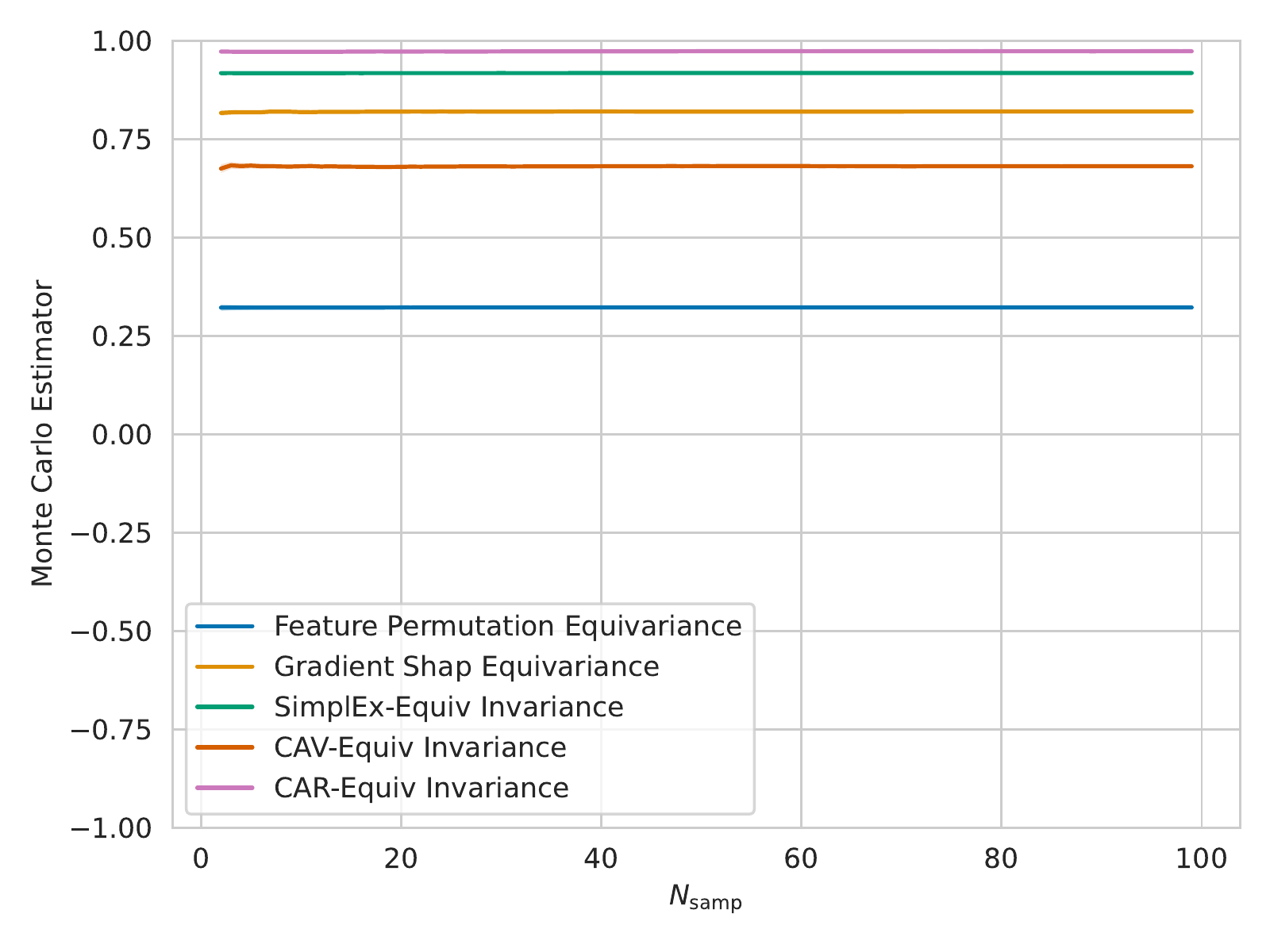}
		\end{minipage}
	}
	\caption{Convergence of the Monte Carlo estimators. Each curve represents the value of the estimator $\widehat{\Inv}_{\G}(e)$  or $\widehat{\Equiv}_{\G}(e)$ as a function of $\n{samp}$. In each case, we build a $95\%$ confidence interval around this estimator.}
	\label{fig:montecarlo_convergence}
\end{figure}

We finish this appendix by mentioning that all the groups manipulated in this paper are finite groups. It goes without saying that an extension of our analysis to infinite, or even uncountable groups would probably require a more sophisticated sampling technique, such as e.g. importance sampling~\cite{Kloek1978}.
	
	\section{Experiment Details} \label{appendix:experiment_details}
	In this appendix, we provide all the details for the experiments conducted in Section~\ref{sec:experiments}.

\textbf{Computing Resources.} Almost all the empirical evaluations were run on a single machine equipped with a 64-Core AMD Ryzen Threadripper PRO 3995WX CPU and a NVIDIA RTX A4000 GPU. The only exceptions are the CIFAR100 and STL10 experiments, for which we used a Microsoft Azure virtual machine equipped with a single Tesla V100 GPU. All the machines run on Python
3.10~\cite{VanRossum2009} and Pytorch 1.13.1~\cite{Paszke2017}.

\textbf{Electrocardiograms.}~The MIT-BIH Electrocardiogram (ECG) dataset~\cite{Goldberger2000, Moody2001}
consists of univariate time series $x \in \X(\Z_T, \R)$ with $T=
187$ time steps, each representing a heartbeat cycle. Each time series comes with a binary label indicating whether the heartbeat is normal or not. We train a 1-dimensional convolutional neural network (CNN) to predict this label. This CNN is made invariant under the action of the cyclic translation group $\G = \Z/T \Z$ on the time series by using only circular padding and global pooling.  

\textbf{Mutagenicity.}~The Mutagenicity dataset~\cite{Kazius2005, Riesen2008, Morris2020} consists of graphs $x \in \X( [V_x, E_x ], \Z_{\n{sp}})$ representing organic molecules. In a graph, each node $u \in V_x$ is assigned an atom indexed by $x(u) \in \Z_{\n{sp}}$, where $\n{sp} = 14$ is the number of atom species. We ignore the attributes for the edges $E_x \subseteq V_x^2$. Each graph comes with a binary label indicating whether the molecule is a mutagen or not. We train a graph neural network (GNN) to predict this label. This GNN is made invariant under the action of the permutation group $\G = S_{V_x}$ on the node ordering by using global pooling. 

\textbf{ModelNet40.} The ModelNet40~\cite{Wu2015} dataset consists of CAD representations of 3D objects. We use the same process as~\cite{Zaheer2017,Lee2019} to convert each CAD representation into a cloud $x \in \X(\Z_{\n{pt}}, \R^3)$ of $\n{pt} = 1,000$ points embedded in $\R^3$. Each cloud of point comes with a label $y \in \Z_{40}$ indicating the class of object represented by the cloud of points among the 40 different classes of objects present in the dataset. We train a Deep Set~\cite{Zaheer2017} model to predict this label. Thanks to its architecture, this model is naturally invariant under the action of the permutation group $\G = S_{\n{pt}}$ on the points in the cloud. Each model is trained on a training set $\Dtrain$. 

\textbf{IMDb.} The IMDb dataset~\cite{Maas2011imdb} This dataset contains 50k text movie reviews. Each review comes with a binary label $y \in \{0, 1\}$. We represent each review as a sequence of tokens $x \in \X(\Z_{T}, \R^V)$
, where we cap the sequence length to $T = 128$ and set the vocabulary size to $V = 1,000$. We perform a train-validation-test split of this dataset randomly (90\%-5\%-5\%) and fit a 2-layers bag-of-word MLP on the training dataset for 20 epochs with Adam and a cosine annealing learning rate. The best model (according to validation accuracy) achieves a reasonable 86\% accuracy on the test set $\Dtest$. Let us justify the bag-of-word classifier invariance with respect to the token pertmutation group $S_T$. A bag-of-words classifier $f$ receives a sequence 
of tokens $x \in \X(\Z_{T}, \R^V)$ and outputs class logits $f(x) \in \R^2$. By definition, the bag-of-word classifier can be written as a function $f(x) = g \left( \sum_{t=1}^{T} \mathrm{onehot} (x_t) \right)$. In this form, the invariance of the classifier with respect to tokens permutation is manifest. Let $\pi \in S_T$ 
be a permutation of the token indices. Applying this permutation to the token sequence does not change the classifier's output: $f(x_\pi) =  g \left( \sum_{t=1}^{T} \mathrm{onehot} (x_{\pi(t)}) \right) = g \left( \sum_{t=1}^{T} \mathrm{onehot} (x_t) \right) = f(x)$
. We conclude that bag-of-words classifiers are 
$S_T$-invariant.

\textbf{FashionMNIST.} The FashionMNIST dataset~\cite{Xiao2017} consists of $28 \times 28$ grayscale images $x \in \mathcal{X}(\mathbb{Z}_{W} \times \mathbb{Z}_{H}, \mathbb{R})$ with $W = H = 28$, each representing a fashion object (e.g. dress) from 10 categories. Each image comes with a label $y \in \mathbb{Z}_{10}$ indicating object's category. We train a 2-dimensional CNN to predict this label. We pad each image by adding 10 black pixels in each direction, so that the image information content is conserved by applying translations from the group $\mathcal{G} = (\mathbb{Z}/ 10 \mathbb{Z})^2$. The CNN is made invariant under the action of this group by using global pooling.

\textbf{CIFAR100.} The CIFAR100 dataset~\cite{He2016deep} consists of $32 \times 32$ RGB images $x \in \mathcal{X}(\mathbb{Z}_{W} \times \mathbb{Z}_{H}, \mathbb{R}^3)$ with $W = H = 32$, each representing an object (e.g. truck) from 100 categories. Each image comes with a label $y \in \mathbb{Z}_{100}$ indicating object's category. We train the $\dihedral_8-\dihedral_4-\dihedral_1$ 28/10 WideResNet from \cite{Weiler2019general} to predict this label. The design of this model imposes a strong bias toward $\dihedral_8$ invariance. To avoid artifacts created by rotating an image by 45\textdegree, we restrict all our evaluations to the subgroup $\dihedral_4 \subset \dihedral_8$.

\textbf{STL10.} The STL10 dataset~\cite{Coates2011} consists of $96 \times 96$ RGB images $x \in \mathcal{X}(\mathbb{Z}_{W} \times \mathbb{Z}_{H}, \mathbb{R}^3)$ with $W = H = 96$, each representing an object (e.g. truck) from 10 categories. Each image comes with a label $y \in \mathbb{Z}_{10}$ indicating object's category. We train the $\dihedral_8-\dihedral_4-\dihedral_1$ 16/8 WideResNet from \cite{Weiler2019general} to predict this label. The design of this model imposes a strong bias toward $\dihedral_8$ invariance. To avoid artifacts created by rotating an image by 45\textdegree, we restrict all our evaluations to the subgroup $\dihedral_4 \subset \dihedral_8$.

\textbf{Data Split.} All the datasets are endowed with a natural train-test split. In the ECG dataset, the different types of abnormal heartbeats are imbalanced (e.g. fusion beats only amount for $.7 \%$ of the training set). We use \emph{SMOTE}~\cite{Chawla2002} to augment the proportion of each type of abnormal heartbeat in the training set.

\textbf{Symmetry Groups.} Each dataset in the experiment is associated to a specific group of symmetry and group representation. We detail those in Table~\ref{tab:groups}. We note that all these groups and group representations are easily implemented as tensor operations on the dimensions of the tensor corresponding to the domain $\Omega$.

\begin{table}[h!]
	\centering
	\caption{Different groups and representations appearing in Section~\ref{sec:experiments}. Since Mutagenicity has heterogeneous graphs, $V_x \subset \N^+$ denotes the set of vertices specific to the graph data $x$. We use the notation $a(u,v)$ to denote the elements of the edges data matrix.}
	\label{tab:groups}
	\vskip 0in
	\begin{center}
			\begin{adjustbox}{width=\linewidth}	
			\begin{tabular}{lllll}
				\toprule
				\textbf{Dataset} &\textbf{Modality} & \textbf{Input Signal} & \textbf{Symmetry} & \textbf{Representation}  \\
				\midrule
				Electrocardiograms & Time Series & $[x(t)] _{t \in \Z_T}$ & $g \in \Z / T\Z$ & $\rho[g]x(t) = x(t-g \mod T)$ \\
				\arrayrulecolor{gray!50}\hline
				Mutagenicity & Graphs &  $[x(u) , a(u,v)]_{u, v \in V_x}$ & $g \in S_{V_x}$ & \makecell[l]{$\rho[g] x(u) = x(g^{-1}(u))$ \\ $\rho[g] a (u,v) = a(g^{-1}(u), g^{-1}(v))$}  \\
				\hline
				ModelNet40 & Tabular Set & $[x(n)]_{n \in \Z_{\n{pts}}}$ & $g \in S_{\n{pt}}$ & $\rho[g] x(n) = x(g^{-1}(n))$ \\
				\arrayrulecolor{gray!50}\hline
				FashionMNIST & Image & $[x(u,v)] _{(u,v) \in \Z_W \times \Z_H}$ & $(g_1,g_2) \in (\Z / 10\Z)^2$ & $\rho[g]x(t) = x(u-g_1 \mod W, v - g_2 \mod H)$ \\
				\arrayrulecolor{gray!50}\hline
				CIFAR100 & Image & $[x(u,v)] _{(u,v) \in \Z_W \times \Z_H}$ & $g \in \dihedral_8$ & $\rho[g]x(t) = x(g^{-1}(u,v))$ \\
				\arrayrulecolor{gray!50}\hline
				STL10 & Image & $[x(u,v)] _{(u,v) \in \Z_W \times \Z_H}$ & $g \in \dihedral_8$ & $\rho[g]x(t) = x(g^{-1}(u,v))$ \\
				\arrayrulecolor{black} \bottomrule
			\end{tabular}
			\end{adjustbox}
	\end{center}
\end{table}

\textbf{Models.} We provide a detailed architecture for each model in \cref{tab:all_cnn,tab:standard_cnn,tab:graph_conv,tab:deepset,tab:all_cnn_fashion_mnist,tab:augmented_cnn_fashion,tab:wideresnet}. All the models are implemented with \emph{Pytorch}~\cite{Paszke2017} and \emph{PyG}~\cite{Fey2019}. All the models except the WideResNets are trained using \emph{Adam}~\cite{Kingma2014}. The CNNs are trained to minimize the cross entropy loss for 200 epochs with early stopping and patience 10 with a learning rate of $10^{-3}$ and a weight decay of $10^{-5}$. The GNN is trained to minimize the negative log likelihood for 200 epochs with early stopping and patience 20 with a learning rate of $10^{-3}$ and a weight decay of $10^{-5}$. The Deep Set is trained to minimize the cross entropy loss for 1,000 epochs with early stopping and patience 20 with a learning rate of $10^{-3}$, a weight decay of $10^{-7}$ and a multi step learning rate scheduler with $\gamma = 0.1$. The WideResNets are trained with Stochastic Gradient Descent to minimize the negative cross entropy loss for 200 epochs (1,000 for STL10) with an initial learning rate of $0.1$, a weight decay of $5 \cdot 10^{-5}$, momentum $0.9$ and an exponential learning rate scheduler with $\gamma = 0.2$ applied each $60$ epochs (300 for STL10). The test set is used as a validation set in some cases, as the model generalization is never used as an evaluation criterion. All the parameters hyperparameters that are not specified are chosen to the Pytorch and PyG default value. Note that for each architecture, we have highlighted the layer that we call \emph{Inv} and \emph{Equiv} in Section~\ref{sec:experiments}. The representation-based interpretability methods rely on the output of these layers. 

\begin{table}[h]
	\centering
	\caption{ECG All-CNN Architecture.}
	\label{tab:all_cnn}
	\begin{center}
				\begin{tabularx}{\textwidth}{cXcc}
					\toprule
					\textbf{Layer Type} &\textbf{Parameters} & \textbf{Activation} & \textbf{Notes}  \\
					\midrule
					Conv1d & in\_chanels=1, out\_chanels=16, kernel\_size=3, stride=1, padding=1, padding\_mode='circular' & ReLU &\\
					Conv1d & in\_chanels=16, out\_chanels=64, kernel\_size=3, stride=1, padding=1, padding\_mode='circular' & ReLU &\\
					Conv1d & in\_chanels=64, out\_chanels=128, kernel\_size=3, stride=1, padding=1, padding\_mode='circular' & ReLU & Equiv layer\\
					Pooling & Global Average Pooling  &  & \\
					Linear & in\_chanels=128, out\_chanels=32 & LeakyReLU & Inv layer\\
					Linear & in\_chanels=32, out\_chanels=32 & LeakyReLU &\\
					Linear & in\_chanels=32, out\_chanels=2 &  &\\
					\bottomrule
				\end{tabularx}
	\end{center}
\end{table}

\begin{table}[h]
	\centering
	\caption{ECG Augmented-CNN and Standard-CNN Architecture.}
	\label{tab:standard_cnn}
	\begin{center}
			\begin{tabularx}{\textwidth}{cXcc}
				\toprule
				\textbf{Layer Type} &\textbf{Parameters} & \textbf{Activation} & \textbf{Notes}  \\
				\midrule
				Conv1d & in\_chanels=1, out\_chanels=16, kernel\_size=3, stride=1, padding=1, padding\_mode='circular' &  &\\
				MaxPool1d & kernel\_size=2 & & \\
				Conv1d & in\_chanels=16, out\_chanels=64, kernel\_size=3, stride=1, padding=1, padding\_mode='circular' & ReLU &\\
				MaxPool1d & kernel\_size=2 & & \\
				Conv1d & in\_chanels=64, out\_chanels=128, kernel\_size=3, stride=1, padding=1, padding\_mode='circular' & ReLU & Equiv layer\\
				MaxPool1d & kernel\_size=2 & & \\
				Flatten & Collapse all the dimensions together except the batch dimension &  & \\
				Linear & in\_chanels=2944, out\_chanels=32 & LeakyReLU & Inv layer\\
				Linear & in\_chanels=32, out\_chanels=32 & LeakyReLU &\\
				Linear & in\_chanels=32, out\_chanels=2 &  &\\
				\bottomrule
			\end{tabularx}
	\end{center}
\end{table}

\begin{table}[h]
	\centering
	\caption{Mutagenicity GNN. The GraphConv layers correspond to the graph operator introduced in \cite{Morris2019Weisfeiler}}
	\label{tab:graph_conv}
	\begin{center}
			\begin{tabularx}{\textwidth}{cXcc}
				\toprule
				\textbf{Layer Type} &\textbf{Parameters} & \textbf{Activation} & \textbf{Notes}  \\
				\midrule
				GraphConv & in\_chanels=14, out\_chanels=32 & ReLU &\\
				GraphConv & in\_chanels=32, out\_chanels=32 & ReLU &\\
				GraphConv & in\_chanels=32, out\_chanels=32 & ReLU &\\
				GraphConv & in\_chanels=32, out\_chanels=32 & ReLU &\\
				GraphConv & in\_chanels=32, out\_chanels=32 & ReLU &\\
				Pooling & Global additive pooling on the graph &  & \\
				Linear & in\_chanels=32, out\_chanels=32 & ReLU & Inv layer\\
				Dropout & p=0.5 & & \\
				Linear & in\_chanels=32, out\_chanels=2 & Log Softmax &\\
				\bottomrule
			\end{tabularx}
	\end{center}
\end{table}

\begin{table}[h]
	\centering
	\caption{ModelNet40 Deep Set adapted from~\cite{Zaheer2017}. The \emph{Sub. Max} layers correspond to the operation $x_{b,s,i} \mapsto x_{b,s,i} - \max_{s' \in \Z_{\n{pt}}}
		x_{b,s',i}$ for each batch index $b \in \N$, set index $s \in \Z_{\n{pt}}$ and feature index $i \in \Z_3$.}
	\label{tab:deepset}
	\vskip 0in
	\begin{center}
			\begin{tabularx}{\textwidth}{cXcc}
				\toprule
				\textbf{Layer Type} &\textbf{Parameters} & \textbf{Activation} & \textbf{Notes}  \\
				\midrule
				Sub. Max &  &  &\\
				Linear & in\_chanels=3, out\_chanels=256 & Tanh & \\
				Sub. Max &  &  &\\
				Linear & in\_chanels=256, out\_chanels=256 & Tanh & Equiv layer\\
				Sub. Max &  &  &\\
				Linear & in\_chanels=256, out\_chanels=256 & Tanh & \\
				Max Pooling & Takes the maximum along the set dimension & & \\
				Dropout & p=0.5 & & \\
				Linear & in\_chanels=256, out\_chanels=256 & Tanh & Inv Layer \\
				Dropout & p=0.5 & & \\
				Linear & in\_chanels=256, out\_chanels=40 & Tanh &  \\
				\bottomrule
			\end{tabularx}
	\end{center}
\end{table}

\begin{table}[h]
	\centering
	\caption{FashionMNIST All-CNN Architecture.}
	\label{tab:all_cnn_fashion_mnist}
	\begin{center}
		\begin{tabularx}{\textwidth}{cXcc}
			\toprule
			\textbf{Layer Type} &\textbf{Parameters} & \textbf{Activation} & \textbf{Notes}  \\
			\midrule
			Conv2d & in\_chanels=1, out\_chanels=16, kernel\_size=3, stride=1, padding=1, padding\_mode='circular' & ReLU &\\
			Conv2d & in\_chanels=16, out\_chanels=64, kernel\_size=3, stride=1, padding=1, padding\_mode='circular' & ReLU &\\
			Conv2d & in\_chanels=64, out\_chanels=128, kernel\_size=3, stride=1, padding=1, padding\_mode='circular' & ReLU & Equiv layer\\
			Pooling & Global Average Pooling  &  & \\
			Linear & in\_chanels=128, out\_chanels=32 & LeakyReLU & Inv layer\\
			Linear & in\_chanels=32, out\_chanels=32 & LeakyReLU &\\
			Linear & in\_chanels=32, out\_chanels=10 &  &\\
			\bottomrule
		\end{tabularx}
	\end{center}
\end{table}

\begin{table}[h]
	\centering
	\caption{FashionMNIST Augmented-CNN and Standard-CNN Architecture.}
	\label{tab:augmented_cnn_fashion}
	\begin{center}
		\begin{tabularx}{\textwidth}{cXcc}
			\toprule
			\textbf{Layer Type} &\textbf{Parameters} & \textbf{Activation} & \textbf{Notes}  \\
			\midrule
			Conv2d & in\_chanels=1, out\_chanels=16, kernel\_size=3, stride=1, padding=1, padding\_mode='circular' &  &\\
			MaxPool2d & kernel\_size=2 & & \\
			Conv2d & in\_chanels=16, out\_chanels=64, kernel\_size=3, stride=1, padding=1, padding\_mode='circular' & ReLU &\\
			MaxPool2d & kernel\_size=2 & & \\
			Conv2d & in\_chanels=64, out\_chanels=128, kernel\_size=3, stride=1, padding=1, padding\_mode='circular' & ReLU & Equiv layer\\
			MaxPool2d & kernel\_size=2 & & \\
			Flatten & Collapse all the dimensions together except the batch dimension &  & \\
			Linear & in\_chanels=294912, out\_chanels=32 & LeakyReLU & Inv layer\\
			Linear & in\_chanels=32, out\_chanels=32 & LeakyReLU &\\
			Linear & in\_chanels=32, out\_chanels=2 &  &\\
			\bottomrule
		\end{tabularx}
	\end{center}
\end{table}

\FloatBarrier

\begin{table}[h]
	\centering
	\caption{WideResNet from \cite{Weiler2019general}}
	\label{tab:wideresnet}
	\begin{center}
		\begin{tabularx}{\textwidth}{cXcc}
			\toprule
			\textbf{Layer Type} &\textbf{Parameters} & \textbf{Activation} & \textbf{Notes}  \\
			\midrule
			Conv &  &  & Equiv Layer\\
			Residual Layer &  &  &\\
			Residual Layer &  &  &\\
			Residual Layer & in\_chanels=32, out\_chanels=32 &  &\\
			BatchNorm &  & ReLU& Inv Layer\\
			Pooling & Global Average Pooling &  & \\
			Flatten &  &  & \\
			Linear &  out\_chanels=100 or 10 &  &\\
			\bottomrule
		\end{tabularx}
	\end{center}
\end{table}

\textbf{Model Invariance.} We note that Figure~\ref{fig:relaxing_invariance} includes \emph{model invariance} on the \emph{x-axis}. The metric we use to evaluate this invariance is simply adapted from Definition~\ref{def:robustness_metrics}: $\Inv_{\G}(f,x) \equiv s_{\Y}[f(\rho[g]x), f(x)]$, where $s_{\Y}: \Y^2 \rightarrow \R$ is defined as $s_{\Y}(y_1, y_2) = \nicefrac{y_1^{\intercal}y_2}{\| y_1 \| \cdot \| y_2 \|}$ for all $y_1 , y_2 \in \Y$. We note that this cos-similarity similarity metric is sensible in a classification setting since $s_{\Y}(y_1, y_2) = 1$ implies $y_1 = \alpha \cdot y_2$ for some $\alpha \in \R^+$. Since $\| y_1 \|_1 = \| y_2 \|_1 = 1$, this is equivalent to $y_1 = y_2$.

\textbf{Feature Importance.} We study gradient-based methods with Integrated Gradients~\cite{Sundararajan2017} and Gradient Shap~\cite{Lundberg2017} as well as perturbation-based methods with Feature Ablation, Permutation~\cite{Fisher2019} and Occlusion~\cite{Zeiler2014}. We also use DeepLift~\cite{Shrikumar2017}.  Whenever a baseline is required, we consider the trivial signal $\bar{x} = 0$ as a baseline. We note that this signal is trivially $\G$-invariant as $\rho[g] \bar{x} = 0$ for all $g \in \G$.   

\textbf{Example Importance.} We study loss-based methods with Influence Functions~\cite{Koh2017} and TracIn~\cite{Pruthi2020} as well as representation-based methods with SimplEx~\cite{Crabbe2021Simplex} and Representation Similarity. Since loss-based methods are expensive to compute, we differentiate the loss only with respect to the last layer of each model and for a subset of $\n{train} = 100$ training examples from $\Dtrain$. For representation-based methods, we use the output of both invariant and equivariant layers of the model as representation spaces. We denote e.g. SimplEx-Inv to indicate that SimplEx was used by using the output of a $\G$-invariant layer of the model as a representation space. Similarly, SimplEx-Equiv corresponds to SimplEx used with the output of a $\G$-equivariant layer as a representation space. 

\textbf{Concept-Based Explanations.} To the best of our knowledge, the only 2 post-hoc concept explanation methods in the literature are CAV~\cite{Kim2018} and CAR~\cite{Crabbe2022Concept}. We use these methods to probe the representations of our models through a set of $C=4$ concepts specific to each dataset. Again, we use both invariant and equivariant layers of the model for each method. 

\textbf{Concepts.} We use a set of $C = 4$ concepts for ECG, Mutagenicity and ModelNet40 ; $C=2$ for FashionMNIST and STL10; $C=3$ for CIFAR100. For the ECG dataset, we use concepts defined by cardiologists to identify abnormal heartbeats: \emph{Premature Ventricular}, \emph{Supraventricular}, \emph{Fusion  Beats} and \emph{Unknown}. We note that these concepts labels are directly available in the ECG dataset~\cite{Kachuee2018}. For the Mutagenicity dataset, we use the presence of known toxicophores~\cite{Kazius2005} in the molecule of interest: \emph{Nitroso} (presence of a \ch{N=O} in the molecule), \emph{Aliphatic Halide} (presence of a \ch{Cl, Br, I} in the molecule), \emph{Azo-type} (presence of a \ch{N=N} in the molecule) and \emph{Nitroso-type} (presence of a \ch{O^+ -N=O} in the molecule). To detect the presence concepts, each molecule in the dataset is checked by using \emph{NetworkX}~\cite{Hagberg2008}. For the ModelNet40 dataset, we use visual concepts whose presence can immediately be inferred from the class label: \emph{Foot} (whether the represented object has a foot), \emph{Container} (whether the represented object has the shape of a container), \emph{Parallelepiped} (whether the represented object has a parallelepiped shape) and \emph{Elongated} (whether the represented object is stretched in one direction). For all the image datasets, we use concepts that can be directly inferred from the class of each example. We use the \emph{Top} and \emph{Shoe} concepts for FashionMNIST; the \emph{Aquatic}, \emph{People} and \emph{Vehicle} concepts for CIFAR100; the \emph{Vehicle} and \emph{Wings} concept for STL10.

\textbf{Interpretability Methods Implementation.} For the feature importance methods, we use their \emph{Captum}~\cite{Kokhlikyan2020} implementation. All the other explanations methods are reimplemented based on their official repository and adapted to graph data. All the concept classifiers are implemented with \emph{scikit-learn}~\cite{scikit-learn}. We use training sets of size 200 for the ECG, FashionMNIST, CIFAR100, STL10 datasets and 500 for the Mutagenicity and ModelNet40 datasets. CAR classifiers are chosen to be SVCs with Gaussian RBF kernel and default hyperparameters. CAV classifiers are linear classifiers trained with stochastic gradient descent optimizer with a learning rate of $10^{-2}$ and a tolerance of $10^{-3}$ for $1,000$ epochs. In the case of CAR classifiers, we found it useful to apply PCA with 10 principal components to reduce the dimension of the latent representation before applying the classification step. Indeed, this does not significantly reduce the accuracy of the resulting classifiers while significantly reducing the runtime.

\textbf{Impossible Configurations.} We note that some combinations of interpretability methods and models are impossible. We summarize and explain the incompatibilities in Table~\ref{tab:incomp}. Further, we do not use perturbation-based feature importance methods with the CIFAR100 and STL10 ResNets, since those methods require $\mathcal{O}[\dim(\sigspace)]$ model calls per example, which is prohibitively expensive for large vision models.

\begin{table}[h]
	\centering
	\caption{Not all interpretability method can be used in all settings. We detail the exceptions here.}
	\label{tab:incomp}
	\vskip 0in
	\begin{center}
			\begin{tabularx}{\textwidth}{llX}
				\toprule
				\textbf{Method} &\textbf{Incompatible with} &  \textbf{Reason}  \\
				\midrule
				Feature Permutation & Mutagenicity & Heterogeneous graphs have different number of features.\\
				Feature Occlusion & Mutagenicity, ModelNet40 & Specific to CNNs. \\
				SimplEx-Equiv & Mutagenicity & Heterogeneous graphs lead to equivariant representations with different dimensions.\\
				Rep. Similar-Equiv & Mutagenicity & Heterogeneous graphs lead to equivariant representations with different dimensions.\\
				CAV-Equiv & Mutagenicity & Heterogeneous graphs lead to equivariant representations with different dimensions.\\
				CAR-Equiv & Mutagenicity & Heterogeneous graphs lead to equivariant representations with different dimensions.\\
				\bottomrule
			\end{tabularx}
	\end{center}
	\vskip -0.1in
\end{table}

\textbf{Explanation Similarity.} In Definition~\ref{def:robustness_metrics}, we also chose the cos-similarity metric $ s_{\E}[a, b]$ for two real-valued explanations $a , b \in \R^{d_E}$. This choice might seem arbitrary at first glance. By looking more closely, we notice that $s_{\E}[a, b] = 1$ implies $a = \alpha \cdot b$ for some $\alpha \in \R^+$. For all type of explanation that we consider in this paper, $a$ and $b$ will typically be vectors that aggregate various important scores (e.g. for each feature or training example). In practice, the scale of these importance scores does not matter. Indeed, if $a = \alpha \cdot b$, then the most important components will be the same for both $a$ and $b$. Furthermore, the relative importance between any pair of component will be identical for both $a$ and $b$. Therefore, we can consider that $s_{\E}[a, b] = 1$ implies that both explanations are equivalent to each other $a \sim b$.

	\section{Comparison with Sensitivity} \label{appendix:sensitivity}
	In this appendix, we compare our robustness metric with the sensitivity metric introduced by~\cite{Yeh2019}. This metric studies the robustness of an explanation with respect to a small perturbation in the input features. It is defined as follows:

\begin{align*}
	\Sens(e,x) = &\max_{x' \in \sigspace} \| e(x) - e(x') \|_{2}\\
	&\text{s.t. }\| x' - x \|_{\infty} \leq \epsilon,	
\end{align*}

where $\| \cdot \|_{\infty}$ denotes the $l_{\infty}$ norm,  $\| \cdot \|_{2}$ denotes the $l_{2}$ norm and $\epsilon \in \R^+$ is a small positive constant that is fixed to $\epsilon = .02$ in the default Captum implementation of the metric. The rationale behind this metric is the following: a small perturbation of the input should have little impact on the model's prediction and, hence, little impact on the resulting explanation as well. For this reason, one typically expects a low sensitivity $\Sens(e,x)$ for an explanation that is robust to small input shifts. We simply note that the previous reasoning is debatable due to the existence of adversarial perturbations that are small in norm but that have large effect on the model's prediction~\cite{Szegedy2014}.

Like our invariance and equivariance robustness metrics, $\Sens$ measures how the explanation is affected by a transformation of the input. Therefore, a natural question arises: is there a difference in practice between our robustness metrics and the sensitivity metric? To answer this question, we consider the setup of Section~\ref{sec:experiments} with the ECG dataset and the Augmented-CNN. We study feature importance methods $e$, for which we evaluate $\Sens(e,x)$ and $\Equiv_{\G}(e,x)$ for $\n{test} = 1,000$ test examples $x \in \Dtest$. We report the results in Figure~\ref{fig:sensitivity}.

\begin{figure}[h]
	\centering
	\includegraphics[width=.7\linewidth]{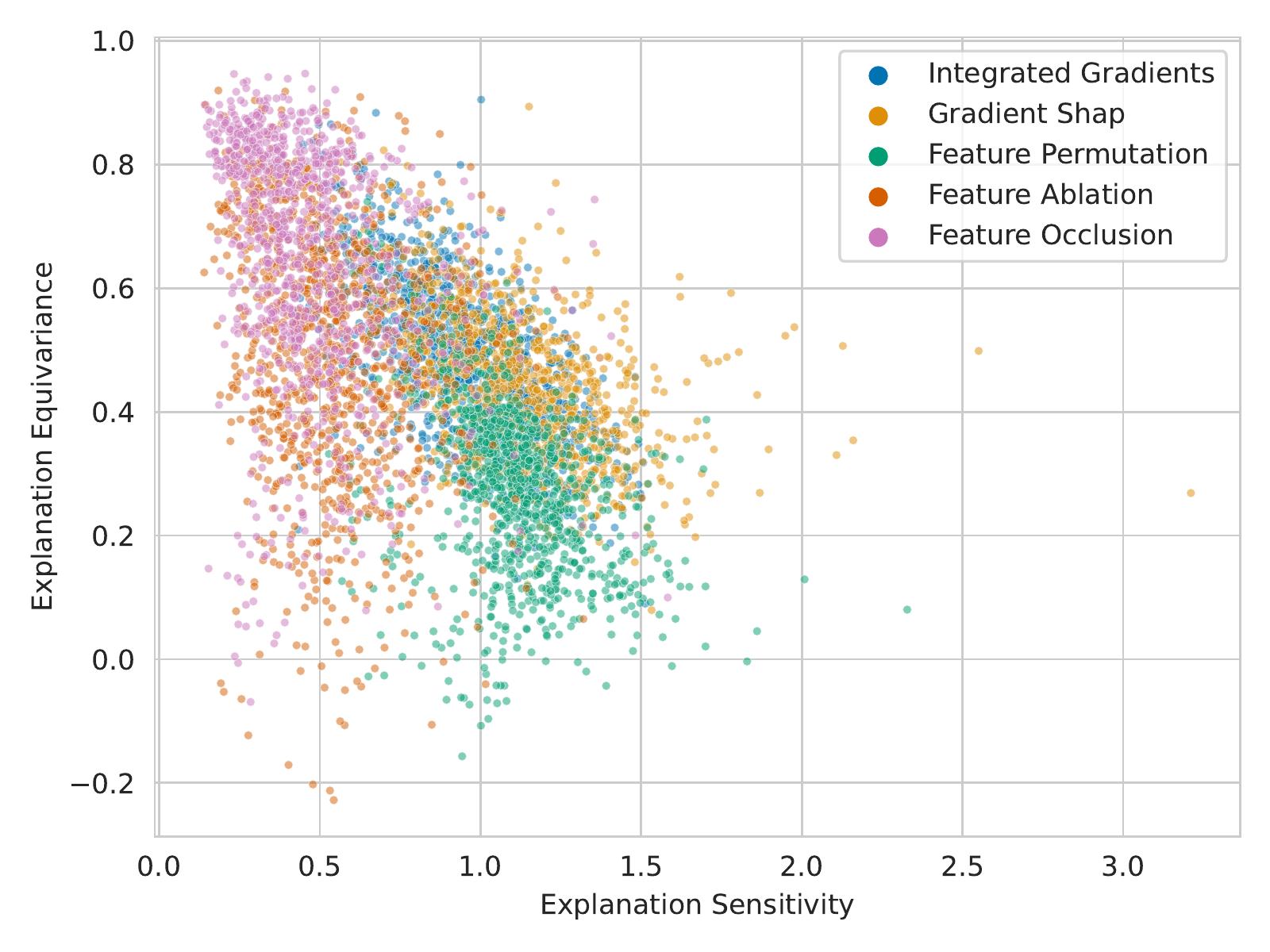}
	\caption{Comparison between the sensitivity metric $\Sens(e,x)$ and our equivariance metric $\Equiv_{\G}(e,x)$. Each point represents a test example $x \in \Dtest$ extracted from the ECG dataset explained by a feature importance method $e$. The downward trend between the two metrics is weak.}
	\label{fig:sensitivity}
\end{figure}

As we can observe on Figure~\ref{fig:sensitivity}, there exists a weak downward trend between the two metrics. By evaluating the Pearson correlation coefficient between the two metrics for each method, we found $r = -.62$ for Integrated Gradients, $r= -.48$ for Gradient Shap, $r = -.38$ for Feature Permutation, $r=-.23$ for Feature Ablation and $r = -.31$ for Feature Occlusion.  This indicates that examples with higher sensitivity tend to be associated with lower equivariance. That being said, the correlation between the two metrics is too weak to suggest any redundancy between them. We deduce that our robustness metrics should be use to complement rather than replace the sensitivity metric. In practice, we believe that many forms of explanation robustness deserve to be investigated. 
	
	\section{Other Invariant Architectures \label{appendix:other_inv}}
	In Section~\ref{sec:experiments}, we have illustrated the flexibility of our robustness formalism by applying it to various architecture. However, geometric deep learning extends well-beyond the examples covered in this paper and an exhaustive treatment of the field is beyond our scope. To suggest future extensions of our work beyond the architectures examined in this paper, we provide in Table~\ref{tab:other_architectures} other architectures that are invariant or equivariant with respect to other symmetry groups. Our formalism straightforwardly applies  to most of these examples. For some others (like Spherical CNNs), an extension to infinite groups would be required.  We leave this extension for future work.

\begin{table}[h]
	\caption{Other invariant architectures. Table partially adapted from~\cite{Bronstein2021}.}
	\label{tab:other_architectures}
	\begin{center}
		\begin{tabular}{lll}
			\toprule
			\textbf{Architecture} &  \textbf{Symmetry Group $\G$} & \textbf{Reference(s)} \\
			\midrule
			G-CNN & Any finite group & \cite{Cohen2016} \\
			Transformer & Permutation $S(n)$ & \cite{Joshi2020}\\
			LSTM & Time Warping & \cite{Tallec2018} \\
			Spherical CNN & Rotation $SO(3)$ & \cite{Cohen2018} \\
			Mesh CNN &  Gauge Symmetry $SO(2)$ & \cite{Hanocka2019} \\
			$E(n)$-GNN & Euclidean Group $E(n)$ & \cite{Satorras2021, Batzner2022}\\
			\bottomrule
		\end{tabular}
	\end{center}
\end{table}
	
	\section{Examples of Non-robust Explanations} \label{appendix:examples_fail}
	In this appendix, we illustrate the importance of our notion of robustness by presenting failure modes. Since images are the easiest way to visualize, we focus our attention to the FashionMNIST and STL10 datasets. In \cref{fig:fail_fashionmnist,fig:fail_stl10}, we show non-robust saliency maps obtained with Gradient Shap and DeepLift. As we can observe, the model prediction is largely unaffected by applying a symmetry ($s_{\E} \left[ f\left(\rho[g]  x \right) , f(x) \right] \approx 1$ in all cases), while the saliency map look qualitatively different. We have included a Jupyter Notebook in our code to easily visualize other failure modes. 

\begin{figure}[h]
	\centering
	\includegraphics[width=\linewidth]{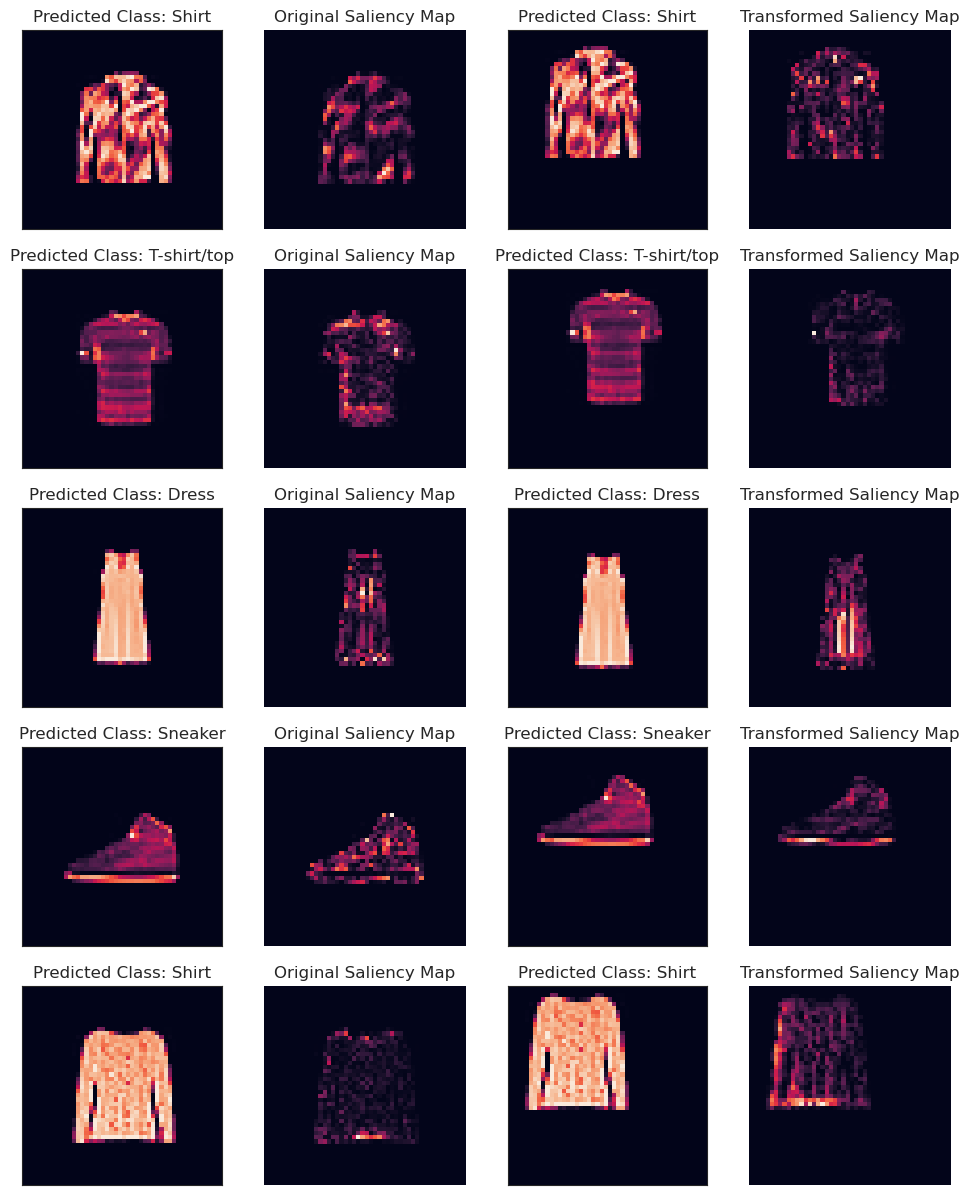}
	\caption{Examples of non-robust explanations obtained with Gradient Shap on the FashionMNIST dataset. The corresponding cosine similarity $s_{\E} \left[ e\left(\rho[g]  x \right) , \rho[g] e(x) \right]$ between the saliency maps are respectively 0.06, 0.08, 0.18, 0.18 and 0.21}
	\label{fig:fail_fashionmnist}
\end{figure}

\begin{figure}[h]
	\centering
	\includegraphics[width=\linewidth]{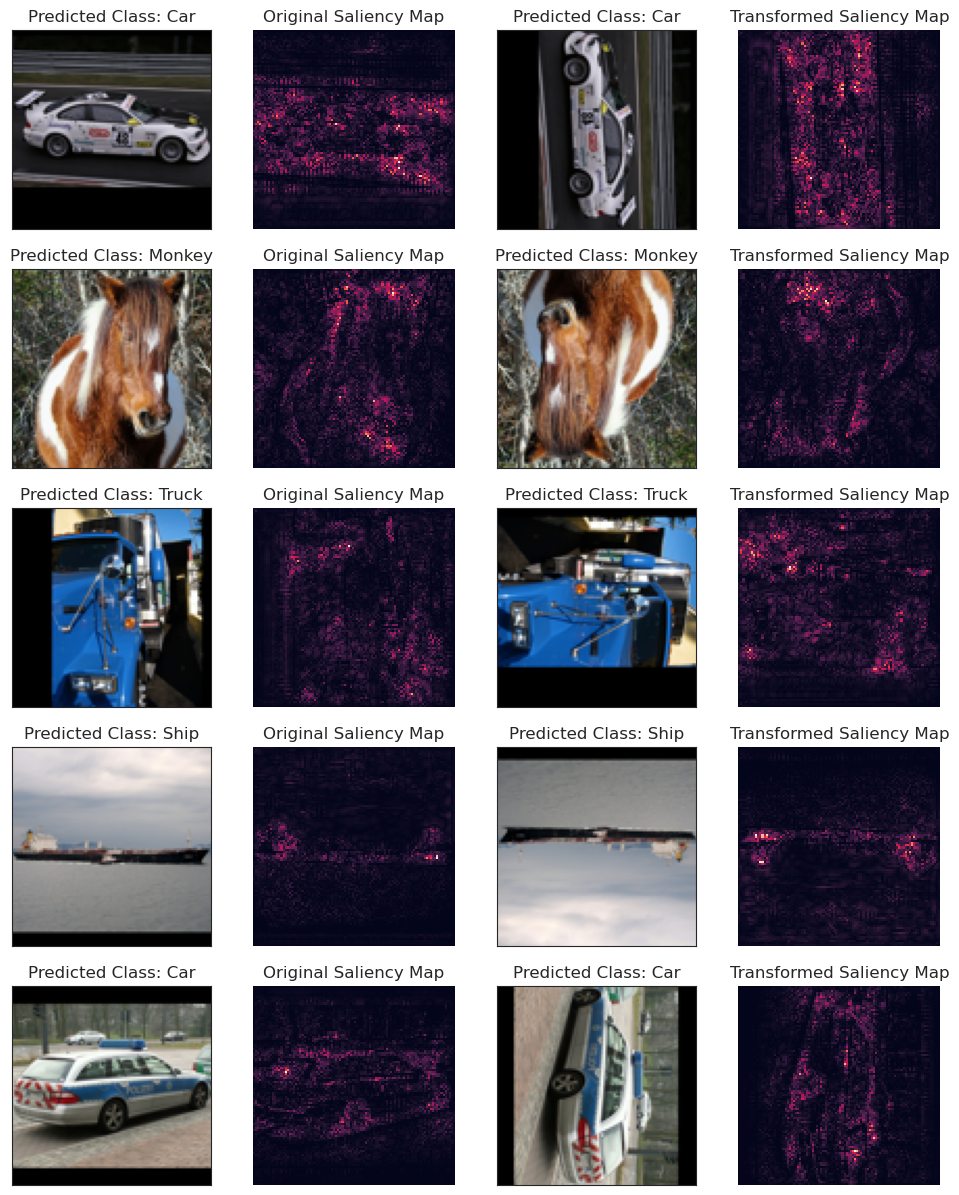}
	\caption{Examples of non-robust explanations obtained with DeepLift on the STL10 dataset. The corresponding cosine similarity $s_{\E} \left[ e\left(\rho[g]  x \right) , \rho[g] e(x) \right]$ between the saliency maps are respectively 0.66, 0.70, 0.70, 0.71 and 0.72}
	\label{fig:fail_stl10}
\end{figure}
	
	\section{Effect of Distribution Shift} \label{appendix:distribution_shift}
	In this appendix, we study the impact of evaluating out robustness metrics on samples from a distribution that differs from model's training distribution. To that end, we shall  use the CINIC-10 dataset~\cite{Darlow2018cinic}, which contains ImageNet images that have the same classes as CIFAR-10 classes (and, hence, the same classes as the STL-10 dataset). In this way, we are able to reuse the ResNets  trained on STL-10 for evaluation on the CINIC-10 dataset. The only requirements is to resize the CINIC-10 images so that they match the resolution of the STL-10 images ( $96 \times 96$ pixels). We evaluate the robustness metrics with respect to the dihedral group $\dihedral_4$
for a few feature importance and example importance methods in \cref{tab:equiv_distrubtion_shift,tab:inv_distrubtion_shift}. In both cases, we find that these metrics are consistent with their STL-10 counterpart in \cref{subfig:feature_importance_robustness,subfig:example_importance_robustness}.

\begin{table}[h]
	\caption{Equivariance of feature importance methods computed on CINIC-10.}
	\label{tab:equiv_distrubtion_shift}
	\begin{center}
		\begin{tabular}{ll}
			\toprule
			\textbf{Method $e$} &  \textbf{Equivariance $\mathbb{E}_{x \sim \mathcal{D}_{\mathrm{CINIC10}}} \Equiv_{\dihedral_4}[e, x])$} \\
			\midrule
			Integrated Gradients & .87 \\
			DeepLift & .85 \\
			Gradient Shap & .81 \\
			\bottomrule
		\end{tabular}
	\end{center}
\end{table}

\begin{table}[h]
	\caption{Invariance of example importance methods computed on CINIC-10.}
	\label{tab:inv_distrubtion_shift}
	\begin{center}
		\begin{tabular}{ll}
			\toprule
			\textbf{Method $e$} &  \textbf{Invariance $\mathbb{E}_{x \sim \mathcal{D}_{\mathrm{CINIC10}}} \Inv_{\dihedral_4}[e, x])$} \\
			\midrule
			TracIN & .99 \\
			Influence Functions & .99 \\
			SimplEx Equiv & .83 \\
			SimplEx Inv & .87 \\
			Representation Similarity Equiv & .64 \\
			Representation Similarity Inv & .98 \\
			\bottomrule
		\end{tabular}
	\end{center}
\end{table}

\end{document}